\newcommand{\cmark}{\ding{51}}%
\newcommand{\xmark}{\ding{55}}%
\ifdraft{\linenumbers}{}
\def\BState{\State\hskip-\ALG@thistlm}
\newcommand{\R}{\mathbb{R}}
\newcommand{\thresh}{\left\lceil \frac{m^2}{2n}\right\rceil}
\newcommand{\cS}{\mathcal{S}}
\newcommand{\E}{\mathbb{E}}
\renewcommand{\P}{\mathbb{P}}
\newcommand{\M}{\mathcal{M}}
\newcommand{\cp}{12}
\newcommand{\rd}{\color{red}}
\newcommand{\bk}{\color{black}}
\newcommand{\diag}{\text{diag}}
\newcommand{\round}{\text{round}}
\def\pace{{\sf PACE}}
\def\wpace{{\sf w-PACE}}
\def\gale{{\sf GALE}}
\def\match{{\sf Match}}
\def\dgcluster{{\sf DGCluster}}
\def\cluster{{\sf Cluster}}
\def\merge{{\sf Merge}}
\def\naivecluster{{\sf NaiveCluster}}
\def\rpk{{\sf RPKmeans}}
\def\fff#1{&{{\pageref{#1}}}\cr}
\def\hfff#1{\label{#1}}
\newtheorem{prop}{Proposition}[section]
\theoremstyle{plain}
\newtheorem{defn}{Definition}[section]
\theoremstyle{plain}
\newtheorem{theorem}{Theorem}[section]
\theoremstyle{plain}
\newtheorem{lemma}{Lemma}[section]
\theoremstyle{plain}
\theoremstyle{plain}
\newtheorem{cor}{Corollary}[section]
\theoremstyle{plain}
\theoremstyle{plain}
\newtheorem{remark}{Remark}[section]
\theoremstyle{plain}
\title{\textbf{Two provably consistent divide and conquer clustering algorithms for large networks}}
\author[1]{Soumendu Sundar Mukherjee}
\author[2]{Purnamrita Sarkar}
\author[1]{Peter J. Bickel}
\affil[1]{Department of Statistics, University of California, Berkeley}
\affil[2]{Department of Statistics and Data Sciences, University of Texas, Austin}
\begin{document}
\maketitle
\begin{abstract}
	In this article, we advance divide-and-conquer strategies for solving the community detection problem in networks. We propose two algorithms which perform clustering on a number of small subgraphs and finally patches the results into a single clustering. The main advantage of these algorithms is that they bring down significantly the computational cost of traditional algorithms, including spectral clustering, semi-definite programs, modularity based methods, likelihood based methods etc., without losing on accuracy and even improving accuracy at times. These algorithms are also, by nature, parallelizable. Thus, exploiting the facts that most traditional algorithms are accurate and the corresponding optimization problems are much simpler in small problems, our divide-and-conquer methods provide an omnibus recipe for scaling traditional algorithms up to large networks. We prove consistency of these algorithms under various subgraph selection procedures and perform extensive simulations and real-data analysis to understand the advantages of the divide-and-conquer approach in various settings.
 \end{abstract}

\tableofcontents

\section{Glossary of notation}
\label{sec:glossary}
For the convenience of the reader, we collect here some of the more frequently used notations used in the paper, and provide a summarizing phrase for each, as well as the page number at which the notation first appears.

\bigskip
\def\qq{&}

\begin{center}
\halign{
#\quad\hfill&#\quad\hfill&\quad\hfill#\cr
%%%%%%%%%%%%%%
$T$ \qq number of subgraphs used \fff{pace}
%%%%%%%%%%%%%%
$S, S_l$ \qq a randomly chosen subgraph \fff{pace}
%%%%%%%%%%%%%%
$w_{l,i,j}$ \qq weights used in \wpace{} \fff{pace}
%%%%%%%%%%%%%%
$N_{ij}$ \qq the number of times $i$ and $j$ appear in all subgraphs $S_1, \ldots, S_T$ \fff{pace}
%%%%%%%%%%%%%%
$\tau$ \qq  threshold for $N_{ij}$, a tuning parameter in \pace{} \fff{pace}
%%%%%%%%%%%%%%
$\hat{C}$ \qq estimated clustering matrix returned by \pace{} \fff{pace}
%%%%%%%%%%%%%%
$\hat{C}_{proj}$ \qq randomly projected version of $\hat{C}$, to be used for recovering an estimate of $Z$ \fff{cproj}
%%%%%%%%%%%%%%
$M$ \qq confusion matrix between two clusterings, used by \match{} \fff{match}
%%%%%%%%%%%%%%
$\hat{Z}_\ell$\qq cluster membership matrix of subgraph $S_\ell$ before alignment \fff{gale_defn}
%%%%%%%%%%%%%%
$\hat{Z}^{(\ell)}$\qq cluster membership matrix of subgraph $S_\ell$ after alignment \fff{gale_defn}
%%%%%%%%%%%%%%
$n_k$\qq number of nodes from cluster $k$ \fff{cluster_size}
%%%%%%%%%%%%%%
$\pi_k$\qq $n_k/n$, i.e. proportion of nodes from cluster $k$ \fff{cluster_size}
%%%%%%%%%%%%%%
$\pi_{\min}$\qq $\min_k \pi_k$, similarly $\pi_{\max}$ \fff{cluster_size}
%%%%%%%%%%%%%%
$\delta_c(\sigma, \sigma')$ \qq normalized mismatch between two clusterings $\sigma$, $\sigma'$ ($\sigma,\sigma' : \{1, \ldots, n\} \rightarrow \{1, \ldots, K\}$) \fff{misclustering_def}
%%%%%%%%%%%%%%
$\delta(Z, Z')$ \qq normalized mismatch between two clusterings $Z$, $Z'$ ($Z, Z' \in \{0, 1\}^{n \times K}$)\fff{misclustering_def}
%%%%%%%%%%%%%%
$\tilde{\delta}(C, C')$ \qq normalized between two clustering matrices $C$, $C'$ \fff{misclustering_def}
%%%%%%%%%%%%%%
%$\Pi$ \qq a $K\times K$ permutation matrix \fff{gale_results}
%%%%%%%%%%%%%%
$\M(Z', Z)$ \qq $n\delta(Z, Z')$ \fff{gale_results}
%%%%%%%%%%%%%%
$y^{(\ell)}_i$\qq $1$ if $i\in S_\ell$ and 0 otherwise \fff{gale_results}
%%%%%%%%%%%%%%
$\boldsymbol{\pi}$\qq the vector $(\pi_1,\dots,\pi_K)$ \fff{cluster_vec}
%%%%%%%%%%%%%%
$n_k^{(S)}$\qq number of nodes from cluster $k$ in a subgraph $S$ \fff{cluster_size_subgraphs}
%%%%%%%%%%%%%%
$Y_{ab}$\qq $|S_a\cap S_b|$ \fff{intersection_size}
%%%%%%%%%%%%%%
}\end{center}

\section{Introduction}\label{sec:intro}
Community detection, also known as community extraction or network clustering, is a central problem in network inference. A wide variety of real world problems, ranging from finding protein protein complexes in gene networks~\citep{guruharsha2011protein} to studying the consequence of social cliques on adolescent behavioral patterns~\citep{hollingshead1949elmstown} depend on detecting and analyzing communities in networks. In most of these problems, one observes co-occurrences or interactions between pairs of entities, i.e. pairwise edges and possibly additional node attributes. The goal is to infer the hidden community memberships. 

Many of these real world networks are massive, and hence it is crucial to develop and analyze scalable algorithms for community detection. We will first talk about methodology that uses only the network connections for inference. These can be divided into roughly two types. The first type consists of  methods which are derived independently of any model assumption. These typically involve the formulation of global optimization problem; examples include normalized cuts~\citep{shi2000normalized}, Spectral Clustering~\citep{ng2002spectral}, etc. 

On the other end, Statisticians often devise techniques under model assumptions. The simplest statistical model for networks with communities is the Stochastic Blockmodel (SBM)~\citep{holland1983stochastic}. The key idea in a SBM is to enforce stochastic equivalence, i.e. two nodes in the same latent community have identical probabilities of connection to all nodes in the network. There are many extensions of SBM. The Degree Corrected Stochastic Blockmodel~\citep{karrer2011stochastic} allow one to model varied degrees in the same community, whereas a standard SBM does not. Mixed membership blockmodels~\citep{airoldi2009mixed} allow a node to belong to multiple communities, whereas in a SBM, a node can belong to exactly one cluster. 

For an SBM generating a network with $n$ nodes and $K$ communities, one has a hidden community/cluster membership matrix $Z\in \{0,1\}^{n\times K}$, where $Z_{ik} = 1$ if node $i$ is in community $k \in [K]$. Given these memberships, the link formation probabilities are given as
\[
	\P(A_{ij} = 1 \mid Z_{ik} = 1, Z_{jk'} = 1) = B_{kk'},
\]
where $B$ is a $K\times K$ symmetric parameter matrix of probabilities. The elements of $B$ may decay to zero as $n$ grows to infinity, to model sparse networks. 

Typically the goal is to estimate the latent memberships consistently. A method outputting an estimate $\hat{Z}$ is called strongly consistent if $\P(\hat{Z}=Z\Pi)\rightarrow 1$ for some $K\times K$ permutation matrix $\Pi$, as $n\rightarrow\infty$. A weaker notion of consistency is when the fraction of misclustered nodes goes to zero as $n$ goes to infinity. Typically most of the consistency results are derived 
where average degree of the network grows faster than the logarithm of $n$. This is is often called the semi-dense regime. When average degree is bounded, we are in the sparse regime. %While, in the semi-dense regime, many methods have been shown to be consistent, in the sparse regime, one can only hope for performing better than a random predictor.

There are a plethora of algorithms for community detection. These include likelihood-based methods~\citep{amini2013pseudo}, modularity based methods~\citep{snijers1997mcmc,newman2004finding,bickel2009nonparametric}, spectral methods~\citep{rohe2011spec}, semi-definite programming (SDP) based approaches~\citep{cai2015robust} etc. Among these, spectral methods are scalable since the main bottleneck is computing top $K$ eigenvectors of a large and often sparse matrix. While the theoretical guarantees of Spectral Clustering are typically proven in the semi-dense regime~\citep{mcsherry2001spectral,rohe2011spec,lei2015consistency}, a regularized version of it has been shown to perform better than a random predictor for sparse networks~\citep{le2015sparse}. Profile likelihood methods~\citep{bickel2009nonparametric} involve greedy search over all possible membership matrices, which makes them computationally expensive. Semidefinite programs are robust to outliers~\citep{cai2015robust} and are shown to be strongly consistent in the dense regime~\citep{amini2014semidefinite} and yield a small but non-vanishing error in the sparse regime~\citep{guedon2014community}. However, semi
definite programs are slow and typically only scale to thousands of nodes, not millions of nodes.

Methods like spectral clustering on geodesic distances~\citep{bhattacharyya2014community} are provably consistent in the semi-dense case, and can give a small error in sparse cases. However, it requires computing all pairs of shortest paths between all nodes, which can pose serious problems for both computation and storage for very large graphs.

Monte Carlo methods~\citep{snijers1997mcmc,nowicki2001block}, which are popular tools in Bayesian frameworks, are typically not scalable. More scalable alternatives such as variational methods~\citep{gopalan2013efficient} do not have provable guarantees for consistency, and often suffer from bad local minima.

So far we have discussed community detection methods which only look at the network connections, and not node attributes, which are often also available and may possess useful information on the community structure (see, e.g., \cite{newman2015structure}). There are extensions of the methods mentioned earlier which accommodate node attributes, e.g., modularity based~\citep{zhang2015community}, spectral~\citep{binkiewicz2014covariate}, SDP based~\citep{yan2016covariate}, etc. These methods come with theoretical guarantees and have good performance in moderately sized networks. While existing Bayesian methods~\citep{mcauley2012learning,newman2015structure,xu2012model} are more amenable to incorporating covariates in the inference procedure, they often are computationally expensive and lack rigorous theoretical guarantees. 

While the above mentioned array of algorithms are diverse and each has unique aspects, in order to scale them to very large datasets, one has to apply different computational tools tailored to different algorithmic settings. While stochastic variational updates may be suitable to scale Bayesian methods, pseudo likelihood methods are better optimized using row sums of edges inside different node blocks.
 
In this article, we propose a divide and conquer approach to community detection. The idea is to apply a community detection method on small subgraphs of a large graph, and somehow stitch the results together. If we could achieve this, we would be able to scale up any community detection method (which may involve covariates as well as the network structure) that is computationally feasible on small graphs, but is difficult to execute on very large networks. This would be especially useful for computationally expensive community detection methods (such as SDPs, modularity based methods, Bayesian methods etc.). Another possible advantage concerns variational likelihood methods (such as mean-field) with a large number (depending on $n$) of local parameters, which typically have an optimization landscape riddled with local minima. For smaller graphs there are less parameters to fit, and the optimization problem often becomes easier.

Clearly, the principal difficulty in doing this, is matching the possibly conflicting label assignments from different subgraphs (see Figure~\ref{FiG:conflict}(a) for an example). This immediately rules out a simple-minded averaging of estimates $\hat{Z}_S$ of cluster membership matrices $Z_S$, for various subgraphs $S$, as a viable stitching method.
\begin{figure}[!htbp]
\centering
$\begin{array}{cc}
\includegraphics[width=0.5\textwidth]{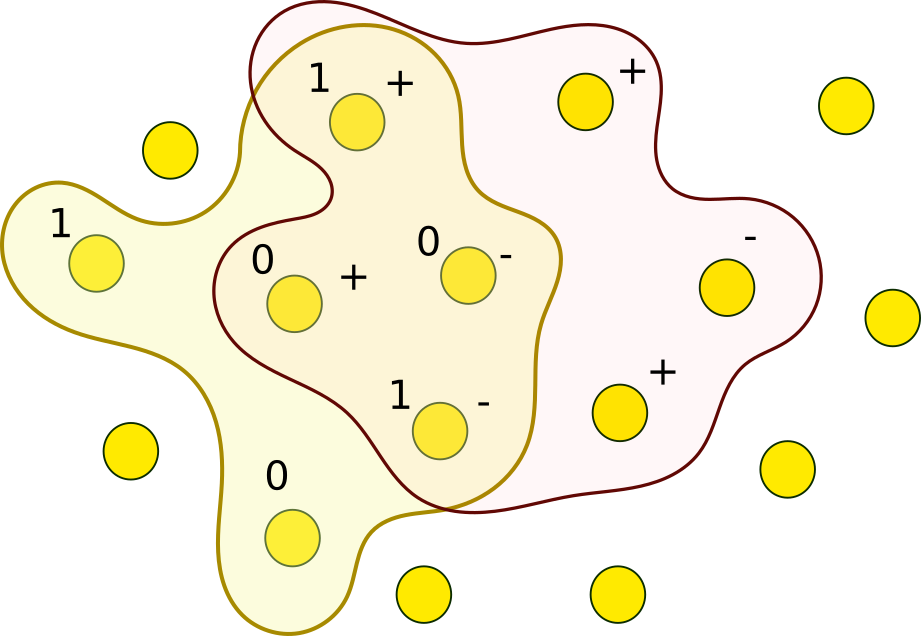}& \hskip30pt
\includegraphics[width = .4\textwidth]{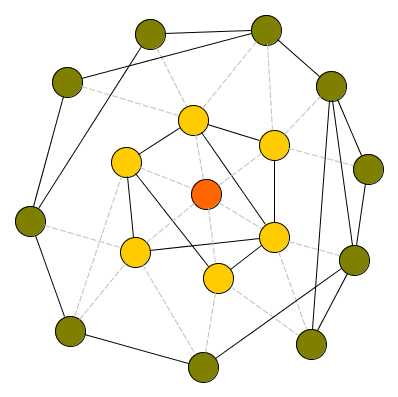}\\
(\text{a}) &  \hskip30pt (\text{b})
\end{array}$
\caption{(a) Conflicting label assignments on two subgraphs, on one labels are denoted by $0/1$, on the other by $+/-$, (b) onion neighborhood: the yellow vertices and the solid edges between them constitute the ego network of the root vertex colored red. The green and yellow vertices together with the solid edges between them constitute the $2$-hop onion neighborhood of the root}
\label{FiG:conflict}
\end{figure}

In this regard, we propose two different stitching algorithms.  The first is called Piecewise Averaged Community Estimation (\pace{}); in which we focus on estimating the clustering matrix $C=ZZ^\top$, which is labeling invariant, since the $(i,j)$-th element of this matrix being one simply means that nodes $i$ and $j$ belong to the same cluster, whereas the value zero means $i$ and $j$ belongs to two different clusters. Thus we first compute estimates of $Z_S Z_S^\top$ for various subgraphs $S$ and then average over these matrices to obtain an estimate $\hat{C}$ of $C$. Finally we apply some computationally cheap clustering algorithm like approximate $K$-means, \dgcluster{}\footnote[2]{This is a greedy algorithm we propose in this article, see Appendix~\ref{sec:app_greedy}.},  spectral clustering etc. on $\hat{C}$ to recover an estimate of $Z$.

We also propose another algorithm called Global Alignment of Local Estimates (\gale{}), where we first take a sequence of subgraphs, such that any two consecutive subgraphs on this sequence have a large intersection, and then traverse through this sequence, aligning the clustering based on a subgraph with an averaged clustering of the union of all its predecessor subgraphs in the sequence, which have already been aligned. The alignment is done via an algorithm called \match{} which identifies the right permutation to align two clusterings on two subgraphs by computing the confusion matrix of these two clusterings restricted to the intersection of the two subgraphs. Whereas a naive approach would entail searching through all $K\!$ permutations, \match{} finds the right permutation in $K\log K$ time. Once this alignment step is complete, we get an averaged clustering of the union of all subgraphs (which covers all the vertices). By design \gale{} works with estimates of cluster membership matrices $Z_S$ directly to output an estimate of $Z$, and thus, unlike \pace{}, avoids the extra overhead of recovering such an estimate from $\hat{C}$.   

The rest of the paper is organized as follows. In Section~\ref{sec:local_algo} we describe our algorithms. In Section~\ref{sec:main_results} we state our main results and some applications. Section~\ref{sec:simulations} contains simulations and real data experiments. In Section~\ref{sec:proofs} we provide proofs of our main results, while relegating some of the details to the Appendix, Section~\ref{sec:app_aux_results}. Finally, in Section~\ref{sec:discussions} we conclude with some discussions and directions for future work. 

%%%%%%%%%%%%%%%%%%%%%%%%%%%%%%%%%%%%%%%%%%%%%%%%%%%%%%%%
\section{Two divide and conquer algorithms}\label{sec:local_algo}
As we discussed in the introduction, the main issue with divide and conquer algorithms for clustering is that one has to somehow match up various potentially conflicting label assignments. In this vein we propose two algorithms to accomplish this task. Both algorithms first compute the clustering on small patches of a network; these patches can be induced subgraph of a random subsample of nodes, or neighborhoods. However, the stitching procedures are different.

%%%%%%%%%%%%%%%%%%%%%%%%%%%%%%%%%%%%%%%%%%%%%%%%%%%%%%%%
\subsection{\pace{}: an averaging algorithm}
Suppose $A$ is the adjacency matrix of a network with true cluster membership of its nodes being given by the $n\times K$ matrix $Z$ where there are $K$ clusters. Set $C = ZZ^\top$ to be the clustering matrix whose $(i,j)$-th entry is the indicator of whether nodes $i,j$ belong to the same cluster. Given $A$ we will perform a local clustering algorithm to obtain an estimate of $C$, from which an estimate $\hat{Z}$ of the cluster memberships may be reconstructed.
\begin{algorithm}[H]
		\caption{\pace{}: Piecewise Averaged Community Estimation \hfff{pace}}
	\label{mainalg}
	\begin{algorithmic}[1]
		\State {\bf Subgraph selection:} Fix a positive integer threshold $m_{\star}$ for minimum required subgraph size. Fix another positive integer $T$, that will be the number of subgraphs we will sample. Given $A$,  choose $T$ subsets $S_1,\ldots,S_T$ of the nodes with $|S_{\ell}| \ge m_{\star}$ by some procedure, e.g., select $m \ge m_{\star}$ nodes at random, or pick $h$-hop neighborhoods of vertices in $G$, or ego-neighborhood of vertices. By $A_{S_{\ell}}$ denote the adjacency matrix of the network induced by $S_{\ell}$.\vskip5pt
		\State {\bf Clustering on subgraphs:}
		Perform any of the standard clustering algorithms like Profile Likelihood (PL), Mean Field Likelihood (MFL), Spectral Clustering (SC), Semi-Definite Programming (SDP) etc. on each of these $T$ subgraphs which have size at least $m_{\star}$ to obtain estimated clustering matrices $\hat{C}^{(S_{\ell})} = \hat{C}^{(\ell)}$. For the rest of the subgraphs set $\hat{C}^{(\ell)} \equiv 0$.  Extend $\hat{C}^{(\ell)}$ to an $n \times n$ matrix by setting $\hat{C}^{(\ell)}_{ij} = 0$ if at least one of $i,j$ was not selected in $S_{\ell}$. Let's denote the resulting matrix again by $\hat{C}^{(\ell)}$.\vskip5pt  
		\State {\bf Patching up:} Let $y_{ij}^{(\ell)}$ denote the indicator of the event that both $i,j$ were selected in $S_{\ell}$. Set $N_{ij} = \sum_{\ell=1}^T y_{ij}^{(\ell)}$. Define the combined estimator $\hat{C} = \hat{C}_{\tau}$ by
		\begin{equation}\label{eq:pace}
		\hat{C}_{ij} = \hat{C}_{\tau, ij} = \frac{\mathbf{1}_{\{N_{ij} \ge \tau \}}\sum_{\ell = 1}^T y_{ij}^{(\ell)} \hat{C}^{(\ell)}_{ij}}{N_{ij}} = \frac{\mathbf{1}_{\{N_{ij} \ge \tau \}}\sum_{\ell = 1}^T\hat{C}^{(\ell)}_{ij}}{N_{ij}}.
		\end{equation}
		Here $1 \le \tau \le T$ is an integer tuning parameter.
		We will call $\hat{C}_{\tau}$ as Piecewise Averaged Community Estimator (also abbreviated as \pace{}).
	\end{algorithmic}
\end{algorithm}

The $\tau$ parameter in \pace{} seems to reduce variance in estimation quality as it discards information from less credible sources --- if a pair of nodes has appeared in only a few subgraphs, we do not trust what the patching has to say about them. Setting $\tau = \theta\E(N_{ij})$ seems to work well in practice (this choice is also justified by our theory).

A slight variant of Algorithm~\ref{mainalg} is where we allow subgraph and/or node-pair specific weights $w_{\ell, i, j}$ in the computation of the final estimate, i.e.
\begin{equation}\label{eq:wpace}
\hat{C}_{ij} = \hat{C}_{\tau, ij} = \frac{\mathbf{1}_{\{N_{ij} \ge \tau \}}\sum_{\ell = 1}^T w_{\ell, i ,j} y_{ij}^{(\ell)} \hat{C}^{(\ell)}_{ij}}{N_{ij}} = \frac{\mathbf{1}_{\{N_{ij} \ge \tau \}}\sum_{\ell = 1}^T w_{\ell, i, j}\hat{C}^{(\ell)}_{ij}}{N_{ij}},
\end{equation}
where $N_{ij}$ now equals $\sum_{\ell=1}^T w_{\ell, i, j} y_{ij}^{(\ell)}$. We may call this estimator \wpace{} standing for weighted-\pace{}. If the weights are all equal, \wpace{} becomes equivalent to ordinary \pace{}. There are natural nontrivial choices, including
\begin{itemize}
	\item[(i)] $w_{\ell, i, j} = |S_{\ell}|,$ which will place more weight to estimates based on large subgraphs,

	\item[(ii)] $w_{\ell, i, j} = \text{deg}_{S_{\ell}}(i) + \text{deg}_{S_{\ell}}(j)$, where $\text{deg}_{S}(u)$ denotes the degree of node $u$ in  subgraph $S$ (this will put more weight on pairs which have high degree in $S_{\ell}$).
\end{itemize}
The first prescription above is intimately related to the following sampling scheme for ordinary \pace{}: pick subgraphs with probability proportional to their sizes. For instance, in Section~\ref{sec:realdata} we analyze the political blog data of \cite{adamic2005political} where neighborhood subgraphs are chosen by selecting their roots with probability proportional to degree.

In real world applications, it might make more sense to choose these weights based on domain knowledge (for instance, it may be that certain subnetworks are known to be important). Another (minor) advantage of having weights is that when $T = 1$ and $|S_1| = n$, we have $N_{ij} = w_{1, i, j}$ and so if $w_{1, i, j} \ge \tau$, then
\[
\hat{C}_{ij} = \hat{C}_{ij}^{(1)},
\]
i.e. \wpace{} becomes the estimator based on the full graph. This is for example true with $w_{\ell, i, j} = |S_{\ell}|$, because $\tau$ is typically much smaller than $n$. However, ordinary \pace{} lacks this property unless $\tau = 1$, in fact, with $\tau > 1$, the estimate returned by \pace{} is identically $0$.  Anyway, in what follows, we will stick with ordinary \pace{} because of its simplicity.

Before we discuss how to reconstruct an estimate $\hat{Z}$ of $Z$ from $\hat{C}$, let us note that we may obtain a binary matrix $\hat{C}_{\eta}$ by thresholding $\hat{C}$ at some level $\eta$ (for example, $\eta = 1/2$):
\[
	\hat{C}_{\eta} := [\hat{C} > \eta].
\]
This thresholding does not change consistency properties (see Lemma~\ref{lem:consistency_thresholded_mx}). Looking at a plot of this matrix gives a good visual representation of the community structure. In what follows, we work with unthresholded $\hat{C}$.

\noindent\textbf{Reconstruction of $\hat{Z}$:} How do we actually reconstruct $\hat{Z}$ from $\hat{C}$? The key is to note that members of the same community have identical rows in $C$ and that, thanks to \pace{}, we have gotten hold of a consistent estimate of $C$. Thus we may use any clustering algorithm on the rows of $\hat{C}$ to recover the community memberships. Another option would be to run spectral clustering on the matrix $\hat{C}$ itself. However, as the rows of $\hat{C}$ are $n$-vectors, most clustering algorithms will typically have a running time of $O(n^3)$\footnote[2]{A word on asymptotic notation: in addition to the standard `$O$', `$\Omega$', `$\Theta$', `$o$' notations, and their probabilistic counterparts, we will use the following less standard alternatives to clean up the presentation at places: 
%(i) $A \preceq B$ to mean $A = O(B)$, (ii) $A \succeq B$ to mean $A = \Omega(B)$,
(i) $A \asymp B$ to mean $A = \Theta(B)$, and (ii) $A \gg B$ to mean $B = o(A)$. We will also sometimes use a `tilde' over the standard notations to hide polylogarithmic factors.} at best. Indeed, the main computational bottleneck of any distance based clustering algorithm in a high dimensional situation like the present one, is computing $d_{ij} = \|\hat{C}_{i\star} - \hat{C}_{j\star}\|$ which takes $O(n)$ bit operations. However, since we have gotten a good estimate of $C$, we can project the rows of $\hat{C}$ onto lower dimensions, without distorting the distances too much. The famous Johnson-Lindenstrauss Lemma for random projections says that by projecting onto $\Omega(\log n/\epsilon^2)$ dimensions, one can keep, with probability at least $1 - O(1/n)$, the distances between projected vectors within a factor of $(1 \pm \epsilon)$ of the true distances. Choosing $\epsilon$ as inverse polylog$(n)$ we need to project onto polylog$(n)$ dimensions and this would then readily bring the computational cost of any distance based algorithm down from $O(n^3)$ to $O(n^2 \text{polylog}(n))$. 

Following the discussion of the above paragraph, we first do a random projection of the rows of $\hat{C}$ onto $s\, (= \text{polylog}(n))$ dimensions and then apply a (distance-based) clustering algorithm.
\hfff{cproj}
\begin{algorithm}[H]
	\caption{Recovering $\hat{Z}$ from $\hat{C}$: random projection followed by distance based clustering}
	\label{alg:rec_Z_rproj}
	\begin{algorithmic}[1]
		\State Select a dimension $s$ for random projection. Let \cluster{}$(\cdot, K)$ be a clustering algorithm that operates on the rows of its first argument and outputs $K$ clusters.
		\State $\hat{C}_{proj} \gets \hat{C}R/\sqrt{s}$, where $R$ is a standard Gaussian matrix of dimensions $n\times s$. \vskip5pt
		\State $\hat{Z} \gets \cluster{}(\hat{C}_{proj}, K)$.
	\end{algorithmic}
\end{algorithm}

As for $\cluster{}(\cdot,K)$, we may use approximate $K$-means or any other distance based clustering algorithm, e.g., \dgcluster{}$(\hat{C}, \hat{C}_{proj}, K)$, a greedy algorithm presented in Appendix~\ref{sec:app_greedy} as Algorithm~\ref{alg:dgcluster}.

\subsection{\gale{}: a sequential algorithm}
First we introduce a simple algorithm for computing the best permutation to align labels of one clustering ($Z_1$) to another ($Z_2$) of the same set of nodes (with fixed ordering) in a set $S$. The idea is to first compute the confusion matrix between two clusterings. Note that if the two labelings each have low error with respect to some unknown true labeling, then the confusion matrix will be close to a diagonal matrix up to permutations. The following algorithm below essentially finds a best permutation to align one clustering to another. 
\hfff{match}
\begin{algorithm}[H]
	\caption{\match{}: An algorithm for aligning two clusterings of the same set $S$ of nodes. Input $Z_1,Z_2 \in \{0,1\}^{|S|\times K}$.}
	\label{alg:match}
	\begin{algorithmic}[1]
		\State Compute $K\times K$ confusion matrix $M = Z_1^\top Z_2$. Set $\Pi \gets 0_{K\times K}$.
		\While {there are no rows/columns left}
			\begin{itemize}
			\item[(a)] Find $i,j$, such that $M_{ij}=\|M\|_{\infty}$ (a tie can be broken arbitrarily).
			\item[(b)] Set $\Pi_{ij} = 1$.
			\item[(c)] Replace the $i$-{th} row and $j$-th columns in $M$ with $-1$.
			\end{itemize}
		\EndWhile
		\State Return the permutation matrix $\Pi$.	
	\end{algorithmic}
\end{algorithm}
\begin{remark}
	One can also compute the optimal permutation by searching through all $K!$ permutations of the labels and picking the one which gives smallest mismatch between the two; but \match{} brings the dependence on $K$ down from exponential to quadratic. Note that if one of the clusterings are poor, then \match{} may not retrieve the optimal permutation. However, our goal is to cluster many subgraphs using an algorithm which has good accuracy with high probability (but may be computationally intensive, e.g., Profile Likelihood or Semidefinite Programming) and then use the intersections between the subgraphs to align one to another. As we shall show later, as long as there are enough members from each community in $S$, the simple algorithm sketched above suffices to find an optimal permutation. %in this setting. 
	%We also want to point out that when there are fewer labels in one clustering, then that row or column will be all zeros. In our analysis we show that this happens with vanishing probability.
\end{remark}
Now we present our sequential algorithm which aligns labelings across different subgraphs. The idea is to first fix the indexing of the nodes; cluster the subgraphs (possibly with a parallel implementation) using some algorithm, and then align the clusterings along a sequence of subgraphs. To make things precise, we make the following definition.
\begin{defn}
	Let $\cS_{m,T}=(V,E)$ (vertex set $V$ and edge set $E$) denote a ``super-graph of subgraphs'' where each subgraph is a node, and two nodes are connected if the corresponding subgraphs have a substantial overlap. For random $m$-subgraphs, this threshold is $m_1 := \thresh$.
	
	Define a traversal through a spanning tree $\mathcal{T}$ of $\cS_{m,T}$ as a sequence $x_1, \dots, x_{J}$, $x_j \in [T]$, $T \le J \le 2T$, covering all the vertices, such that along the sequence $x_i$ and $x_{i+1}$ are adjacent, for $1 \le i \le J-1$ (i.e. a traversal is just a walk on $\mathcal{T}$ of length at most $2T - 1$ passing through each vertex at least once). 
	%If the spanning tree is a path, then $J=T$, whereas for a spanning tree thats a star graph, $J=2T-1$. For any two consecutive vertices $i_k,i_{k+1}$ on the traversal, either $(i_k,i_{k+1})\in E$ and $i_{k+1}$ has not appeared in the traversal before, or $i_{k+1}$ has been encountered on the traversal already.
\end{defn}

 After we construct a traversal, we travel through this traversal such that at any step, we align the current subgraph's labels using the \match{} algorithm (Algorithm~\ref{alg:match}) on its intersection with the union of the previously aligned subgraphs. At the end, all subgraph labellings are aligned to the labeling of the starting subgraph. Now we can simply take an average or a majority vote between these.
\begin{algorithm}[H]
	\caption{\gale{}: Global Alignment of Local Estimates. Input adjacency matrix $A$, parameters $m,T, \tau_i, i \in [T]$, a base algorithm $\mathcal{A}$ (e.g., PL, MFL, SC, SDP etc. )}
	\label{alg:seq}
	\begin{algorithmic}[1]
		\State {\bf Subgraph selection:} Given $A$,  choose $T$ subsets $S_1,\ldots,S_T$ of the nodes by some procedure, e.g., select them at random, or select $T$ random nodes and then pick their $h$-hop neighborhoods. By $A_{S_{\ell}}$ denote the adjacency matrix of the network induced by $S_{\ell}$.\vskip5pt
		\State {\bf Clustering on subgraphs:}
		Perform algorithm $\mathcal{A}$ on each of these $T$ subgraphs to obtain estimated cluster membership matrices \hfff{gale_defn} $\hat{Z}_{\ell} = \hat{Z}_{S_\ell}$. Extend $\hat{Z}_{\ell}$ to a $n \times K$ matrix by setting $(\hat{Z}_{\ell})_{jk} = 0$ for all $k \in [K]$ if $j\notin S_\ell$. \vskip5pt 
		
		\State {\bf Traversal of subgraphs: } Construct a traversal $S_{x_1},\dots,S_{x_J}$ through the $T$ subgraphs. 

		\State {\bf Initial estimate of $Z$: } $\hat{Z} \gets \hat{Z}_{x_1}$. Also set $\hat{Z}^{(x_1)} := \hat{Z}_{x_1}$.
		
		\State {\bf Sequential label aligning: } For subgraph $S_{x_i}$ on the traversal ($i = 1, \ldots, J$), if $x_i$ has not been visited yet,
		\begin{itemize}
			\item[(a)] Compute the overlap between the current subgraph with all subgraphs previously visited, i.e. let $S = S_{x_i} \cap \cup_{\ell=1}^{i-1}S_{x_\ell}$.

			\item[(b)] Compute the best permutation to match the clustering $\hat{Z}_{x_i}, \hat{Z}$ on this set $S$, i.e. compute $\hat{\Pi}_i = \match{}(\hat{Z}_{x_i}\big|_S,\hat{Z}\big|_S)$.

			\item[(c)] Permute the labels of the nodes of $S_{x_i}$ to get an aligned cluster membership matrix $\hat{Z}^{(x_i)}\gets \hat{Z}_{x_i}\hat{\Pi}$.

			\item[(d)] Update $\hat{Z}_{jk} \gets \frac{\sum_{\ell \in \{x_1, \ldots, x_i\}} \hat{Z}^{(x_\ell)}_{jk} \mathbf{1}_{\{j\in S_{x_\ell}\}}}{\sum_{\ell \in \{x_1, \ldots, x_i\}} \mathbf{1}_{\{j\in S_{x_\ell}\}}} \mathbf{1}_{\{\sum_{\ell \in \{x_1, \ldots, x_i\}} \mathbf{1}_{\{j\in S_{x_\ell}\}} > \tau_{i}\}},$ for some threshold $\tau_i.$ 

			\item[(e)] Mark $S_{x_i}$ as visited.
		\end{itemize}	 		
	\end{algorithmic}
\end{algorithm}

\noindent
\textbf{Implementation details:} Constructing a traversal of the subgraphs can be done using a depth first search of the super-graph $\cS_{m,T}$ of subgraphs. For our implementation, we start with a large enough subgraph (the parent), pick another subgraph that has a large overlap with it (the child), align it and note that this subgraph has been visited. Now recursively find another unvisited child of the current subgraph, and so on. It is possible that a particular path did not cover all vertices, and hence it is ideal to estimate clusterings with multiple traversals with different starting subgraphs and then align all these clusterings, and take an average. This is what we do for real networks. We also note that at any step, if we find a poorly clustered subgraph, then this can give a bad permutation which may deteriorate the quality of aligning the subsequent subgraphs on the path. In order to avoid this we use a self validation routine. Let $S$ be intersection of current subgraph with union of the previously visited subgraphs. After aligning the current subgraph's clustering, we compute the classification accuracy of the current labeling of $S$ with the previous labeling of $S$. If this accuracy is large enough, we use this subgraph, and if not we move to the next subgraph on the path. For implementation, we use a threshold of $0.55$.

\vskip10pt
\noindent \textbf{Computational time and storage:}
The main computational bottleneck in \gale{} is in building a traversal through the random subgraphs. Let $\eta_{m,T}$ be the time for computing the clusterings for $T$ subgraphs in parallel. A naive implementation would require computing intersections between all $\binom{T}{2}$ pairs of $m$-subsets. As we will show in our theoretical analysis, we take $m=\omega(\sqrt{n/\pi_{\min}})$, \hfff{cluster_size} where $\pi_{\min}:=\min_k \pi_k$ (here $\pi_k := n_k/n$, where $n_k$ is the size of the $k$-th cluster) and $T=\tilde{\Omega}(n/m)$. Taking $\pi_{\min}=\Theta(1)$, the computation of intersections takes $O(T^2 m)=\tilde{O}(n^{3/2})$ time. Further, a naive comparison for computing  subsets similar or close to a given one would require $T\log T$ time for each subset leading to $T^2\log T=\tilde{O}(n)$ computation. However, for building a traversal one only needs access to subsets with large overlap with a given subset, which is a classic example of nearest neighbor search in Computer Science. 

One such method is the widely used and theoretically analyzed technique of Locality Sensitive Hashing (LSH). A hash function maps any data object of an arbitrary size to another object of a fixed size. In our case, we map the characteristic vector of a subset to a number. The idea of LSH is to compute hash functions for two subsets $A$ and $B$ such that the two functions are the same with high probability if $A$ and $B$ are ``close''.  In fact, the amount of overlap normalized by $m$ is simply the cosine similarity between the characteristic vectors $\chi(A)$ and $\chi(B)$ of the two subsets, for which efficient hashing schemes $h:\{0,1\}^n\rightarrow \{0,1\}$ using random projections exist~\cite{Charikar:2002}, with 
\[
\P(h(A)=h(B))=1-\arccos(\chi(A)^T\chi(B)/m)/\pi.
\]
For LSH schemes, one needs to build $L:=T^\rho$ hash tables, for some $\rho<1$, that governs the approximation quality. In each hash table, a ``bucket'' corresponding to an index stores all subsets which have been hashed to this index. For any query point, one evaluates $O(L)$ hash functions and examines $O(L)$ subsets hashed to those buckets in the respective hash tables. Now for these subsets, the distance is computed exactly. The preprocessing time is $O(T^{1+\rho})=\tilde{O}(n^{\frac{1+\rho}{2}})$, with storage being $O(T^{1+\rho}+Tm)=\tilde{O}(n)$, and total query time being $O(T^{1+\rho}m)=\tilde{O}(n^{1+\rho/2})$. This brings down the running time added to the algorithm specific $\eta_{m,T}$ from sub-quadratic time $\tilde{O}(n^{3/2})$ to nearly linear time, i.e. $\tilde{O}(n^{1+\rho/2})$.

Thus, for other nearly linear time clustering algorithms \gale{} may not lead to computational savings. However, for algorithms like Profile Likelihood or SDP which are well known to be computationally intensive, \gale{} can lead to a significant computational saving without requiring a lot of storage.

\subsection{Remarks on sampling schemes}
With \pace{} we have mainly used random $m$-subgraphs, $h$-hop neighborhoods, and onion neighborhoods, but many other subgraph sampling schemes are possible. For instance, choosing roots of hop neighborhoods with probability proportional to degree, or sampling roots from high degree nodes (we have done this in our analysis of the political blog data, in Section~\ref{sec:realdata}). As discussed earlier, this weighted sampling scheme is related to \wpace{}. A natural question regarding $h$-hop neighborhoods is how many hops to use. While we do not have a theory for this yet, because of ``small world phenomenon'' we expect not to need many hops; typically in moderately sparse networks, $2$-$3$ hops should be enough. Although, an adaptive procedure (e.g., cross-validation type) for choosing $h$ would be welcome. Also, since neighborhood size increases exponentially with hop size, an alternative to choosing full hop-neighborhoods is to choose a smaller hop-neighborhood and then add some (but not all) randomly chosen neighbors of the already chosen vertices. Other possibilities include sampling a certain proportion of edges at random, and the consider the subgraph induced by the participating nodes. We leave all these possibilities for future work. 

We have analyzed \gale{} under the random sampling scheme. For any other scheme, one will have to understand the behavior of the intersection of two samples or neighborhoods. For example, if one takes $h$-hop neighborhoods, for sparse graphs, each neighborhood predominantly has nodes from mainly one cluster. Hence \gale{} often suffers with this scheme. We show this empirically in Section~\ref{sec:simulations}, where \gale{}'s accuracy is much improved under a random $m$-subgraph sampling scheme.

\subsection{Beyond community detection}
The ideas behind \pace{} and \gale{} are not restricted to community detection and can be modified for application in other interesting problems, including general clustering problems, co-clustering problems (\cite{rohe2016co}), mixed membership models, among others (these will be discussed in an upcoming article). In fact, \citet{JMLR:v16:mackey15a} took a similar divide and conquer approach for matrix completion.

%%%%%%%%%%%%%%%%%%%%%%%%%%%%%%%%%%%%%%%%%%%%%%%%%%%%%%%%
\section{Main results}\label{sec:main_results}
In this section we will state and discuss our main results on \pace{} and \gale{}, along with a few applications.
\subsection{Results on \pace{}}\label{sec:pace_analysis}
Let $\sigma$ and $\sigma'$ be two clusterings (of $n$ objects into $K$ clusters), usually their discrepancy is measured by \hfff{misclustering_def}
\[
	\delta_c(\sigma,\sigma') = \inf_{\xi \in S_K} \frac{1}{n}\sum_{i=1}^n \mathbf{1}_{\{ \sigma(i) \ne \xi(\sigma'(i))\}} = \frac{1}{n} \inf_{\xi \in S_K} \|\xi(\sigma)  - \sigma'\|_0,
\]
where $S_K$ is the permutation group on $[K]$. If $Z,Z'$ are the corresponding $n\times K$ binary matrices, then a related measure of discrepancy between the two clusterings is $\delta(Z,Z') = \inf_{Q \text{ perm.}} \frac{1}{n}\|ZQ - Z'\|_0$. It is easy to see that $\delta(Z,Z') = 2 \delta_c(\sigma,\sigma')$. (To elaborate, let $Q_\xi$ be the permutation matrix corresponding to the permutation $\xi$, i.e. $Q_{ij} = \mathbf{1}_{\{ \xi(i) = j)\}}$. Then $\xi(\sigma(i)  \ne \sigma'(i)$, if and only if $(ZQ_{\xi})_{i\star} \ne Z'_{i\star},$ i.e. $\|(ZQ_{\xi})_{i\star} - Z'_{i\star}\|_0 = 2$.) For our purposes, however, a more useful measure of discrepancy would be the normalized Frobenius squared distance between the corresponding clustering matrices $C = ZZ^\top$ and $C' = Z'Z'^\top$, i.e.
\[
	\tilde{\delta}(C,C') = \frac{1}{n^2} \|C - C'\|_F^2.
\]
Now we compare these two notions of discrepancies.
\begin{prop}\label{prop:comparison_ineq}
We have $\tilde{\delta}(C,C') \le 4\delta(Z,Z') = 8 \delta_c(\sigma,\sigma')$
\end{prop}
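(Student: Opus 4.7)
The rightmost equality $4\delta(Z,Z') = 8\delta_c(\sigma,\sigma')$ is just the identity $\delta(Z,Z') = 2\delta_c(\sigma,\sigma')$ stated immediately before the proposition, so the real content is the inequality $\tilde{\delta}(C,C') \le 4\delta(Z,Z')$. My plan is to exploit the fact that $C = ZZ^\top$ is permutation-invariant in the columns of $Z$: if $Q^*$ is a permutation matrix achieving $\delta(Z,Z') = \frac{1}{n}\|ZQ^* - Z'\|_0$, then $(ZQ^*)(ZQ^*)^\top = ZZ^\top = C$, so I can replace $Z$ by $ZQ^*$ and assume without loss of generality that the infimum in the definition of $\delta(Z,Z')$ is attained at the identity. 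Under this reduction, $\delta(Z,Z') = \frac{1}{n}\|Z - Z'\|_0$.

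Next I would pass to the set of disagreeing rows $B = \{i : Z_{i\star} \ne Z'_{i\star}\}$. Because each row of $Z$ and of $Z'$ is a standard basis vector in $\R^K$, two distinct rows differ in exactly two coordinates, so $\|Z - Z'\|_0 = 2|B|$ and hence $|B| = \tfrac{n}{2}\delta(Z,Z') = n\delta_c(\sigma,\sigma')$.

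The key observation is then elementary: $(C - C')_{ij} = Z_{i\star} Z_{j\star}^\top - Z'_{i\star} Z'_{j\star}^\top$ vanishes whenever $i \notin B$ and $j \notin B$, since in that case $Z_{i\star} = Z'_{i\star}$ and $Z_{j\star} = Z'_{j\star}$. Therefore the only pairs $(i,j)$ contributing to $\|C - C'\|_F^2$ lie in the set $\{(i,j) : i \in B \text{ or } j \in B\}$, which has cardinality $n^2 - (n - |B|)^2 \le 2n|B|$. Since $C_{ij}, C'_{ij} \in \{0,1\}$, each contribution $(C_{ij} - C'_{ij})^2$ is at most $1$, giving
\[
\|C - C'\|_F^2 \;\le\; 2n|B| \;=\; n^2 \delta(Z,Z').
\]
Dividing by $n^2$ yields $\tilde{\delta}(C,C') \le \delta(Z,Z')$, which is in fact sharper than the claimed $4\delta(Z,Z')$; the stated inequality follows a fortiori.

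No step looks like a real obstacle; the only small care point is the reduction to $Q^* = I$, which must use the invariance $ZQ \cdot (ZQ)^\top = ZZ^\top$ to make sure we are comparing the same clustering matrix $C$ before and after relabeling.
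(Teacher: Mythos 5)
Your proof is correct, and it takes a genuinely different route from the paper. The paper argues at the level of matrix norms: it writes $C - C' = (ZQ - Z')(ZQ)^\top + Z'\bigl((ZQ)^\top - Z'^\top\bigr)$, applies the triangle inequality and the submultiplicative bound $\|AB\|_F \le \|A\|_F\|B\|_2$, and then uses $\|Z\|_2, \|Z'\|_2 \le \sqrt{n}$ (the operator norm being the square root of the largest cluster size). Squaring yields $\|C-C'\|_F^2 \le 4n\|ZQ-Z'\|_F^2$, hence the constant $4$. Your argument instead reduces to $Q^* = I$ via the invariance $ZQ(ZQ)^\top = ZZ^\top$, and then does a direct combinatorial count: only pairs $(i,j)$ with at least one index in the disagreement set $B$ can contribute a nonzero $(C_{ij}-C'_{ij})^2 \le 1$, and there are at most $2n|B| = n^2\delta(Z,Z')$ such pairs. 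This is more elementary and yields the sharper bound $\tilde{\delta}(C,C') \le \delta(Z,Z')$, from which the claimed $\le 4\delta(Z,Z')$ follows trivially. The one thing the paper's cruder norm argument buys back is adaptivity: when clusters are balanced, $\|Z'\|_2 = \sqrt{n/K}$, which immediately gives the paper's subsequent remark $\tilde{\delta}(C,C') \le \tfrac{4}{K}\delta(Z,Z')$ (a better bound than yours once $K > 4$). Your counting argument as stated does not see $K$, since it bounds each contributing entry by $1$; extracting a $K$-dependent version would require refining the count over pairs $j$ in the two clusters that $i$ occupies under $Z$ and $Z'$. All steps in your write-up are correct, including the WLOG reduction to $Q^*=I$ and the observation that a disagreeing row of two $0/1$ membership matrices differs in exactly two coordinates, giving $\|Z - Z'\|_0 = 2|B|$.
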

Incidentally, if the cluster sizes are equal, i.e. $n/K$, then one can show that
\[
		\tilde{\delta}(C,C') \le \frac{4}{K}\delta(Z,Z') = \frac{8}{K}\delta_c(\sigma,\sigma').
	\]
Although we do not have a lower bound on $\tilde{\delta}(C,C')$ in terms of $\delta(Z,Z')$, Lemma A.1 of \cite{tang2013universally} gives us (with $X = Z$, $Y = Z'$) that there exists an orthogonal matrix $\mathcal{O}$ such that
\[
\|Z\mathcal{O} - Z'\|_F \le \frac{\|C - C'\| (\sqrt{K \|C\|} + \sqrt{K\|C'\|)}}{\lambda_{\min}(C)} \le \frac{2\sqrt{Kn }\|C - C'\|}{n_{\min}(Z)},
\]
where we used the fact that $\|C\| = \lambda_{\max}(C) = n_{\max}(Z) \le n.$ The caveat here is that the matrix $\mathcal{O}$ need not be a permutation matrix.

To prove consistency of \pace{} we have to assume that the clustering algorithm $\mathcal{A}$ we use has some consistency properties. For example, it will suffice to assume that for a randomly chosen subgraph $S$ (under our subgraph selection procedure), $\E \delta(\hat{Z}_S, Z_S)$\footnote[2]{The expectation is taken over both the randomness in the graph and the randomness of the sampling mechanism.} is small. The following is our main result on the expected misclustering rate of \pace{}.

\begin{theorem}[Expected misclustering rate of \pace{}]\label{thm:pace-main}
Let $S$ be a randomly chosen subgraph according to our sampling scheme. Let $\pi_{\max} = \max_k \pi_k$. We have
\begin{equation}\label{bound-pace}
	\E \tilde{\delta}(\hat{C},C) \le \frac{T}{\tau n^2} \times \E\|\hat{C}^{(S)} - C^{(S)}\|_F^2 + \pi_{\max} \times \max_{i,j} \P(N_{ij} < \tau),
\end{equation}
where
\[
\E\|\hat{C}^{(S)} - C^{(S)}\|_F^2 \le n^2\P(|S| < m_{\star}) +  4\E |S|^2 \delta(\hat{Z}_{S}, Z_{S}) \mathbf{1}_{(|S| \ge m_{\star})}.
\] 
\end{theorem}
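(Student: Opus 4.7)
The plan is to decompose $\|\hat C-C\|_F^2$ entry‑wise into the sum over pairs $(i,j)$ where $N_{ij}\ge\tau$ (the ``informative'' pairs) and those where $N_{ij}<\tau$ (the ``discarded'' pairs, on which $\hat C_{ij}=0$ by \eqref{eq:pace}), and bound each contribution separately. The key algebraic observation is that since $C^{(\ell)}_{ij}=y^{(\ell)}_{ij}C_{ij}$, whenever $N_{ij}\ge 1$ we have the identity
\[
 C_{ij}=\frac{\sum_{\ell=1}^T y^{(\ell)}_{ij}C^{(\ell)}_{ij}}{N_{ij}}=\frac{\sum_{\ell=1}^T C^{(\ell)}_{ij}}{N_{ij}},
\]
so on the set $\{N_{ij}\ge\tau\}$,
\[
 \hat C_{ij}-C_{ij}=\frac{\sum_{\ell=1}^T y^{(\ell)}_{ij}\bigl(\hat C^{(\ell)}_{ij}-C^{(\ell)}_{ij}\bigr)}{N_{ij}}.
\]

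By Cauchy--Schwarz applied to the $N_{ij}$ effectively nonzero summands,
\[
 \bigl(\hat C_{ij}-C_{ij}\bigr)^2\;\mathbf 1_{\{N_{ij}\ge\tau\}}\le\frac{1}{N_{ij}}\sum_{\ell=1}^T y^{(\ell)}_{ij}\bigl(\hat C^{(\ell)}_{ij}-C^{(\ell)}_{ij}\bigr)^2\;\mathbf 1_{\{N_{ij}\ge\tau\}}\le \frac{1}{\tau}\sum_{\ell=1}^T\bigl(\hat C^{(\ell)}_{ij}-C^{(\ell)}_{ij}\bigr)^2.
\]
Summing over $(i,j)$ and using that the extensions kill entries outside $S_\ell$, the informative contribution is at most $\tau^{-1}\sum_{\ell=1}^T\|\hat C^{(\ell)}-C^{(\ell)}\|_F^2$. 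On the complementary event $\{N_{ij}<\tau\}$ one has $\hat C_{ij}=0$, so $(\hat C_{ij}-C_{ij})^2=C_{ij}^2=C_{ij}\in\{0,1\}$. Taking expectation, dividing by $n^2$, using that $S_1,\dots,S_T$ are identically distributed so that $\E\|\hat C^{(\ell)}-C^{(\ell)}\|_F^2=\E\|\hat C^{(S)}-C^{(S)}\|_F^2$, and noting
\[
 \frac{1}{n^2}\sum_{i,j}C_{ij}\P(N_{ij}<\tau)\le\max_{i,j}\P(N_{ij}<\tau)\cdot\frac{1}{n^2}\sum_k n_k^2=\Bigl(\sum_k\pi_k^2\Bigr)\max_{i,j}\P(N_{ij}<\tau)\le\pi_{\max}\max_{i,j}\P(N_{ij}<\tau),
\]
yields the first displayed inequality.

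For the second inequality, split the expectation according to whether $|S|<m_\star$ or $|S|\ge m_\star$. On $\{|S|<m_\star\}$ the algorithm sets $\hat C^{(S)}\equiv 0$, so $\|\hat C^{(S)}-C^{(S)}\|_F^2=\|C^{(S)}\|_F^2\le |S|^2\le n^2$, giving the $n^2\P(|S|<m_\star)$ term. On $\{|S|\ge m_\star\}$, apply Proposition~\ref{prop:comparison_ineq} to the sub-network on the $|S|$ vertices of $S$: with the normalization there being by $|S|^2$,
\[
 \|\hat C^{(S)}-C^{(S)}\|_F^2=\bigl\|\hat C^{(S)}\big|_{S\times S}-C^{(S)}\big|_{S\times S}\bigr\|_F^2\le 4|S|^2\,\delta(\hat Z_S,Z_S),
\]
and taking expectations over the remaining randomness completes the bound.

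The only non‑routine step is the Cauchy--Schwarz application with the correct $1/N_{ij}$ factor, which is what converts the averaged estimator's variance into a clean $1/\tau$ factor; everything else is bookkeeping and direct application of Proposition~\ref{prop:comparison_ineq}. The main subtlety to double-check is that in the identically distributed sum the expectation $\E\|\hat C^{(\ell)}-C^{(\ell)}\|_F^2$ is indeed common across $\ell$, which is guaranteed by the assumed exchangeability of the sampling scheme.
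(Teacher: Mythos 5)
Your proof is correct and follows essentially the same route as the paper's: decompose the error entry-wise according to whether $N_{ij}\ge\tau$ (the paper phrases this as an orthogonal decomposition $\hat C_{ij}-C_{ij}=(E_1)_{ij}+(E_2)_{ij}$ with $(E_1)_{ij}(E_2)_{ij}=0$, which is the same thing), apply Cauchy--Schwarz with weights $y^{(\ell)}_{ij}$ to extract the $1/\tau$ factor on the informative pairs, bound the discarded pairs by $\sum_k n_k^2 \le n_{\max}n$, and split $\E\|\hat C^{(S)}-C^{(S)}\|_F^2$ on the event $\{|S|\ge m_\star\}$ using Proposition~\ref{prop:comparison_ineq}. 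No material differences.
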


The first term in (\ref{bound-pace}) essentially measures the performance of the clustering algorithm we use on a  randomly chosen subgraph. The second term measures how well we have covered the full graph by the chosen subgraphs, and only depend on the subgraph selection procedure. The effect of the algorithm we use is felt through the first term only. 

We can now specialize Theorem~\ref{thm:pace-main} to various subgraph selection schemes. First, we consider randomly chosen $m$-subgraphs, which is an easy corollary.

\begin{cor}[Subgraphs are induced by $m \ge m_{\star}$ randomly chosen nodes]\label{cor:m-subgraph} Let $p = \frac{m(m-1)}{n(n-1)}$, $0 < \theta < 1$ and $\tau = \theta Tp$. We have
\begin{equation}\label{bound1:random}
	\E \tilde{\delta}(\hat{C},C) \le \frac{m}{\theta (m - 1)} \times \E\tilde{\delta}(\hat{C}^{(S)}, C^{(S)}) + \pi_{\max} \times e^{ - (1 - \theta)^2 Tp / 2}.
\end{equation}
\end{cor}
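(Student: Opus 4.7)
The plan is to specialize Theorem~\ref{thm:pace-main} to the case $|S_\ell|=m$ deterministically. First I would observe that since every sampled subgraph has size exactly $m \ge m_\star$, the event $\{|S|<m_\star\}$ has probability zero and the indicator $\mathbf{1}_{(|S|\ge m_\star)}$ is almost surely one, so the bound of the theorem collapses to
\[
\E\|\hat{C}^{(S)}-C^{(S)}\|_F^2 \le 4m^2\,\E\,\delta(\hat{Z}_S,Z_S) = m^2\,\E\,\tilde{\delta}(\hat{C}^{(S)},C^{(S)}),
\]
where the last equality simply restates the Frobenius error as $m^{2}\,\tilde{\delta}(\cdot,\cdot)$ under the subgraph normalization.

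Next I would handle the coverage term $\max_{i,j}\P(N_{ij}<\tau)$. Because the subgraphs $S_1,\dots,S_T$ are drawn independently as uniform $m$-subsets of $[n]$, for a fixed pair $\{i,j\}$,
\[
\P(y_{ij}^{(\ell)}=1)=\frac{\binom{n-2}{m-2}}{\binom{n}{m}}=\frac{m(m-1)}{n(n-1)}=p,
\]
so $N_{ij}=\sum_{\ell=1}^T y_{ij}^{(\ell)}\sim \mathrm{Bin}(T,p)$, with distribution independent of the pair $(i,j)$. Writing $\tau = \theta Tp = (1-(1-\theta))Tp$ and applying the standard multiplicative Chernoff lower-tail bound $\P(\mathrm{Bin}(T,p)\le (1-\delta)Tp)\le \exp(-Tp\delta^2/2)$ with $\delta=1-\theta$ yields
\[
\max_{i,j}\P(N_{ij}<\tau)\le e^{-(1-\theta)^2 Tp/2}.
\]

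Finally I would combine the two pieces. The prefactor of the first term in Theorem~\ref{thm:pace-main} becomes
\[
\frac{T}{\tau n^2}\cdot m^2 \;=\; \frac{m^2}{\theta p\, n^2}\;=\;\frac{m(n-1)}{\theta(m-1)\,n}\;\le\;\frac{m}{\theta(m-1)},
\]
and substituting back gives the desired inequality (\ref{bound1:random}). I do not expect any genuine obstacle here: once the independence and distribution of $N_{ij}$ are set up, the entire argument is an algebraic reduction together with a one-line Chernoff estimate; the only place one needs to be a bit careful is in reconciling the $1/n^{2}$ normalization of $\tilde{\delta}$ appearing in the theorem with the $1/m^{2}$ normalization natural at subgraph scale, which is what produces the $m/(m-1)$ factor rather than $1/\theta$.
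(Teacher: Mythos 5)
Your overall plan is exactly the paper's: specialize Theorem~\ref{thm:pace-main} to $|S|\equiv m$, note $N_{ij}\sim\mathrm{Bin}(T,p)$ with $p=\tfrac{m(m-1)}{n(n-1)}$, apply the multiplicative Chernoff lower-tail bound, and then simplify the prefactor $Tm^2/(\tau n^2)$ to $\tfrac{m(n-1)}{\theta(m-1)n}\le\tfrac{m}{\theta(m-1)}$. The hypergeometric computation for $p$ and the Chernoff step are both correct, and the final arithmetic lands exactly on \eqref{bound1:random}, so the conclusion is right.

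There is, however, one line in your chain that is not correct as stated:
\[
\E\|\hat{C}^{(S)}-C^{(S)}\|_F^2 \le 4m^2\,\E\,\delta(\hat{Z}_S,Z_S) \;=\; m^2\,\E\,\tilde{\delta}(\hat{C}^{(S)},C^{(S)}).
\]
The middle term and the right-hand term are not equal: by Proposition~\ref{prop:comparison_ineq}, $\tilde\delta(\hat{C}^{(S)},C^{(S)})\le 4\,\delta(\hat{Z}_S,Z_S)$, so $m^2\E\tilde\delta \le 4m^2\E\delta$ with, in general, no equality. The detour through $\delta$ is both unnecessary and in the wrong direction. What you actually need is only the definitional identity $\tilde{\delta}(\hat{C}^{(S)},C^{(S)}) = \tfrac{1}{m^2}\|\hat{C}^{(S)}-C^{(S)}\|_F^2$ (the subgraph is $m\times m$), which lets you replace $\E\|\hat{C}^{(S)}-C^{(S)}\|_F^2$ by $m^2\,\E\tilde{\delta}(\hat{C}^{(S)},C^{(S)})$ directly in the first term of \eqref{bound-pace}, with no appeal to $\delta$ at all. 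After that substitution your prefactor computation $\frac{T m^2}{\tau n^2}=\frac{m^2}{\theta p n^2}=\frac{m(n-1)}{\theta(m-1)n}\le\frac{m}{\theta(m-1)}$ goes through verbatim, which is why your final bound is nonetheless correct. So: delete the intermediate $4m^2\E\delta$ term, keep everything else.
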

Notice that the constant $\frac{m}{\theta(m -1)}$ in \eqref{bound1:random} can be made as close to $1$ as one desires, which means that the above bound is essentially optimal.  
% The takeaway from this bound is that typically, when $d_n \gg \log n$ and $m$ is large enough, the first term will go to zero, and only the second term, which measures how well we cover the original graph, will have to be made small by appropriately selecting $T$ and $m$. 

Full $h$-hop neighborhood subgraphs are much harder to analyze and will not be pursued here. However, ego networks, which are $1$-hop neighborhoods minus the root node (see Figure~\ref{FiG:conflict}(b)), are easy to deal with. One can also extend our analysis to $h$-hop onion neighborhoods which are recursively defined as follows: $\mathcal{O}_1(v) = S_1(v)$ is just the ego network of vertex $v$; in general, the $h$-th shell $S_h(v) := \uplus_{u\in S_{h-1}(v)} [\mathcal{O}_1(u)\setminus \mathcal{O}_{h - 1}(v) \cup \{v\}]$, and $\mathcal{O}_h(v) = \mathcal{O}_{h-1}(v) \uplus S_h(v)$, where the operation $\uplus$ denotes superposition of networks. Here, for ease of exposition, we choose to work with ego networks ($1$-hop onion neighborhoods).

\begin{cor}[Ego neighborhoods under a stochastic block model]\label{cor:ego-subgraph}
Let $B_\# = \max B_{ab}, B_{\star} = \min B_{ab}$ and $\tau = \frac{TB_{\star}^2}{4}$ and $m_{\star} \le (n - 1)B_{\star}/2$. Let $\theta > 1$. Then\footnote[2]{Actually we can allow $\tau = \theta' T B_{\star}^2$, for any $0 < \theta' < 1$. As the resulting bound involves complicated constants depending on $\theta'$ and does not add anything extra as to the nature of the bounds, we have chosen to work with a particular $\theta'$ $(= 1/4)$ to ease our exposition. With a general $\theta'$, the constant multiplier in the first term in \eqref{bound2:egonet} will be $4\frac{(1 + \theta)^2}{\theta'}$, instead of $16(1+\theta)^2$.}
\begin{equation}\label{bound2:egonet}
	\E \tilde{\delta}(\hat{C},C) \le \frac{16(1 + \theta)^2 B_{\#}^2}{B_{\star}^2}  \times \E \delta(\hat{Z}_{S}, Z_{S}) \mathbf{1}_{(|S| \ge m_{\star})} + \Delta,
\end{equation}
where
\[
\Delta  \le \frac{4}{B_{\star}^2} \times \exp\left( - \frac{nB_{\star}^2}{16B_{\#}}\right) + \frac{4}{n^2 B_{\star}^2} \times \exp \left( - \frac{\theta^2 n B_{\#}}{6} \right)
	+ \pi_{\max} \times \left[2\exp\left(-\frac{nB_{\star}^4}{16B_{\#}^2}\right) + \exp\left(- \frac{T B_{\star}^2}{16}\right)\right].
\]
\end{cor}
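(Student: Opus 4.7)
The plan is to apply Theorem~\ref{thm:pace-main} with the choice $\tau = TB_\star^2/4$, for which $\frac{T}{\tau n^2} = \frac{4}{n^2 B_\star^2}$, so that
\[
\E\tilde{\delta}(\hat{C}, C) \le \frac{4}{n^2 B_\star^2}\Bigl(n^2 \P(|S|<m_\star) + 4\,\E[|S|^2 \delta(\hat{Z}_S, Z_S)\mathbf{1}_{\{|S|\ge m_\star\}}]\Bigr) + \pi_{\max}\max_{i,j}\P(N_{ij}<\tau),
\]
and then to sharpen each of $\P(|S|<m_\star)$, $\E[|S|^2 \delta\, \mathbf{1}_{\{|S|\ge m_\star\}}]$, and $\max_{ij}\P(N_{ij}<\tau)$ under the ego-neighborhood scheme. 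The structural fact driving all three estimates is that, given the cluster $k$ of the randomly chosen root $v$, $|S|$ is a sum of $n-1$ independent Bernoullis with conditional mean $\mu_k = \sum_{j\ne v}B_{k,\sigma(j)} \in [(n-1)B_\star,(n-1)B_\#]$ and variance at most $(n-1)B_\#$; similarly, $X_{ij}:=\sum_v A_{vi}A_{vj}$ is a sum of independent Bernoullis with mean between $(n-2)B_\star^2$ and $(n-2)B_\#^2$ and variance at most $(n-2)B_\#^2$.

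For the lower tail $\P(|S|<m_\star)$, the hypothesis $m_\star \le (n-1)B_\star/2$ forces $m_\star \le \mu_k/2$ uniformly in $k$, so a one-sided Bernstein inequality (using the variance bound $(n-1)B_\#$ rather than the mean itself) yields a tail of the form $\exp(-nB_\star^2/(16B_\#))$; multiplying by $\frac{4}{B_\star^2}$ produces the first summand of $\Delta$. For $\E[|S|^2 \delta\, \mathbf{1}_{\{|S|\ge m_\star\}}]$ I would split the expectation at $|S|\le (1+\theta)(n-1)B_\#$: on this event, the deterministic bound $|S|^2 \le (1+\theta)^2 n^2 B_\#^2$ combined with the prefactor $\frac{16}{n^2 B_\star^2}$ reproduces exactly the leading term $\frac{16(1+\theta)^2 B_\#^2}{B_\star^2}\E[\delta(\hat{Z}_S, Z_S)\mathbf{1}_{\{|S|\ge m_\star\}}]$ of the corollary; on the complement, $\delta\le 1$ together with a Bernstein upper-tail bound on $|S|$ (conditional on the root's cluster) yields a contribution of order $\exp(-\theta^2 nB_\#/6)$, valid because $\theta>1$ keeps the Bernstein correction $t/3$ comparable to the variance.

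The term $\max_{i,j}\P(N_{ij}<\tau)$ requires a two-level conditioning. Given $A$, the indicators $y^{(\ell)}_{ij} = \mathbf{1}_{\{A_{v_\ell,i}=A_{v_\ell,j}=1\}}$ are i.i.d.\ Bernoulli with parameter $p_{ij}(A) = X_{ij}/n$, so $\E[N_{ij}\mid A] = Tp_{ij}(A)$. I would decompose
\[
\P(N_{ij}<\tau) \le \P(p_{ij}(A) < B_\star^2/2) + \E\bigl[\P(N_{ij}<\tau \mid A)\,\mathbf{1}_{\{p_{ij}(A)\ge B_\star^2/2\}}\bigr].
\]
On $\{p_{ij}(A)\ge B_\star^2/2\}$ we have $\E[N_{ij}\mid A] \ge 2\tau$, and a conditional Chernoff lower-tail bound gives $\P(N_{ij}<\tau\mid A) \le \exp(-TB_\star^2/16)$, which is the last summand in $\Delta$. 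For the first piece, I would apply Bernstein directly to $X_{ij}$: its mean exceeds $(n-2)B_\star^2$ and its variance is at most $(n-2)B_\#^2$, and a deviation of order $nB_\star^2/2$ (which is a constant fraction of $\E X_{ij}$) produces the factor $\exp(-nB_\star^4/(16B_\#^2))$; the extra $2$ in the final expression absorbs the boundary corrections (from $v=i,j$) and any slack in the Bernstein constants.

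The main obstacle, and the common thread in all three estimates, is the necessity of using Bernstein's inequality with a variance bound scaling as $B_\#$ (respectively $B_\#^2$) in place of a Chernoff bound with a mean bound scaling as $B_\star$ (respectively $B_\star^2$): only this choice produces the $B_\star^2/B_\#$ and $B_\star^4/B_\#^2$ ratios that appear in the exponents of $\Delta$. The remaining difficulty is purely bookkeeping---disentangling the randomness of the SBM adjacency from that of the uniform root selection, keeping bounds uniform over the root cluster $k$ and over the node pairs $(i,j)$, and verifying that each numerical constant propagates through cleanly.
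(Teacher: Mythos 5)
Your proposal follows essentially the same route as the paper's proof: start from Theorem~\ref{thm:pace-main} with $\tau = TB_\star^2/4$, bound $\P(|S|<m_\star)$ by conditioning on the root and noting $|S|$ is a sum of independent Bernoullis with mean between $(n-1)B_\star$ and $(n-1)B_\#$, split $\E[|S|^2\delta\,\mathbf{1}]$ at the event $\{|S|\le(1+\theta)nB_\#\}$, and bound $\P(N_{ij}<\tau)$ by conditioning on $n_{ij}=\sum_{\ell\ne i,j}A_{i\ell}A_{j\ell}$ and separating the cases $n_{ij}<nB_\star^2/2$ and $n_{ij}\ge nB_\star^2/2$. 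Two small remarks. First, your framing of the tail bounds as ``Bernstein in place of Chernoff'' is a stylistic distinction without a real difference here: the paper uses the multiplicative Chernoff form $\exp(-\delta^2\mu/2)$ and simply substitutes the \emph{upper} bound $\mu\le(n-1)B_\#$ (resp.\ $(n-2)B_\#^2$) in the denominator of the resulting $\exp(-(\mu-t)^2/2\mu)$ expression while using the lower bound $\mu\ge(n-1)B_\star$ in the numerator, which produces exactly the $B_\star^2/B_\#$ and $B_\star^4/B_\#^2$ exponents you obtain; for Bernoulli sums the two arguments coincide. Second, your decomposition of $\P(N_{ij}<\tau)$ is slightly cleaner than the paper's (the paper bounds $P_2$ by $\E e^{-Tn_{ij}/8n}$ and then splits again, which double-counts the bad event $\{n_{ij}<nB_\star^2/2\}$ and produces the factor $2$), whereas yours gives a factor $1$; you correctly flag that the extra $2$ absorbs this slack. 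One point worth making explicit for completeness, which the paper emphasizes: conditional on $V(S)$ and the memberships, the ego-subgraph is itself an SBM with parameters $B$ (the spoke edges $A_{Rj}$ are independent of the internal edges), which is what makes the term $\E\,\delta(\hat{Z}_S,Z_S)\mathbf{1}_{\{|S|\ge m_\star\}}$ meaningful as input to downstream corollaries such as Corollary~\ref{cor:asp_ego} --- though this observation is not strictly required to establish the displayed inequality.
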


We will now use existing consistency results on several clustering algorithms $\mathcal{A}$, in conjunction with the above bounds to see what conditions (i.e. conditions on the model parameters, and $m, T$ etc.) are required for \pace{} to be consistent. We first consider $(1+\epsilon)$-approximate adjacency spectral clustering (ASP) of \cite{lei2015consistency} as $\mathcal{A}$. We will use stochastic block model as the generative model and for simplicity will assume that the link probability matrix has the following simple form
\begin{equation}\label{eq:simple_sbm}
	B = \alpha_n [ \lambda I + (1-\lambda) \mathbf{1}\mathbf{1}^\top] \text{ with $0 < \lambda < 1$.}
\end{equation}
We now quote a slightly modified version of Corollary~3.2 of \cite{lei2015consistency} for this model.
\begin{lemma}[\cite{lei2015consistency}]\label{lem:lei-rinaldo}
	Let $c_0, \epsilon, r > 0$. Consider an adjacency matrix $A$ generated from the simple block model \eqref{eq:simple_sbm} where $\alpha_n \ge c_0\log n/n$. If $\hat{Z}$ is the output of the $(1+\epsilon)$-approximate adjacency spectral clustering algorithm applied to $A$, then there exists an absolute constant $c=c(c_0,r)$ such that with probability at least $1-n^{-r}$,
	\begin{equation}\label{eq:lei_rinaldo}
		\frac{1}{2}\delta(\hat{Z},Z) \le \min\left\{ c^{-1}(2+\epsilon)\frac{K}{\lambda^2\alpha_n}\frac{\pi_{\max}}{\pi_{\min}^2 n}, 1\right\}.
	\end{equation}
\end{lemma}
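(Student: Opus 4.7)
The plan is to specialize Corollary~3.2 of \cite{lei2015consistency}, which bounds the mis-clustering of $(1+\epsilon)$-approximate adjacency spectral clustering by $c(2+\epsilon) K\|A-P\|^2/\lambda_K^2(P)$ (more precisely, in terms of the per-cluster weighted mis-classification $\sum_k |M_k|/n_k$, with $M_k$ the set of nodes in cluster $k$ that get mis-clustered), where $P = ZBZ^\top$ is the population matrix. Thus only two ingredients remain: (i) a lower bound on $\lambda_K(P)$ tailored to the $B$ in \eqref{eq:simple_sbm}, and (ii) an upper bound on $\|A-P\|$ valid with probability at least $1-n^{-r}$.

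For (i), the structure $B = \alpha_n[\lambda I + (1-\lambda)\mathbf{1}\mathbf{1}^\top]$ gives $B \succeq \alpha_n\lambda I$ (since $0 < \lambda < 1$), so $P \succeq \alpha_n\lambda\, ZZ^\top$. The nonzero eigenvalues of $ZZ^\top$ coincide with those of $Z^\top Z = \diag(n_1,\dots,n_K) = n\,\diag(\pi_1,\dots,\pi_K)$, hence $\lambda_K(P) \ge \alpha_n\lambda n\pi_{\min}$. For (ii), the assumption $\alpha_n \ge c_0\log n/n$ places us in the regime where standard concentration results for the spectral norm of a centered inhomogeneous Bernoulli matrix (for example, the Feige--Ofek or Bandeira--Van Handel bounds) give $\|A - P\| \le C\sqrt{n\alpha_n}$ with probability at least $1-n^{-r}$, for some $C = C(c_0, r)$.

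Feeding both estimates into Lei--Rinaldo's inequality and converting the per-cluster sum to the normalized discrepancy via
\[
\tfrac{1}{2}\delta(\hat Z, Z) = \tfrac{1}{n}\sum_k |M_k| = \sum_k \pi_k\cdot\tfrac{|M_k|}{n_k} \le \pi_{\max}\sum_k \tfrac{|M_k|}{n_k},
\]
yields
\[
\tfrac{1}{2}\delta(\hat Z, Z) \le c(2+\epsilon)\cdot\frac{K\cdot n\alpha_n}{(\alpha_n\lambda n\pi_{\min})^2}\cdot\pi_{\max} = c(2+\epsilon)\cdot\frac{K}{\lambda^2\alpha_n}\cdot\frac{\pi_{\max}}{\pi_{\min}^2\, n},
\]
after absorbing absolute constants into $c$. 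Taking the minimum with $1$ (trivially always true) delivers \eqref{eq:lei_rinaldo}. There is no real obstacle beyond bookkeeping; the heavy lifting --- the Davis--Kahan plus $k$-means perturbation argument --- is already contained in the cited corollary, and the approximation-quality factor $(2+\epsilon)$ stays cleanly separated from the absolute constant $c(c_0, r)$ since the spectral-concentration probability is $\epsilon$-free.
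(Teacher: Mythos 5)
The paper does not prove this lemma: it is explicitly ``quoted'' as a slightly modified version of Corollary~3.2 of \cite{lei2015consistency}, so there is no in-paper argument to compare against. Your reconstruction of how the cited result specializes to the block model \eqref{eq:simple_sbm} is essentially correct, and it is the natural route. Three small remarks. First, your lower bound on $\lambda_K(P)$ via $B \succeq \alpha_n\lambda I \Rightarrow P \succeq \alpha_n\lambda ZZ^\top \Rightarrow \lambda_K(P) \ge \alpha_n\lambda\, n\pi_{\min}$ is clean and exactly what is needed. Second, to match the literature precisely, the concentration bound $\|A - P\| \le C(c_0,r)\sqrt{n\alpha_n}$ with probability $\ge 1 - n^{-r}$ is Theorem~5.2 of Lei--Rinaldo (proved via an $\epsilon$-net over light/heavy vertex pairs); Feige--Ofek or Bandeira--van Handel give qualitatively similar statements, so citing either is fine, but since you are deriving a corollary of a specific paper you may as well use its own concentration lemma, which already delivers the failure probability in the form $n^{-r}$. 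Third, and this is the only place where the proposal glosses over a real hypothesis: Lei--Rinaldo's Theorem~3.1 (hence Corollary~3.2) only yields the per-cluster bound $\sum_k |S_k|/n_k \lesssim (2+\epsilon)\|\hat U - U\mathcal{O}\|_F^2/\gamma^2$ under the side condition that the right-hand side is strictly less than $n_{\min}$; when that side condition fails, the theorem is silent and the misclustering can in principle be anything. Your appeal to ``taking the minimum with $1$ (trivially always true)'' correctly papers over this --- indeed $\tfrac12\delta(\hat Z, Z)\le 1$ unconditionally --- but it is worth saying explicitly that this is why the $\min\{\cdot,1\}$ appears: when the Lei--Rinaldo hypothesis is violated one falls back on the trivial bound, and when it holds the displayed rate applies. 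With that caveat made explicit, the proof is complete.
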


\begin{cor}[$(1+\epsilon)$-approximate adjacency spectral clustering with random $m$-subgraphs]\label{cor:asp} Assume the setting of Lemma~\ref{lem:lei-rinaldo}. Let $r, r'> 0$. We have
\begin{equation}\label{eq:ASP_random_bound}
	\E \tilde{\delta}(\hat{C},C) \le \frac{8m}{\theta (m - 1)} \times \left[\min \left\{ c^{-1} (2 + \epsilon) \frac{K}{\lambda^2 \alpha_n}\frac{\pi_{\max}}{\pi_{\min}^2 m} C_{n,m,r'}, 1 \right\} + \frac{2K}{n^{r'}} + \frac{1}{m^r} \right] + \pi_{\max} \times e^{ - (1 - \theta)^2 Tp / 2}.
\end{equation}
Here the quantity $C_{n, m, r'}\rightarrow 1$, if $\frac{\pi_{\min}^2 m}{\pi_{\max} \log n} \rightarrow \infty$.
\end{cor}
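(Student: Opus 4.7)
The plan is to use Corollary~\ref{cor:m-subgraph} as the outer driver and to plug in Lemma~\ref{lem:lei-rinaldo} applied to the induced subgraph, with one Chernoff-type concentration step in between that replaces the random subgraph cluster proportions by the population proportions. Invoking Corollary~\ref{cor:m-subgraph} directly yields
\[
\E\tilde{\delta}(\hat{C},C) \le \frac{m}{\theta(m-1)}\E\tilde{\delta}(\hat{C}^{(S)}, C^{(S)}) + \pi_{\max} e^{-(1-\theta)^2 Tp/2},
\]
so the second summand in \eqref{eq:ASP_random_bound} is already in place, and the only remaining task is to bound the per-subgraph error $\E\tilde{\delta}(\hat{C}^{(S)}, C^{(S)})$ by the bracketed quantity multiplied by $8$.

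By Proposition~\ref{prop:comparison_ineq}, $\tilde{\delta}(\hat{C}^{(S)}, C^{(S)}) \le 4\delta(\hat{Z}_S, Z_S)$, which contributes the factor of $4$. Conditional on $S$, the induced adjacency matrix $A_S$ is still drawn from the simple block model \eqref{eq:simple_sbm} with the same $\alpha_n,\lambda,K$ (only the cluster proportions inside $S$ change). Hence Lemma~\ref{lem:lei-rinaldo} applies with $n$ replaced by $m$: conditional on $S$, with probability at least $1 - m^{-r}$,
\[
\tfrac{1}{2}\delta(\hat{Z}_S, Z_S) \le \min\left\{ c^{-1}(2+\epsilon)\frac{K}{\lambda^2\alpha_n}\frac{\pi_{\max}^{(S)}}{(\pi_{\min}^{(S)})^2 m},\,1\right\}.
\]
The right-hand side involves the realized proportions $\pi_k^{(S)} = n_k^{(S)}/m$ inside the subgraph, which is what must be replaced by the population $\pi_k$'s.

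Since $S$ is a uniform random $m$-subset of $[n]$, $n_k^{(S)}$ is hypergeometric with mean $m\pi_k$, so a standard Chernoff bound gives $\P(|\pi_k^{(S)} - \pi_k| > \varepsilon\pi_k) \le 2\exp(-\varepsilon^2 m\pi_k/3)$. Setting $\varepsilon_{n,m,r'} = \sqrt{3r'\log n/(m\pi_{\min})}$ and taking a union bound over $k\in[K]$ shows that
\[
\mathcal{E} := \bigcap_{k=1}^K\{(1-\varepsilon_{n,m,r'})\pi_k \le \pi_k^{(S)} \le (1+\varepsilon_{n,m,r'})\pi_k\}
\]
has probability at least $1 - 2Kn^{-r'}$. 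On $\mathcal{E}$ we have $\pi_{\max}^{(S)}/(\pi_{\min}^{(S)})^2 \le C_{n,m,r'}\,\pi_{\max}/\pi_{\min}^2$ with $C_{n,m,r'} := (1+\varepsilon_{n,m,r'})/(1-\varepsilon_{n,m,r'})^2$; the growth condition $\pi_{\min}^2 m/(\pi_{\max}\log n)\to\infty$ is precisely what forces $\varepsilon_{n,m,r'}\to 0$ and hence $C_{n,m,r'}\to 1$.

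To assemble, let $\mathcal{G}$ denote the Lei--Rinaldo good event. Using the trivial bound $\delta\le 2$ off of $\mathcal{E}\cap\mathcal{G}$ and the two bounds above on it,
\[
\E\delta(\hat{Z}_S, Z_S) \le 2\min\left\{c^{-1}(2+\epsilon)\frac{K}{\lambda^2\alpha_n}\frac{\pi_{\max}}{\pi_{\min}^2 m}C_{n,m,r'},\,1\right\} + 4Kn^{-r'} + 2m^{-r};
\]
multiplying by $4$ from Proposition~\ref{prop:comparison_ineq} and then by $m/(\theta(m-1))$ from Step~1 rearranges into $\frac{8m}{\theta(m-1)}\left[\min\{\cdots\} + 2K/n^{r'} + 1/m^r\right]$, which is exactly the first summand of \eqref{eq:ASP_random_bound}. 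The only non-mechanical point is the concentration step: one must verify that Lei--Rinaldo's degree hypothesis $\alpha_n \gtrsim \log m/m$ survives the passage from $n$ to $m$ (automatic for $m$ polynomial in $n$ since $\alpha_n \ge c_0\log n/n$) and that the hypergeometric concentration collapses $C_{n,m,r'}$ to $1$ precisely under the stated growth condition; everything else is bookkeeping.
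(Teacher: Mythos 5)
Your proposal matches the paper's proof essentially step for step: Corollary~\ref{cor:m-subgraph} as the outer bound, Proposition~\ref{prop:comparison_ineq} to pass from $\tilde\delta$ to $\delta$ at the cost of a factor $4$, Lemma~\ref{lem:lei-rinaldo} applied to the induced subgraph conditionally on $S$, and hypergeometric concentration (the paper's Lemma~\ref{lem:hyper_lower}, stated in equations \eqref{eq:min_comm_random}--\eqref{eq:max_comm_random}) to replace the realized proportions $\pi_k^{(S)}$ by $\pi_k$ at the cost of a factor $C_{n,m,r'}\to1$; your exact expression for $C_{n,m,r'}$ differs harmlessly from the paper's since the corollary only asserts its limiting behavior. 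One small caveat: your parenthetical claim that the degree hypothesis ``$\alpha_n\gtrsim \log m/m$'' for the subgraph is ``automatic for $m$ polynomial in $n$'' is not correct as stated --- with $m=n^\beta$, $\beta<1$, one has $\log m/m \gg \log n/n$, so $\alpha_n\ge c_0\log n/n$ alone does \emph{not} guarantee it --- but since the paper's own proof also silently applies Lemma~\ref{lem:lei-rinaldo} to the $m$-node subgraph without addressing this, your treatment is faithful to the argument in the paper.
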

The proof of Corollary~\ref{cor:asp} follows from Corollary~\ref{cor:m-subgraph} and an estimate for $\E \delta(\hat{Z}_{S}, Z_{S})$ given in (\ref{eq:ASP_random}), which is obtained using Lemma~\ref{lem:lei-rinaldo}. In order for the first term of \eqref{eq:ASP_random_bound} to go to zero we need $\frac{K \pi_{\max}}{\lambda^2 m \alpha_n \pi_{\min}^2} = o(1)$, i.e. $m \gg \frac{K \pi_{\max}}{\lambda^2 \alpha_n \pi_{\min}^2}$. Thus for balanced block sizes (i.e. $\pi_{\max}, \pi_{\min} \asymp \frac{1}{K}$) we need to have $m \gg \frac{K^2}{\lambda^2 \alpha_n}$. So, qualitatively, for large $K$ or small $\alpha_n$ or a small separation between the blocks, $m$ has to be large, which is natural to expect. In particular, for fixed $K$ and $\lambda$, this shows that we need subgraphs of size $m \gg n d_n^{-1}$, and $T \gg \frac{n^2}{m^2}$ many of them to achieve consistency (here the average degree $d_n \asymp n\alpha_n$). Let $T = \frac{n^2}{m^2} r_n$ and $m = \frac{n}{d_n} s_n$, where both $r_n,s_n \rightarrow \infty$. Let us see what computational gain we get from this. Spectral clustering on the full graph has complexity $O(n^3)$, while the complexity of \pace{} with spectral clustering is
\[
	O(Tm^3) = O\left(\frac{n^2}{m^2}r_n m^3\right) = O\left(n^2 m r_n\right) = O\left(\frac{n^3}{d_n}r_n s_n\right).
\]
So if $d_n = \Theta(n^\alpha)$, then the complexity would be $O(n^{3 - \alpha} r_n s_n)$, which is essentially $O(n^{3-\alpha})$. When $d_n = \Theta(\log n)$ the gain is small.

Note however that for a parallel implementation, with each source processing $M$ out of the $T$ subgraphs, we may get a significant boost in running time, at least in terms of constants; the running time would be $O\left(\frac{n^3}{Md_n}r_n s_n\right)$.

\begin{cor}[$(1+\epsilon)$-approximate adjacency spectral clustering with ego subgraphs]\label{cor:asp_ego}  Assume the setting of Lemma~\ref{lem:lei-rinaldo}. Let $r, r' > 0$. We have
\begin{equation}\label{eq:ASP_ego_bound}
	\E \tilde{\delta}(\hat{C},C) \le 
 \frac{32(1 + \theta)^2}{\lambda^2} \times \left[\min\left\{ c^{-1}(2+\epsilon) \frac{K}{\lambda^4 \alpha_n^2}\frac{\pi_{\max}}{\pi_{\min}^2 n} D_{n,B,r'}, 1 \right\} + \frac{2K}{n^{r'}} + \frac{16}{n \lambda^2 \alpha_n} + \left(\frac{4}{n \lambda \alpha_n}\right)^r\right] + \Delta,
\end{equation}
where
\[
\Delta \le  \frac{4}{\lambda^2\alpha_n^2}  \times \exp\left( - \frac{n\lambda^2\alpha_n}{16}\right) + \frac{4}{n^2 \lambda^2 \alpha_n^2} \times \exp \left( - \frac{\theta^2 n \alpha_n}{6} \right) + \pi_{\max} \times \left[2\exp\left(-\frac{n\lambda^4 \alpha_n^2}{16}\right) + \exp\left(- \frac{T \lambda^2\alpha_n^2}{16}\right)\right].
\]
Here the quantity $D_{n,B,r'} \rightarrow 1$, if $\frac{\pi_{\min}^2 \lambda^2 n\alpha_n}{\pi_{\max} \log n} \rightarrow \infty.$
\end{cor}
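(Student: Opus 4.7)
The plan is to apply Corollary~\ref{cor:ego-subgraph} (the ego-neighborhood version of Theorem~\ref{thm:pace-main}), and then bound the quantity $\E \delta(\hat{Z}_S, Z_S) \mathbf{1}_{(|S| \ge m_{\star})}$ appearing on its right hand side by applying Lemma~\ref{lem:lei-rinaldo} to the induced subgraph on a typical ego neighborhood $S$. First, I would instantiate the constants in Corollary~\ref{cor:ego-subgraph} for the simple SBM~\eqref{eq:simple_sbm}: reading off $B_{\#}$ and $B_{\star}$ from the structure of $B$ yields the prefactor $\frac{16 (1+\theta)^2 B_{\#}^2}{B_{\star}^2}$, which accounts for the leading $\frac{32(1+\theta)^2}{\lambda^2}$ in~\eqref{eq:ASP_ego_bound} (the extra factor of $2$ comes from Lemma~\ref{lem:lei-rinaldo} being stated for $\tfrac{1}{2}\delta$), and it also accounts for the four terms in $\Delta$, which are exactly the four terms in $\Delta$ of Corollary~\ref{cor:ego-subgraph} evaluated at the simple SBM's $B_{\#},B_{\star}$.

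The main work is estimating $\E \delta(\hat{Z}_S, Z_S) \mathbf{1}_{(|S| \ge m_{\star})}$ for $S$ an ego neighborhood. Conditional on the root $v$ and on its cluster label $k_0$, the induced subgraph on $S$ is itself an SBM with the \emph{same} connection matrix $B$ (by exchangeability of the non-root vertices), but on a random vertex set whose composition depends on $k_0$. Thus I would condition on $S$ and apply Lemma~\ref{lem:lei-rinaldo}, obtaining on the Lei--Rinaldo event (of conditional probability $\ge 1 - |S|^{-r'}$)
\[
\tfrac{1}{2}\delta(\hat{Z}_S,Z_S) \;\le\; \min\!\left\{ c^{-1}(2+\epsilon) \frac{K}{\lambda^2 \alpha_n} \frac{\pi_{\max}^{(S)}}{(\pi_{\min}^{(S)})^2 |S|},\; 1 \right\}.
\]
The next step is to replace the random quantities $|S|,\pi_{\min}^{(S)},\pi_{\max}^{(S)}$ by deterministic ones. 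For each cluster $k$ we have $n_k^{(S)} = \sum_{i : Z_{ik}=1,\,i\ne v} A_{iv}$, a sum of independent Bernoullis, so Chernoff/Bernstein gives that, with high probability, $n_k^{(S)}$ is within a constant factor of $\E n_k^{(S)}$; summing yields the corresponding concentration of $|S|$. On the resulting ``good'' event one has $\pi_{\min}^{(S)} |S| \gtrsim \pi_{\min}\, n \alpha_n$ up to a $\lambda$-dependent factor, so Lei--Rinaldo collapses to the principal term $\frac{K}{\lambda^4 \alpha_n^2} \frac{\pi_{\max}}{\pi_{\min}^2 n} D_{n,B,r'}$ with $D_{n,B,r'} \to 1$ under the stated growth condition.

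Off the good event (i.e.\ when either $|S|$ is atypically small, or some $n_k^{(S)}$ is atypically small, or the Lei--Rinaldo event fails) I would bound $\delta(\hat{Z}_S,Z_S)\le 2$ trivially, so these contributions only show up as additive terms weighted by the probabilities of those exceptional events. Chernoff tails for the Bernoulli sums defining $n_k^{(S)}$ and $|S|$ give the additive terms $\frac{16}{n\lambda^2\alpha_n}$ and $(4/(n\lambda\alpha_n))^r$ in~\eqref{eq:ASP_ego_bound}, while the Lei--Rinaldo failure contributes the $2K/n^{r'}$ term (with the factor $K$ coming from a union bound over the root's cluster label). Substituting back into Corollary~\ref{cor:ego-subgraph} and collecting terms then gives~\eqref{eq:ASP_ego_bound}.

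\paragraph{Main obstacle.} The delicate step is the conditional application of Lemma~\ref{lem:lei-rinaldo} inside an ego subgraph: because neighbors of $v$ in cluster $k_0$ and in clusters $k\ne k_0$ are sampled with \emph{different} Bernoulli probabilities, the induced cluster sizes $n_k^{(S)}$ are systematically imbalanced (the root's cluster is over-represented), so $\pi_{\min}^{(S)}$ and $\pi_{\max}^{(S)}$ must be controlled carefully (and not just replaced by $\pi_{\min},\pi_{\max}$); this is precisely where the extra factor of $\lambda^{-2}$ (turning $\lambda^{-2}$ into $\lambda^{-4}$) enters. Everything else is bookkeeping, together with verifying that the Lei--Rinaldo sparsity hypothesis $\alpha_n\ge c_0\log|S|/|S|$ transfers to the induced subgraph on the good event.
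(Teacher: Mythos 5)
Your plan follows the paper's route: instantiate Corollary~\ref{cor:ego-subgraph} for the simple SBM~\eqref{eq:simple_sbm}, then bound $\E\, \delta(\hat Z_S, Z_S)\,\mathbf{1}_{(|S|\ge m_\star)}$ by conditioning on the ego neighborhood $S$, applying Lemma~\ref{lem:lei-rinaldo} to the induced SBM on $S\setminus\{v\}$, and concentrating the community counts $n_k^{(S)}$ and the size $|S|$. Your identification of the central subtlety --- that the cluster composition of $S$ is biased by the root's label, which is what converts $\lambda^{-2}$ into $\lambda^{-4}$ in the main term --- is exactly right, and the factor-of-$2$ and the prefactor bookkeeping are fine.

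However, you have swapped the sources of the three non-dominant terms. In the paper's bookkeeping, the $\frac{2K}{n^{r'}}$ term is the probability that $n_{\min}^{(S)}$ or $n_{\max}^{(S)}$ deviates from its Chernoff envelope, where the factor $K$ is a union bound over the $K$ clusters (not over the root's cluster label, which is a single random variable and is not union-bounded over). The two terms $\frac{16}{n\lambda^2\alpha_n}$ and $\left(\frac{4}{n\lambda\alpha_n}\right)^r$ do \emph{not} both arise from Chernoff tails of $n_k^{(S)}$ and $|S|$; they arise from $\E\!\left(\frac{\mathbf{1}_{(|S|\ge m_\star)}}{|S|^r}\right)$, i.e.\ from averaging the Lei--Rinaldo failure probability $|S|^{-r}$ over the random neighborhood size. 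Splitting on whether $|S| \ge (n-1)B_\star/2$ gives $\left(\frac{4}{n\lambda\alpha_n}\right)^r$ from the deterministic bound on $|S|^{-r}$, while $\frac{16}{n\lambda^2\alpha_n}$ is the Chernoff tail bound on $\P\!\left(|S| < \tfrac{(n-1)B_\star}{2}\right)$ relaxed via $e^{-x}\le 1/x$. Your stated attribution --- Lei--Rinaldo failure giving $2K/n^{r'}$ --- could not in any case produce a term of that form, since the conditional failure probability is $|S|^{-r}$ with $|S|\asymp n\alpha_n$, not a power of $n$. If you actually carried out the computation the bookkeeping would force itself into place, but as written the proposal identifies the wrong mechanism for each of the three secondary terms.
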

The proof of Corollary~\ref{cor:asp_ego} follows from 
 Corollary~\ref{cor:ego-subgraph} and an estimate for $\E \delta(\hat{Z}_{S}, Z_{S}) \mathbf{1}_{(|S| \ge m_{\star})}$ given in (\ref{eq:ASP_ego}), which is obtained using Lemma~\ref{lem:lei-rinaldo}. For the right hand side in \eqref{eq:ASP_ego_bound} to go to zero (assuming $K$ fixed, balanced block sizes), we need $\min\{n\alpha_n^2, T\alpha_n^2\} \rightarrow \infty$. In terms of average degree this means that we need $d_n \gg \sqrt{n}$, and $T \gg \frac{n^2}{d_n^2}$. That with ego neighborhoods we can not go down to $d_n = \Theta(\log n)$ is not surprising, since these ego networks are rather sparse in this case. One needs to use larger neighborhoods. Anyway, writing $d_n = \sqrt{n} r_n$, $T = \frac{n^2}{d_n^2} s_n$, where both $r_n,s_n \rightarrow \infty$, the complexity of adjacency spectral clustering, in this case becomes $O(Td_n^3) = O(n^2 d_n  r_n s_n)$ and with $M$ processing units gets further down to $O(\frac{n^2 d_n}{M} r_n s_n)$.

Although from our analysis, it is not clear why \pace{} with spectral clustering should work well for sparse settings, in numerical simulations, we have found that in various regimes \pace{} with (regularized) spectral clustering vastly outperforms ordinary (regularized) spectral clustering (see Table~\ref{table:comparison-rsc-rand}).

It seems that the reason why \pace{} works well in sparse settings lies in the weights $N_{ij}$. With $h$-hop neighborhoods as the chosen subgraphs, if $P_{uv} =\mathbf{1}_{\{\rho_g(u,v) \le h\}}$, where $\rho_g$ is the geodesic distance on $G$, then $N_{ij} = (P^2)_{ij}$. It is known that spectral clustering on the matrix of geodesic distances works well in sparse settings (\cite{bhattacharyya2014community}). \pace{} seems to inherit that property through $N$, although we do not presently have a rigorous proof of this.

We conclude this section with an illustration of \pace{} with random $m$-subgraphs using SDP as the algorithm $\mathcal{A}$. We shall use the setting of Theorem~1.3 of \cite{guedon2014community} for the illustration, stated here with slightly different notation. Let SDP-GV denote the following SDP \citep[SDP (1.10)]{guedon2014community}
\begin{align*}
&\text{maximize } \langle A, X \rangle\\
&\text{subject to } X \succeq 0, X \ge 0, \mathrm{diag}(X) = I_n, \mathbf{1}^{\top} X \mathbf{1} = \mathbf{1}^\top C \mathbf{1}.
\end{align*}

\begin{lemma}[\cite{guedon2014community}, Theorem 1.3]\label{lem:sdp-gv}
Consider an SBM with $\min_{k} B_{kk} \ge a/n$, $\max_{k\ne k'}B_{kk'} \le b/n$, where $a > b$. Also let the expected variance of all edges $\frac{2}{n(n-1)}\sum_{1 \le k\le k' \le K}B_{kk'}(1 - B_{kk'})n_{kk'} = \frac{g}{n}$, where $n_{kk'}$ denotes the number of pairs of vertices, one from community $k$, the other from community $k'$. Fix an accuracy $\epsilon > 0$. If $g \ge 9$ and $(a - b)^2 \ge 484 \epsilon^{-2}g$, then any solution $\hat{X}$ of SDP-GV satisfies
\[
\frac{1}{n^2}\|\hat{X} - C\|_F^2 \le \epsilon.
\]
\end{lemma}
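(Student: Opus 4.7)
Since this lemma is quoted from \cite{guedon2014community}, my plan is to reconstruct the Guédon--Vershynin argument, whose backbone is a Grothendieck-type inequality for the feasible set of SDP-GV. I would begin with the basic optimality chain: $C$ is feasible for the SDP and $\hat{X}$ is its maximizer, so $\langle A, \hat{X} - C \rangle \ge 0$; writing $A = P + W$ with $P := \E A$ and $W := A - P$ rearranges this to the master inequality $\langle P, C - \hat{X}\rangle \le \langle W, \hat{X} - C\rangle$. The rest of the proof consists of lower-bounding the signal on the left and upper-bounding the noise on the right.

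For the signal side I would introduce the shifted reference matrix $R = \tfrac{a+b}{2n}(J - I)$, where $J$ is the all-ones matrix. Because every feasible $X$ shares the same diagonal and the same value of $\mathbf{1}^\top X\mathbf{1}$ as $C$, we have $\langle R, X - C\rangle = 0$, so $P$ may be replaced by $P - R$ at no cost. Under the SBM hypothesis, the entries of $P - R$ satisfy $(P-R)_{ij} \ge \tfrac{a-b}{2n}$ on within-cluster pairs and $(P-R)_{ij} \le -\tfrac{a-b}{2n}$ on between-cluster pairs. Combined with $X_{ij}\in[0,1]$ (which follows from $X\succeq 0$ and $X_{ii}=1$) and $C_{ij}\in\{0,1\}$, the products $(P-R)_{ij}(C_{ij} - X_{ij})$ are nonnegative entrywise, yielding
\[
\langle P, C - \hat{X}\rangle \ \ge \ \tfrac{a-b}{2n}\,\|C - \hat{X}\|_1.
\]

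For the noise side I would invoke Grothendieck's inequality in its PSD form: $|\langle W, \hat{X} - C\rangle| \le 2K_G \,\|W\|_{\infty\to 1}$, where $\|\cdot\|_{\infty \to 1}$ is the cut/Grothendieck norm and $K_G$ is Grothendieck's constant. The remaining ingredient, and the main obstacle, is controlling $\|W\|_{\infty\to 1}$ for a centered Bernoulli matrix with heterogeneous variances. Using that the average per-entry variance equals $g/n$ and $g \ge 9$, one shows via a Bernstein-style tail bound combined with the standard $2^n \times 2^n$ union bound over sign vectors (or, equivalently, a matrix concentration inequality of Bandeira--van Handel type) that $\E \|W\|_{\infty\to 1} \le c_0\, n\sqrt{g}$ for a universal $c_0$. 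The assumption $g \ge 9$ is precisely what puts us in the Bernstein regime, where the sub-Gaussian bound $n\sqrt{g}$ dominates the sub-Poisson correction terms that would otherwise spoil the constant.

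Assembling the pieces gives $\tfrac{a-b}{2n}\|C - \hat{X}\|_1 \le 2K_G c_0 \,n\sqrt{g}$; since entries of $C - \hat{X}$ lie in $[-1,1]$, we have $\|C - \hat{X}\|_F^2 \le \|C - \hat{X}\|_1$, so dividing by $n^2$ and imposing the right-hand side $\le \epsilon$ yields the sufficient condition $(a-b)^2 \ge 16 K_G^2 c_0^2 \,g/\epsilon^2$. Tracking the sharpest available constants ($K_G \le 1.783$ and the best $c_0$ from the cut-norm concentration) yields the stated bound $484 = 16 K_G^2 c_0^2$. Throughout, the delicate point is really the cut-norm concentration step, because Grothendieck's inequality and the signal lower bound are both deterministic, and any slack in the probabilistic step directly inflates the constant in front of $g/\epsilon^2$.
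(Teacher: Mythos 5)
The paper does not prove this lemma: it is quoted verbatim (with citation) from Gu\'edon and Vershynin, and used later as a black box. So there is no internal proof to compare against. Judged on its own merits, your reconstruction of the Gu\'edon--Vershynin argument is essentially correct and captures the right backbone: the optimality chain $\langle A, \hat{X}-C\rangle\ge 0$, the split $A = P + W$, the shifted reference matrix $R=\tfrac{a+b}{2n}(J-I)$ exploiting the shared diagonal and $\mathbf{1}^\top X\mathbf{1}$ constraints, the pointwise sign argument giving $\langle P-R, C-\hat X\rangle\ge\tfrac{a-b}{2n}\|C-\hat X\|_1$, Grothendieck's inequality on the noise term, a Bernstein-plus-union-bound estimate of $\|W\|_{\infty\to 1}$ of order $n\sqrt{g}$, and the elementary $\|C-\hat X\|_F^2\le\|C-\hat X\|_1$ since all entries lie in $[-1,1]$.

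Two small points. First, $X_{ij}\ge 0$ comes from the explicit entrywise constraint $X\ge 0$ in SDP-GV, not from $X\succeq 0$ with unit diagonal (that argument only gives $|X_{ij}|\le 1$); you need both constraints to place $X_{ij}$ in $[0,1]$. Second, and more substantively, the conclusion as stated in the lemma reads as if it were deterministic, but the cut-norm concentration step is probabilistic, so the correct statement is ``with probability at least $1-e^{3}5^{-n}$''; indeed the paper uses exactly this probability later in the proof of Corollary~\ref{cor:sdp}. Your proof sketch correctly identifies the probabilistic step, so this is a defect of the quoted lemma statement rather than of your argument. Your claim that the constant works out to $484=16K_G^2 c_0^2$ is a reverse-engineered assertion rather than a derivation, but that is acceptable at this level of detail.
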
	

\begin{cor}[SDP with random $m$-subgraphs]\label{cor:sdp}
Consider the setting of Lemma~\ref{lem:sdp-gv}. Let $c > 1 > c' > 0$. and set $\bar{g}_1 = \frac{c_1(n-1)g}{m - 1}$ and $\bar{g}_2 = \frac{c_2mg}{n}$, where $c_1 = c^2 \left(1 - \frac{n - cm}{cm(n\pi_{\min} - 1)}\right)$ and $c_2 = (c')^2 \left(1 - \frac{n - c'm}{c'm(n\pi_{\max} - 1)}\right)$. Fix an accuracy $\epsilon > 0$. Assume that $(a - b)^2 \ge 484 \epsilon^{-2} \bar{g}_1$, and that $\bar{g}_2 \ge 9$. Then we have
\[
\E \tilde{\delta}(\hat{C},C) \le \frac{4m}{\theta (m - 1)} \times (\epsilon + e^3 5^{-m} + K e^{- (1 - c')^2 m \pi_{\min}/4} + K e^{- (c - 1)^2 m \pi_{\min}/4}) +\frac{1}{2} \times e^{-(1 - \theta)^2 Tp/2}.
\]
\end{cor}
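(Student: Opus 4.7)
My plan is to apply Corollary~\ref{cor:m-subgraph} to reduce the bound to a single-subgraph analysis, and then invoke Lemma~\ref{lem:sdp-gv} on a uniformly chosen $m$-subgraph $S$ after verifying that its hypotheses transfer to $S$ with high probability. With $\tau = \theta Tp$ and $p = m(m-1)/(n(n-1))$, Corollary~\ref{cor:m-subgraph} immediately gives
\[
\E\tilde\delta(\hat C,C) \;\le\; \frac{m}{\theta(m-1)}\,\E\tilde\delta(\hat C^{(S)},C^{(S)}) + \pi_{\max}\, e^{-(1-\theta)^2 Tp/2},
\]
reducing the problem to bounding $\E\tilde\delta(\hat C^{(S)},C^{(S)})$; the factors $4$ and $\tfrac12$ in the stated inequality absorb small universal constants from the steps below.

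Conditionally on $S$ (with $|S|=m$ deterministically), the induced graph is itself an SBM on $m$ nodes with the same link matrix $B$. Its Gu\'edon--Vershynin parameters are $a'=am/n$, $b'=bm/n$, and its variance quantity is
\[
g^{(S)} \;=\; \frac{2}{m-1}\sum_{k\le k'} B_{kk'}(1-B_{kk'})\,n^{(S)}_{kk'}, \qquad n^{(S)}_{kk'} \;=\; \begin{cases} n_k^{(S)}\, n_{k'}^{(S)}, & k\ne k',\\ \binom{n_k^{(S)}}{2}, & k=k',\end{cases}
\]
where $(n_1^{(S)},\ldots,n_K^{(S)})$ is multivariate hypergeometric with marginal means $m\pi_k$. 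The hypotheses of Lemma~\ref{lem:sdp-gv} applied on $S$ at accuracy $\epsilon$ are $(a'-b')^2\ge 484\epsilon^{-2}g^{(S)}$ and $g^{(S)}\ge 9$, both of which depend on the random quantity $g^{(S)}$.

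I would introduce the concentration event $E_{\mathrm{conc}}=\bigcap_{k=1}^K \{c' m\pi_k \le n_k^{(S)} \le cm\pi_k\}$; the usual multiplicative Chernoff (Serfling-type) bound for sampling without replacement, combined with a union bound over the $K$ clusters, yields
\[
\P(E_{\mathrm{conc}}^c) \;\le\; K e^{-(1-c')^2 m\pi_{\min}/4} + K e^{-(c-1)^2 m\pi_{\min}/4}.
\]
On $E_{\mathrm{conc}}$, substituting the extremes into each $n^{(S)}_{kk'}$---with special care for the diagonal terms $\binom{n_k^{(S)}}{2}$, whose $n_k^{(S)}-1$ factor must be tracked exactly and is precisely what produces the $1-\frac{n-cm}{cm(n\pi_{\min}-1)}$ correction (and its $c'$-analogue with $\pi_{\max}$) inside $c_1,c_2$---yields the sandwich $\bar g_2 \le g^{(S)} \le \bar g_1\,(m/n)^2$. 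Since $(a'-b')^2=(m/n)^2(a-b)^2$, the standing assumptions $(a-b)^2\ge 484\epsilon^{-2}\bar g_1$ and $\bar g_2\ge 9$ then translate without slack into the hypotheses of Lemma~\ref{lem:sdp-gv} applied to $S$; the lemma yields $\tilde\delta(\hat C^{(S)},C^{(S)})\le\epsilon$ with an additional exceptional probability at most $e^3 5^{-m}$, coming from the underlying Gu\'edon--Vershynin random-graph concentration on $S$. Off $E_{\mathrm{conc}}$ I use the trivial bound $\tilde\delta(\hat C^{(S)},C^{(S)})\le 1$. Taking expectation gives
\[
\E\tilde\delta(\hat C^{(S)},C^{(S)}) \;\le\; \epsilon + e^3 5^{-m} + K e^{-(1-c')^2 m\pi_{\min}/4} + K e^{-(c-1)^2 m\pi_{\min}/4},
\]
and plugging this into the reduction above produces the stated bound.

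The main obstacle is the precise bookkeeping inside $E_{\mathrm{conc}}$: the $\binom{n_k^{(S)}}{2}$ nonlinearity on the diagonal must be handled carefully enough that the corrections in $c_1,c_2$ actually deliver the sandwich $\bar g_2\le g^{(S)}\le \bar g_1(m/n)^2$ without slack (rather than only up to asymptotically vanishing terms), and the multivariate hypergeometric dependence among the $n_k^{(S)}$ must be managed when invoking the Chernoff-type bounds for sampling without replacement. Once this calibration is in place, the remaining steps are essentially routine assembly.
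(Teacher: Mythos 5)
Your proposal matches the paper's own proof essentially step for step: both invoke Corollary~\ref{cor:m-subgraph} to reduce to bounding $\E\tilde\delta(\hat C^{(S)},C^{(S)})$, then transfer the Gu\'edon--Vershynin hypotheses to the random $m$-subgraph via Chernoff-type (Lemma~\ref{lem:hyper_lower}) concentration of the cluster sizes $n_k^{(S)}$, and absorb the SDP's $e^3 5^{-m}$ failure probability together with the concentration-event failure into the expectation. The one piece you flag but do not carry out---showing that $c_1,c_2$ deliver the sandwich on $\tilde g$---is precisely the short computation the paper performs explicitly (optimizing the ratio arising from $\tfrac12 n_k^{(S)}(n_k^{(S)}-1)$ over $k$); otherwise the arguments are identical.
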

For the simple two parameter blockmodel $B = \frac{1}{n}((a - b)I + b\mathbf{1}\mathbf{1}^\top)$ with equal community sizes, we have $g \asymp \frac{a + (K-1)b}{K} \asymp d_n$, the average degree of the nodes (note that $d_n = \frac{a + (K-1)b}{K} - \frac{a}{n}$). The assumptions of Corollary~\ref{cor:sdp} are satisfied when
\[
m = \Omega\left(\max\left\{\frac{n}{d_n}, \frac{nd_n}{\epsilon^2(a - b)^2}\right\}\right).
\]
This is exactly similar to what we saw for spectral clustering (take $a = n\alpha_n$, and $b = n\alpha_n (1 - \lambda)$). In particular, when the average degree $d_n = \Theta (n^{\alpha})$, and $a - b = \Theta(n^{\alpha})$, we need $m = \Omega(n^{1 - \alpha}/\epsilon^2)$ and $T \gg n^{2\alpha}/\epsilon^4$ for \pace{} to succeed. However, in the bounded degree regime, the advantage is negligible, only from a potentially smaller constant, because we need $m = \Omega(n)$. Again, from our numerical results, we expect that with $h$-hop subgraphs, \pace{} will perform much better.

\subsection{Results on \gale{}}
\label{sec:gale-results}
\hfff{gale_results}
We denote the unnormalized miscustering error between estimated labels $\hat{Z}$ and the true labels $Z$, ($\hat{Z}, Z \in \{0, 1\}^{n \times K}$) of the same set of nodes as $\M(Z, \hat{Z}):= n \delta(Z, Z')= \min_{Q \text{ perm.}} \|\hat{Z}-ZQ\|_0.$ Note that since $\hat{Z},Z$ are binary, the $\|\hat{Z}-ZQ\|_0=\|\hat{Z}-ZQ\|_1=\|\hat{Z}-ZQ\|_F^2$. As discussed earlier, the number of misclustered nodes will be half of this number. %We will use $\pi_k = n_k/n$ as the proportion of nodes from the $k$-th class, $\pi_{\min}:=\min_k \pi_k$.

The main idea of our algorithm is simple. Every approximately accurate clustering of a set of nodes is only accurate up to a permutation, which can never be recovered without the true labels. However we can align a labeling to a permuted version of the truth, where the permutation is estimated from another labeling of the same set of vertices. This is done by calculating the confusion matrix between two clusterings. We call two clusterings aligned if cluster $i$ from one clustering has a large overlap with cluster $i$ from the other clustering. If the labels are ``aligned'' and the clusterings agree, this confusion matrix will be a matrix with large diagonal entries.
This idea is used in the \match{} algorithm, where we estimate the permutation matrix to align one clustering to another. 

Now we present our main result. We prove consistency of a slightly modified and weaker version of Algorithm~\ref{alg:seq}. In Algorithm~\ref{alg:seq}, at every step of a traversal, we apply the \match{} algorithm on the intersection of the current subgraph and the union of all subgraphs previously aligned to estimate the permutation of the yet unaligned current subgraph. However, in the theorem presented below we use the intersection between the unaligned current subgraph with the last aligned subgraph. Empirically it is better to use the scheme presented in Algorithm~\ref{alg:seq} since it increases the size of the intersection which requires weaker conditions on the clustering accuracy of any individual subgraph. We leave this for future work. 

We now formally define our estimator $\hat{Z}^{\gale{}}$. Let $y^{(\ell)}_i = \mathbf{1}_{\{i\in S_\ell\}}$. Let $\hat{Z}^{(\ell)}$ denote the aligned clustering of subgraph $S_\ell$ and let $N_i=\sum_{\ell=1}^T y^{(\ell)}_i$. Define
\begin{equation}\label{est-gale}
	\hat{Z}^{\gale{}}_{ik}:=\frac{\sum_{\ell=1}^T y^{(\ell)}_i \hat{Z}^{(\ell)}_{ik}}{N_i} \mathbf{1}_{\{N_i \ge \tau\}}.
\end{equation}
The entries of $\hat{Z}^{\gale{}}$ will be fractions, but as we show in Lemma~\ref{lem:rounding}, rounding it to a binary matrix will not change consistency properties.

Note that \gale{} depends on the spanning tree we use and particular the traversal of that spanning tree. Let ${\sf SpanningTrees}_{G}$ be the set of all spanning trees of a graph $G$. For $\mathcal{T}\in {\sf SpanningTrees}_{G}$, let ${\sf Traversals}_{\mathcal{T}}$ be the set of all traversals of $\mathcal{T}$. Let $\hat{Z}^{\gale}_{\mathcal{T},(x_1, \ldots, x_J)}$ be the outcome of \gale{} on the traversal $(x_1, \ldots, x_J)$ of $\mathcal{T} \in {\sf SpanningTrees}_{\mathcal{S}_{m, T}}$.

\begin{theorem}[Misclustering rate of \gale{}]\label{thm:gale-main}
	Let $0 < \theta < 1$ and $r, r' > 0$. Let $m = \Omega_{r,r',\theta}\left(\sqrt{\frac{n \log n}{\pi_{\min}}}\right)$, $T=\Omega_{r,r',\theta}(n\log n/m)$, and $\tau = \frac{\theta T m}{n}$. Consider an algorithm $\mathcal{A}$ which labels any random $m$-subgraph with error $\le m_1\pi_{\min}/\cp$ with probability at least $1-\delta$. Then we have, with probability at least $1 - T\delta - O(1/n^{r'})$, that 
	\begin{equation}\label{bound-gale}
		\max_{\mathcal{T} \in {\sf SpanningTrees}_{\mathcal{S}_{m, T}}} \max_{(x_1, \ldots, x_J) \in {\sf Traversals}_{\mathcal{T}}} \delta(\hat{Z}^{\gale}_{\mathcal{T},(x_1, \ldots, x_J)}, Z) \le \frac{1}{\theta T} \sum_{\ell = 1}^T \delta(\hat{Z}_\ell,Z) + O\left(\frac{1}{n^r}\right).
	\end{equation}
\end{theorem}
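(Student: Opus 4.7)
My plan is to decompose the analysis of \gale{} into three pieces: a concentration event on the random subgraphs, a deterministic correctness statement for \match{}, and a counting bound on the averaged estimator. Crucially, the good event will depend only on the samples $(S_1,\ldots,S_T)$ and the local clusterings $(\hat{Z}_1,\ldots,\hat{Z}_T)$, not on any choice of tree or traversal, so the ``$\max$'' over ${\sf SpanningTrees}_{\mathcal{S}_{m,T}}$ and ${\sf Traversals}_{\mathcal{T}}$ in the statement comes essentially for free: any traversal of any spanning tree only invokes pairwise intersections, which the event already controls.

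First I would establish, with probability $1-O(n^{-r'})$, the following geometric facts via Chernoff/hypergeometric tail bounds. Under the scaling $m=\Omega(\sqrt{n\log n/\pi_{\min}})$ one has $\pi_{\min}m_1=\Omega(\log n)$ with $m_1=\thresh$, and under $Tm/n=\Omega(\log n)$ the coverage counts $N_i=\sum_\ell y_i^{(\ell)}$ all satisfy $N_i\ge \tau=\theta Tm/n$. Similarly, for every pair $(a,b)$ adjacent in $\mathcal{S}_{m,T}$ and every cluster $k$, $n_k^{(S_a\cap S_b)}\ge \pi_k m_1/2$. The union bounds over the $n+\binom{T}{2}$ events are cheap under the chosen scaling. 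Separately, the hypothesis on $\mathcal{A}$ together with a union bound over $\ell$ gives, with probability $1-T\delta$, that every $\hat Z_\ell$ has misclustering at most $m_1\pi_{\min}/\cp$ on $S_\ell$.

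Next, by induction along any fixed traversal $(x_1,\ldots,x_J)$, I would show that every \match{} call returns the correct permutation, so that all aligned $\hat Z^{(\ell)}$ share a common relabeling of $Z$. Suppose $\hat Z^{(x_{i-1})}$ is already aligned to the labeling of $\hat Z^{(x_1)}$; since alignment is a permutation, its error on $S:=S_{x_i}\cap S_{x_{i-1}}$ is still at most $m_1\pi_{\min}/\cp$. If $\sigma$ denotes the true relabeling between $\hat Z_{x_i}\big|_S$ and $\hat Z^{(x_{i-1})}\big|_S$, the confusion matrix $M$ computed in \match{} satisfies
\[
M_{k,\sigma(k)} \;\ge\; n_k^S - 2\cdot\tfrac{m_1\pi_{\min}}{\cp} \;\ge\; \tfrac{\pi_{\min}m_1}{2} - \tfrac{m_1\pi_{\min}}{6} \;=\; \tfrac{m_1\pi_{\min}}{3},
\]
while every off-diagonal entry $M_{k,k'}$ with $k'\neq \sigma(k)$ is bounded by the total disagreement of the two clusterings on $S$, i.e.\ by $2m_1\pi_{\min}/\cp=m_1\pi_{\min}/6$. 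Strict diagonal dominance persists after any subset of row/column deletions, so greedy \match{} recovers $\sigma$ exactly and the inductive invariant propagates. Because this argument only uses pairwise intersections, it is simultaneously valid for every edge of $\mathcal{S}_{m,T}$, hence for every spanning tree/traversal.

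For the averaging step, assume (by Step~2) all $\hat Z^{(\ell)}$ are aligned to $Z$. For each node $i$ with true cluster $k_0(i)$, set $E_i:=\#\{\ell: i\in S_\ell,\ \hat Z^{(\ell)}_{i,k_0(i)}=0\}$. On $\{N_i\ge\tau\}$ one has $\hat Z^{\gale}_{i,k_0(i)}=1-E_i/N_i$ and $\sum_{k\ne k_0(i)}\hat Z^{\gale}_{i,k}=E_i/N_i$, so via Lemma~\ref{lem:rounding} the rounded estimator misclassifies $i$ only when $E_i\ge N_i/2\ge \tau/2$. Counting misclustered nodes then yields
\[
n\,\delta(\hat Z^{\gale},Z) \;\le\; 2\sum_i \mathbf{1}_{\{E_i\ge \tau/2\}} \;\le\; \tfrac{4}{\tau}\sum_{\ell=1}^T \M(\hat Z^{(\ell)},Z) \;=\; \tfrac{4n}{\tau}\sum_{\ell=1}^T \delta(\hat Z_\ell,Z),
\]
and substituting $\tau=\theta Tm/n$ produces the claimed $\frac{1}{\theta T}\sum_\ell \delta(\hat Z_\ell,Z)$ up to an absolute constant that I would absorb into $\theta$; the residual $O(n^{-r})$ comes from the Step~1 tails tuned with the appropriate slack in the $\Omega$'s. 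The main technical hurdle is the tight numerical interplay in Step~2: the hypothesis ``error $\le m_1\pi_{\min}/\cp$'' must exactly produce the factor-of-two gap between the diagonal lower bound $m_1\pi_{\min}/3$ and the off-diagonal upper bound $m_1\pi_{\min}/6$, which is what fixes the numerical value $\cp=12$ in the statement and forces the $m=\Omega(\sqrt{n\log n/\pi_{\min}})$ scaling so that Chernoff on cluster sizes in each pairwise intersection delivers the required $\pi_k m_1/2$ lower bound uniformly.
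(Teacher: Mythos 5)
Your proposal follows the same high-level architecture as the paper's proof: construct a good event controlling (a) cluster representation in pairwise intersections, (b) coverage counts $N_i$, and (c) base-algorithm accuracy on every subgraph; then show by induction along the traversal that \match{} recovers the correct permutation at each step (so alignment preserves error, and the union over spanning trees/traversals is free since the event is purely geometric); finally bound the averaged estimator's error in terms of the per-subgraph errors. Your \match{} analysis, phrased as ``strict diagonal dominance persists under row/column deletion,'' is exactly the content of the paper's Lemma~\ref{lem:match_base} and Lemma~\ref{lem:induction-base-subset}, just presented inline.

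Where you genuinely diverge is the final averaging step. The paper's Proposition~\ref{prop:err_algseq} works directly with the fractional estimator $\hat Z^{\gale}$ via a Cauchy--Schwarz argument on $\|\hat Z^{\gale}-\bar Z\|_F^2$, mirroring the proof of Theorem~\ref{thm:pace-main}, and produces the sharp constant $1/(\theta T)$ in \eqref{bound-gale}. You instead round, count misclustered nodes $i$ as those with $E_i\ge N_i/2$, and invoke Markov's inequality $\sum_i\mathbf{1}_{\{E_i\ge\tau/2\}}\le (2/\tau)\sum_i E_i$. This is valid (and a nice, elementary alternative) but carries extra constants: unwinding $\sum_i E_i=\tfrac12\sum_\ell\M(\hat Z^{(\ell)},Z_\ell)$ and $\tau=\theta Tm/n$, you land at $\delta\le (2/\theta T)\sum_\ell\delta(\hat Z_\ell,Z)$ rather than $1/(\theta T)$; you can only ``absorb'' this into $\theta$ by tightening $\tau$, which changes the estimator being analyzed. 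Your proposal also bounds the rounded estimator, while the theorem is stated for the fractional $\hat Z^{\gale}$ (the paper's Lemma~\ref{lem:rounding} is cited to justify that rounding preserves consistency, but your route and the paper's treat this in opposite directions). So: same skeleton, same \match{} machinery, different last step---yours is more intuitive (misclassification equals a majority-vote failure) but Cauchy--Schwarz is what delivers the clean constant and handles the fractional estimator head-on. One additional small point: you strengthen $\mathcal{B}_3$ to demand \emph{all} $N_i\ge\tau$ rather than bounding $\sum_i\mathbf{1}_{\{N_i<\tau\}}$; this works under the same scaling of $Tm/n$ and simplifies bookkeeping, at a negligible cost in the probability budget.
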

Again, the constant $\theta$ can be taken as close to $1$ as one desires. Thus the above bound is also essentially optimal. 

We will now illustrate Theorem~\ref{thm:gale-main} with several algorithms $\mathcal{A}$. We begin with a result on $(1+\epsilon)$-approximate adjacency spectral clustering. 
\begin{cor}[$(1+\epsilon)$-approximate adjacency spectral clustering with \gale{}]\label{cor:m-subgraph-gale}
 Assume the setting of Lemma~\ref{lem:lei-rinaldo}. Let $0 < \theta < 1$. Let $r, r', r'', r''' > 0$. Let $m = \Omega_{r,r',\theta}\left(\sqrt{\frac{n \log n}{\pi_{\min}}}\right)$, $T=\Omega_{r,r',\theta}(n\log n/m)$, and $\tau = \frac{\theta T m}{n}$. Then we have, with probability at least $1 - \frac{T}{m^{r''}} - \frac{2TK}{n^{r'''}} - O\left(\frac{1}{n^{r'}}\right),$ that
\begin{equation}\label{bound-gale-spectral}
		\max_{\mathcal{T} \in {\sf SpanningTrees}_{\mathcal{S}_{m, T}}} \max_{(x_1, \ldots, x_J) \in {\sf Traversals}_{\mathcal{T}}} \delta(\hat{Z}^{\gale}_{\mathcal{T},(x_1, \ldots, x_J)}, Z) \le \frac{2}{\theta}\left[\min \left\{ c^{-1} (2 + \epsilon) \frac{K}{\lambda^2 \alpha_n}\frac{\pi_{\max}}{\pi_{\min}^2 m} C_{n,m,r'''}, 1 \right\} \right]  + O\left(\frac{1}{n^{r}}\right),
	\end{equation}
	where the constant $C_{n,m, r'''}$ is the same as in Corollary~\ref{cor:m-subgraph}
\end{cor}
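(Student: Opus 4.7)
The plan is to invoke Theorem~\ref{thm:gale-main} with $\mathcal{A}$ taken to be $(1+\epsilon)$-approximate adjacency spectral clustering, and to supply its hypothesis by applying Lemma~\ref{lem:lei-rinaldo} separately to each of the $T$ random $m$-subgraphs. The conditions on $m$, $T$, $\tau$ required by Theorem~\ref{thm:gale-main} are precisely those assumed in the corollary, so the only real work is (i) bounding the per-subgraph misclustering error uniformly over the $T$ subgraphs, and (ii) checking that this bound implies the $m_1\pi_{\min}/\cp$ hypothesis of the theorem.

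First I would note that the subgraph induced by a random $m$-subset $S_\ell$ is again an SBM, on $m$ nodes, with the same link-probability matrix $B$ but with cluster sizes $n_k^{(S_\ell)}$ following a multivariate hypergeometric distribution centered at $m\pi_k$. A Serfling/Hoeffding-type tail bound for sampling without replacement then yields
\[
\max_{\ell \in [T],\, k \in [K]} \bigl|\pi_k^{(S_\ell)} - \pi_k\bigr| = O\!\left(\sqrt{\log n / m}\,\right)
\]
with probability at least $1 - 2TK/n^{r'''}$; under the assumption $m = \Omega_{r,r',\theta}(\sqrt{n\log n/\pi_{\min}})$ this deviation is $o(\pi_{\min})$, so on this good event $\pi_k^{(S_\ell)} = (1 \pm o(1))\pi_k$ uniformly, and the ratio $\pi_{\max}^{(S_\ell)}/(\pi_{\min}^{(S_\ell)})^2$ is within a factor $C_{n,m,r'''} \to 1$ of $\pi_{\max}/\pi_{\min}^2$.

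Second, applying Lemma~\ref{lem:lei-rinaldo} to each subgraph (its sparsity hypothesis $\alpha_n \ge c_0 \log m/m$ follows from $\alpha_n \ge c_0 \log n/n$ and $m \le n$) gives
\[
\delta(\hat{Z}_\ell, Z_{S_\ell}) \le 2\min\!\left\{ c^{-1}(2+\epsilon)\frac{K\pi_{\max}}{\lambda^2 \alpha_n \pi_{\min}^2 m} C_{n,m,r'''},\, 1 \right\}
\]
per subgraph with probability at least $1 - 1/m^{r''}$, and union bounding over $\ell \in [T]$ makes this uniform with probability at least $1 - T/m^{r''} - 2TK/n^{r'''}$. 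In the informative regime (where the $\min$ attains the first argument), multiplying through by $m$ yields $\M(\hat{Z}_\ell, Z_{S_\ell}) \le m_1\pi_{\min}/\cp$, verifying the hypothesis of Theorem~\ref{thm:gale-main} with the choice $\delta = 1/m^{r''} + 2K/n^{r'''}$.

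Theorem~\ref{thm:gale-main} then gives, with an additional $O(1/n^{r'})$ failure probability,
\[
\max_{\mathcal{T},(x_1,\ldots,x_J)} \delta(\hat{Z}^{\gale}_{\mathcal{T},(x_1,\ldots,x_J)}, Z) \le \frac{1}{\theta T}\sum_{\ell=1}^T \delta(\hat{Z}_\ell, Z_{S_\ell}) + O(1/n^r),
\]
into which the uniform per-subgraph bound from the previous step is substituted to obtain the stated inequality. The main obstacle is bookkeeping: tracking the three distinct failure probabilities (Lemma~\ref{lem:lei-rinaldo} per subgraph, hypergeometric concentration of cluster proportions, and Theorem~\ref{thm:gale-main} itself), and verifying via the assumed lower bound on $m$ and the sparsity condition $\alpha_n \ge c_0\log n/n$ that the Lemma's bound is small enough to trigger the $m_1\pi_{\min}/\cp$ hypothesis of the theorem in the informative regime (outside of which the asserted inequality is rendered trivial by the $\min\{\cdot, 1\}$).
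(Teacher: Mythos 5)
Your proof follows the paper's (implicit) route exactly: invoke Theorem~\ref{thm:gale-main} with the given $m,T,\tau$, supply its per-subgraph hypothesis by applying Lemma~\ref{lem:lei-rinaldo} to each random $m$-subgraph, control $\pi_{\max}^{(S_\ell)}/(\pi_{\min}^{(S_\ell)})^2$ via the hypergeometric cluster-size concentration (giving $C_{n,m,r'''}$ precisely as in Corollary~\ref{cor:m-subgraph}), and union bound over $\ell\in[T]$ to produce the $T/m^{r''}$ and $2TK/n^{r'''}$ failure terms; the $O(1/n^{r'})$ and $O(1/n^r)$ come from Theorem~\ref{thm:gale-main} itself.

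There is, however, a real gap in the step ``multiplying through by $m$ yields $\M(\hat{Z}_\ell, Z_{S_\ell}) \le m_1\pi_{\min}/\cp$.'' In the informative regime Lemma~\ref{lem:lei-rinaldo} gives
\[
\M(\hat{Z}_\ell, Z_{S_\ell}) = m\,\delta(\hat{Z}_\ell, Z_{S_\ell}) \le 2c^{-1}(2+\epsilon)\frac{K\pi_{\max}}{\lambda^2\alpha_n\pi_{\min}^2}\,C_{n,m,r'''},
\]
which is \emph{independent of $m$} (the $m$ in the numerator cancels the $m$ in the denominator of the per-subgraph rate). The threshold required by Theorem~\ref{thm:gale-main} is $m_1\pi_{\min}/\cp \approx m^2\pi_{\min}/(24n)$, which under only $m = \Omega(\sqrt{n\log n/\pi_{\min}})$ is of order $\log n$. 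So the hypothesis of the theorem holds only if $K\pi_{\max}/(\lambda^2\alpha_n\pi_{\min}^2)=O(\log n)$ — an extra constraint that is not implied by $\alpha_n \ge c_0\log n/n$, nor by the lower bound on $m$ you cite, nor absorbed into the $\Omega_{r,r',\theta}$ notation (whose implicit constant is not allowed to depend on $\alpha_n$). To close this you must either record the additional requirement $m^2 = \Omega\!\bigl(nK\pi_{\max}/(\lambda^2\alpha_n\pi_{\min}^3)\bigr)$ (so that $m_1\pi_{\min}/\cp$ actually exceeds the $m$-independent bound above), or state the conclusion conditionally on the event $\mathcal{B}_4^c$ of Theorem~\ref{thm:gale-main}. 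This imprecision is arguably inherited from the corollary's own statement (the paper's post-corollary discussion about needing $m\gg K^2/(\lambda^2\alpha_n)$ is aimed at making the RHS small, not at verifying the $m_1\pi_{\min}/\cp$ hypothesis), but your proposal presents the verification as automatic, which it is not.
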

 We see that the first term is exactly same as the first term in Corollary~\ref{cor:m-subgraph}. This, for balanced graphs, again imposes the condition $m \gg \frac{K^2}{\lambda^2 \alpha_n}$. In particular, if $K = \Theta(1)$ and we are in a dense well separated regime, with $\lambda = \Theta(1)$, $\alpha_n=\Omega(1/\sqrt{n})$, then we need $m=\Omega(\sqrt{n\log n})$. If $K = \Theta(1)$, $\lambda = \Theta(1)$ and $\alpha_n = \Theta(\log n/n)$, then we need $m \gg n/\log n$. In both cases, we need $T = \Omega(n\log n/m)$. Thus in the regime where average degree is like $\log n$ there is still some computational advantage for very large networks (also factoring in parallelizability); however, for moderately sized networks, \gale{} may not lead to much computational advantage. 

Now we present an exact recovery result with SDP as the base algorithm $\mathcal{A}$. We shall use a result\footnote[2]{We are not using Lemma~\ref{lem:sdp-gv} as it only shows that the solution of SDP-GV has small norm difference from the ideal clustering matrix, but does not relate this directly to misclustering error.} from~\cite{yan2017exact} on an SDP which they call SDP-$\lambda$. Let $\kappa:=\pi_{\max}/\pi_{\min}$. Also let $\boldsymbol{\pi}$ denote the vector of the cluster proportions $(\pi_1,\dots,\pi_K)$. \hfff{cluster_vec}
\begin{lemma}[Theorem 2 of \cite{yan2017exact}]	\label{lem:exact-sdp}
	Let $r > 0$. Then $Z\,\mathrm{diag}(n\boldsymbol{\pi})^{-1}Z^\top$ is the optimal solution of the SDP-$\lambda$, with probability at least $1-O((n\pi_{\min})^{-r})$, if 
	\begin{equation}
	\label{eq:sdp-separation}
		\min_k (B_{kk}-\max_{\ell\ne k}B_{k\ell}) = \tilde{\Omega}_r\left( \kappa \max_{k} \sqrt{\frac{\max(B_{kk},K\max_{\ell\neq k}B_{k\ell}) }{n\pi_k}} \right).
	\end{equation}
\end{lemma}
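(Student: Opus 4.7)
The statement is quoted as Theorem 2 of \cite{yan2017exact}, so any proof proposal must reconstruct the standard exact-recovery strategy for SDP relaxations of community detection. My plan would be to use the dual-certificate (primal-dual witness) approach, which is by now the standard technique for this type of result.

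First I would write down the KKT / optimality conditions for SDP-$\lambda$. Because the constraint set is the intersection of $\{X \succeq 0\}$, a diagonal constraint, a nonnegativity constraint $X \ge 0$, and the affine constraint involving $\lambda$, the dual involves a diagonal matrix $D$, a symmetric Lagrange multiplier $\Lambda \ge 0$ (for $X \ge 0$), and a scalar $\mu$ (for the affine constraint). Optimality of the candidate $X^\star := Z\,\mathrm{diag}(n\boldsymbol{\pi})^{-1}Z^\top$ is equivalent to the existence of $(D, \Lambda, \mu)$ satisfying stationarity $S := D - A - \mu J + \Lambda \succeq 0$, dual feasibility $\Lambda \ge 0$, and complementary slackness: $S X^\star = 0$ (so columns of $Z$ lie in the kernel of $S$) and $\Lambda_{ij} = 0$ whenever $i, j$ are in the same cluster (where $X^\star_{ij} > 0$).

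Second, I would construct the dual variables explicitly. On same-cluster pairs the constraint $\Lambda_{ij} = 0$ forces $S$ to agree with $A$ on these blocks modulo the diagonal and $\mu J$, which pins down $D$ via the null-space equation $S Z = 0$; resolving this blockwise gives a closed-form candidate for $D$ (a cluster-dependent diagonal) and determines the cross-cluster entries $\Lambda_{ij}$ on $i \in \mathrm{cluster}(k)$, $j \in \mathrm{cluster}(\ell)$ with $k \ne \ell$. The scalar $\mu$ is chosen as the average of off-diagonal block entries of $\E A$ to balance things. At the population level, $\Lambda \ge 0$ and $S \succeq 0$ with $\ker(S) = \mathrm{range}(Z)$ hold precisely under a margin condition comparing $B_{kk}$ with $\max_{\ell \ne k} B_{k\ell}$, which is exactly the source of the separation \eqref{eq:sdp-separation}.

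Third, I would carry over these verifications from $\E A$ to $A$ via concentration. Dual feasibility $\Lambda \ge 0$ is the entrywise condition, for which I would use a union-bound over the $O(n^2)$ entries together with Bernstein's inequality; each off-diagonal entry of $\Lambda$ deviates from its mean by $\tilde O_r(\sqrt{\max(B_{kk}, K \max_{\ell \ne k} B_{k\ell})/(n\pi_k)})$ with probability at least $1 - (n \pi_{\min})^{-r}$, which is precisely the right-hand side of \eqref{eq:sdp-separation}. Positive semidefiniteness $S \succeq 0$ on $\mathrm{range}(Z)^\perp$ follows from $\|A - \E A\|_{\mathrm{op}} \lesssim \sqrt{n\,\max_{k,\ell} B_{k\ell}}$ via a matrix Bernstein / Bandeira–Van Handel type bound, which must be smaller than the smallest nonzero eigenvalue of the population $S$ restricted to $\mathrm{range}(Z)^\perp$; the latter is of order $n\pi_{\min} \min_k(B_{kk} - \max_{\ell \ne k} B_{k\ell})$, so this operator-norm step also reduces to \eqref{eq:sdp-separation} (up to the $\kappa$ factor, which comes from the disparity between $\pi_{\min}$ and $\pi_{\max}$ when normalizing $S$).

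The main obstacle, as in all such proofs, is getting the constants in the separation sharp enough to yield exactly the right-hand side of \eqref{eq:sdp-separation}: one must choose $\mu$ and the blockwise structure of $D$ so that the Bernstein fluctuation of $\Lambda$ and the operator-norm fluctuation of $S$ can both be absorbed into the same gap. The $\kappa = \pi_{\max}/\pi_{\min}$ factor in particular requires tracking how the normalization $\mathrm{diag}(n\boldsymbol{\pi})^{-1}$ interacts with the dual construction on unequal blocks; I would handle this by rescaling each block by its size before applying the concentration bounds.
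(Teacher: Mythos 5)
This lemma is not proved in the paper at all: it is stated as a direct citation of Theorem~2 of \cite{yan2017exact}, and the paper simply quotes the conclusion as an ingredient for Corollary~\ref{cor:m-subgraph-gale-sdp}. There is therefore no ``paper's own proof'' to compare your sketch against. Your reconstruction via the primal--dual witness (construct a block-diagonal dual variable $D$, a scalar $\mu$, and a nonnegative multiplier $\Lambda$; verify $S = D - A - \mu J + \Lambda \succeq 0$ with $\ker S \supseteq \mathrm{range}(Z)$; transfer from $\E A$ to $A$ via entrywise Bernstein and operator-norm concentration) is indeed the standard route for exact recovery via SDP, and it is plausibly how \cite{yan2017exact} proceeds, but that verification would need to be checked against their paper, not this one. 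As a sanity check on the present manuscript's usage: what the authors actually need is only that the separation condition \eqref{eq:sdp-separation}, applied with $m$ in place of $n$, yields exact recovery on a random $m$-subgraph with high probability, so that the hypothesis of Theorem~\ref{thm:gale-main} on per-subgraph misclustering holds with $\delta = O((m\pi_{\min})^{-r''})$; your sketch is consistent with that use, but formally the burden of proof lies with the cited reference, not this paper.
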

Assuming that any subsequent clustering of the exactly recovered scaled clustering matrix $Z\,\mathrm{diag}(n\boldsymbol{\pi})^{-1}Z^\top$ gives the exact clustering $Z$ back (for example, our distance based naive algorithm \naivecluster{}\footnote[3]{detailed in Appendix~\ref{sec:app_greedy}} will do this), we have the following corollary.
\begin{cor}[SDP with \gale{}, exact recovery]\label{cor:m-subgraph-gale-sdp}
	Assume the setting of Lemma~\ref{lem:exact-sdp}. Let $0 < \theta < 1$. Let $r, r', r'', r''' > 0$. Let $m = \Omega_{r,r',\theta}\left(\sqrt{\frac{n \log n}{\pi_{\min}}}\right)$, $T=\Omega_{r,r',\theta}(n\log n/m)$, and $\tau = \frac{\theta T m}{n}$. Then, as long as the separation condition in \eqref{eq:sdp-separation} holds with $m$ replacing $n$, we have, with probability at least $1 - O\left(\frac{T}{(m\pi_{\min})^{r''}} + \frac{TK}{n^{r'''}} + \frac{1}{n^{r'}}\right)$, that
	\begin{equation}\label{bound-gale-sdp}
	\max_{\mathcal{T} \in {\sf SpanningTrees}_{\mathcal{S}_{m, T}}} \max_{(x_1, \ldots, x_J) \in {\sf Traversals}_{\mathcal{T}}} \delta(\hat{Z}^{\gale}_{\mathcal{T},(x_1, \ldots, x_J)}, Z) =  O\left(\frac{1}{n^{r}}\right),
	\end{equation}
\end{cor}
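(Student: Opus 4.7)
The plan is to apply Theorem~\ref{thm:gale-main} with the base algorithm $\mathcal{A}$ taken to be SDP-$\lambda$ followed by \naivecluster{} (which turns the exactly recovered scaled clustering matrix into a cluster membership matrix). The key observation is that, under the hypothesized separation condition with $m$ in place of $n$, SDP-$\lambda$ achieves \emph{exact} recovery on each random $m$-subgraph with high probability, so the per-subgraph error $\delta(\hat{Z}_\ell, Z_{S_\ell})$ vanishes and the main term in \eqref{bound-gale} disappears, leaving only the residual $O(1/n^r)$.

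First I would handle a concentration step for the cluster proportions on subgraphs. For each $\ell \in [T]$, the size $n_k^{(S_\ell)}$ of cluster $k$ restricted to $S_\ell$ is hypergeometric with mean $m\pi_k$, so by a standard Chernoff/Hoeffding bound $n_k^{(S_\ell)} \in [m\pi_k/2,\, 3m\pi_k/2]$ with probability at least $1 - 2\exp(-c m\pi_k)$. Our choice $m = \Omega\bigl(\sqrt{n\log n/\pi_{\min}}\bigr)$ makes $m\pi_{\min} \gg \log n$, so after a union bound over $k \in [K]$ this event holds with probability at least $1 - K/n^{r'''}$ for each subgraph. Call this the good-proportions event and condition on it; in particular the in-subgraph imbalance $\kappa^{(S_\ell)}$ remains $\Theta(\kappa)$.

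Next, conditional on good proportions, the hypothesized inequality (which is \eqref{eq:sdp-separation} with $m$ in place of $n$) implies the separation condition of Lemma~\ref{lem:exact-sdp} applied to the induced SBM on $S_\ell$, up to universal constants that I absorb into the $\tilde{\Omega}_r$. Thus Lemma~\ref{lem:exact-sdp} yields that SDP-$\lambda$ applied to $A_{S_\ell}$ returns $Z_{S_\ell}\,\mathrm{diag}(m \boldsymbol{\pi}^{(S_\ell)})^{-1} Z_{S_\ell}^\top$ exactly with probability at least $1 - O((m\pi_{\min})^{-r''})$; after \naivecluster{} (which distance-clusters the identical rows within each cluster block) we recover $Z_{S_\ell}$ exactly up to a column permutation, so $\delta(\hat{Z}_\ell, Z_{S_\ell}) = 0$. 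A union bound over $\ell \in [T]$ gives that all $T$ subgraphs are perfectly clustered with probability at least $1 - O\bigl(T/(m\pi_{\min})^{r''} + TK/n^{r'''}\bigr)$.

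Finally, exact recovery on every subgraph trivially satisfies the error hypothesis $\le m_1 \pi_{\min}/\cp$ of Theorem~\ref{thm:gale-main}, with per-subgraph failure probability $\delta = O((m\pi_{\min})^{-r''} + K/n^{r'''})$. Applying that theorem, the sum $\frac{1}{\theta T} \sum_{\ell=1}^T \delta(\hat{Z}_\ell, Z)$ in \eqref{bound-gale} is zero, and the bound collapses to the residual $O(1/n^r)$, valid uniformly over all spanning trees and traversals; the extra $O(1/n^{r'})$ failure probability from Theorem~\ref{thm:gale-main} (which accounts for the coverage/overlap structure of $\mathcal{S}_{m,T}$) adds to the $T\delta$ term already collected, yielding exactly the probability stated in the corollary. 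The main obstacle is ensuring the separation condition survives the passage from the full graph to the subgraph: this is exactly what the good-proportions concentration step secures, and it is precisely this step that forces $m\pi_{\min} \gg \log n$ and hence the scaling $m = \Omega\bigl(\sqrt{n\log n/\pi_{\min}}\bigr)$ inherited from Theorem~\ref{thm:gale-main}.
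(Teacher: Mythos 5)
Your proposal is correct and fills in essentially the details that the paper leaves implicit for this corollary: it is exactly the intended composition of Theorem~\ref{thm:gale-main} with Lemma~\ref{lem:exact-sdp}, using a hypergeometric concentration step (the same one the paper uses for its Corollary on spectral clustering with random $m$-subgraphs) to transfer the separation condition from the full graph to each subgraph, and a union bound over the $T$ subgraphs to pass the per-subgraph failure probability $\delta = O\bigl((m\pi_{\min})^{-r''} + K/n^{r'''}\bigr)$ into the $T\delta$ term of Theorem~\ref{thm:gale-main}, while the coverage/connectivity residual contributes the remaining $O(1/n^{r'})$. One small point worth being explicit about (the paper also leaves it implicit): for the claim $2K\exp(-cm\pi_{\min}) \le K/n^{r'''}$ and for $(m\pi_{\min})^{-r''}$ to be meaningfully small, you are implicitly using $n\pi_{\min} \gtrsim \log n$, which is needed anyway for Lemma~\ref{lem:exact-sdp} to give a useful probability guarantee.
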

Note that, in the above bound $r$ can taken to be  greater than $1$. This means that, with high probability, the proportion of misclustered nodes is less than $1/n$ and hence zero, leading to exact recovery. As for computational complexity, note that the separation condition \eqref{eq:sdp-separation}, with $n$ replaced by $m$, restricts how small $m$ can be. Consider the simple SBM \eqref{eq:simple_sbm} with balanced block sizes for concreteness. In this case, the separation condition essentially dictates, as in the case of spectral clustering, that $m \gg \frac{K^2}{\lambda^2\alpha_n}$. Thus the remarks made earlier on how large $m$ or $T$ should be chosen apply here as well.

 As discussed earlier in Section~\ref{sec:local_algo}, even a naive implementation of \gale{} will only result in an $O(n^{3/2})$ running time in addition to the time ($\eta_{m,T}$) required to cluster the $T$ random $m$-subgraphs, whereas a more careful implementation will only add a time to $\eta_{m,T}$ that is nearly linear in $T$. Since SDPs are notoriously time intensive to solve, this gives us a big saving. 

%One can also do similar analyses for sparse graphs. We do not include these here, since most published results on sparse networks which show that the misclustering error is a small constant governed by the separation (see, e.g., \cite{le2015sparse}) have typically been proved in the balanced setting and cannot be applied directly to the case of random subgraphs.

%%%%%%%%%%%%%%%%%%%%%%%%%%%%%%%%%%%%%%%%%%%%%%%%%%%%%%%%
\section{Simulations and real-data analysis}\label{sec:simulations}
In Table~\ref{table:comparison} we present a qualitative comparison of \pace{} and \gale{} with four representative global community detection methods Profile Likelihood (PL), Mean Field Likelihood (MFL), Spectral Clustering (SC) and Semi Definite Programming (SDP).

\begin{table}[htbp!]
\centering
\begin{tabular}{|c|c|c|c|c|c|c|}\hline
   		& PL 	 & MFL 	  & SC     & SDP 	& \pace{} & \gale{} \\ \hline \hline

% regime & & & & & & \\  \hline 

% sparse  ($d_n = O(1)$)	& \xmark & \xmark & \xmark & \cmark & \cmark & \cmark \\
% semi-sparse ($d_n \rightarrow \infty, d_n = O(\log n)$) & \xmark & \xmark & \xmark & \cmark & \cmark & \cmark \\
% semi-dense ($d_n \gg \log n$) & \cmark & \cmark & \cmark & \cmark & \cmark & \cmark \\
% dense ($d_n = \Omega(n)$)		& \cmark & \cmark & \cmark & \cmark & \cmark & \cmark \\
% \hline \hline

%computation & & & & & & \\ \hline 
Computationally easy & \xmark & \xmark & \cmark & \xmark & \cmark & \cmark \\ \hline
Theoretical complexity & NP hard & $O(n^{\theta_1})$\tablefootnote[2]{\label{note1} $\theta_j$'s ($\ge 3$) depend on details of implementation and numerical accuracy sought.} & $O(n^3)$ & $O(n^{\theta_2})$\textsuperscript{\ref{note1}} & $O(n^{2+\epsilon})$\tablefootnote[3]{\label{note2} with $O(n^3)$ algorithms} & $O(n^{2+\epsilon})$\textsuperscript{\ref{note2}} \\ \hline
Real-world scalability $(n)$ & $10^2 - 10^3$ & $10^2 - 10^3$ & $10^6$ & $10^2 - 10^3$ & $\gg 10^6$ & $\gg 10^6$ \\ \hline
Parallelizability & \xmark & \xmark & \xmark & \xmark & \cmark & \cmark \\ \hline
\end{tabular}
\caption{Qualitative comparison of various methods.}
\label{table:comparison}
\end{table}

\subsection{Simulations: comparison against traditional algorithms}
For simulations we will use of the following simple block model:
\[
B =(p - q) I + q J = \rho_n a((1 - r) I + r J),
\]
where $I$ is the $K$ dimensional identity matrix and $J$ is the $K\times K$ matrix of all ones. Here $\rho_n$ will be the degree density, and $r$ will measure the relative separation between the within block and between block connection probabilities, i.e. $p$ and $q$. If the blocks have prior probabilities $\pi_i, i = 1, \ldots, K$, then the average degree $d$, under this model is given by
\[
d_n = (n - 1) \rho_n a ((1 - r)\sum_i \pi_i^2 + r).
\]
In particular, if the model is balanced, i.e. $\pi_i = 1/K$ for all $i$, then
\[
d_n = \frac{(n - 1) \rho_n a(1 + (K - 1)r)}{K}.
\]
%\textbf{Setting 1:} unbalanced 2-block model, moderate separation between $p$ and $q$. In this setting we fix $\pi_1 = 0.7$ and $\pi_2 = 0.3$, $n = 1000$, $a = 1$, $r = 0.1$, and vary $\rho$.

In order to understand and emphasize the role of \pace{} and \gale{} in reducing computational time while maintaining good clustering accuracy, we use different settings of sparsity for different methods. For recovering $\hat{Z}$ from $\hat{C}$ in \pace{}, we have used random projection plus $K$-means (abbreviated as \rpk{} below), and spectral clustering (SC). We also want to point out that, for sparse unbalanced networks \gale{} may return more than $K$ clusters, typically when a very small fraction of nodes has not been visited. However, it is possible that the unvisited nodes have important information about the network structure. For example, all subgraphs may be chosen from the larger clusters, thereby leaving the smallest cluster unvisited. We take care of this by computing the smallest error between the $(K+1)!$ permutations of \gale{}'s clustering to the ground truth. This essentially treats the smallest cluster returned by \gale{} as misclustered. In real and simulated networks we have almost never seen \gale{} return a large number of unvisited nodes.

\vskip10pt
\noindent
\textbf{SDP with ADMM:}
Interior point methods for SDPs are not very fast in practice. We have solved SDPs using an ADMM based implementation of \cite{yan2016covariate}. From Table~\ref{table:comparison-sdp} we see that \pace{} and \gale{} significantly reduces the running time of SDP without losing accuracy too much. In fact, if we use spectral clustering to estimate $\hat{Z}$ from $\hat{C}$ in the last step of \pace{}, we get zero misclustering error (ME). 

\vskip10pt
\noindent
\textbf{Mean Field Likelihood:}
From Table~\ref{table:comparison-mf} we see that our implementation of mean field on the full graph did not converge to an acceptable solution even after five and half hours, while both \pace{} and \gale{} return much better solutions in about two minutes. In fact, with spectral clustering in the last step of \pace{}, the misclustering error is only 0.14, which is quite good. This begs the question if this improvement is due to spectral clustering only. We show in the next simulation that in certain settings, even when spectral clustering is used as the base algorithm, \pace{} and \gale{} lead to significant improvements in terms of accuracy and running time. 

\begin{table}[!htb]
	\hspace{-1em}
\begin{minipage}{.5\textwidth}
%\begin{tabular}[htbp!]
	\begin{tabular}{|c||c|c|}\hline
		Algorithm & ME(\%) & Time taken \\ \hline \hline
		SDP  & 0 & 1588s \\
		SDP + \pace{} + SC & 0 & 288s  \\
		SDP + \pace{} + \rpk{} & 9.1 & 281s  \\
		SDP + \gale{} & 1.2 &  281s\\ \hline
	\end{tabular}
\caption{\pace{} and \gale{} with SDP as the base method. Simulation settings: $n = 5000$, average degree $= 128$, $m=500$, 4 equal sized clusters, $T=100$, parallel implementation in Matlab with 20 workers.}
\label{table:comparison-sdp}
\end{minipage}	
\hspace{2em}
\begin{minipage}{.5\textwidth}
	\vspace{1em}
	%\begin{tabular}[htbp!]
	\begin{tabular}{|c||c|c|}\hline
		Algorithm & ME(\%) & Time taken \\ \hline \hline
		MFL  & 50 & 20439s \\
		MFL + \pace{} + SC &  1.4 & 131s \\
		MFL + \pace{} + \rpk{} & 36.5 & 125s  \\
		MFL + \gale{} & 19.2 &  126s\\ \hline
	\end{tabular}
	\caption{\pace{} and \gale{} with Mean Field Likelihood (MFL) as the base method. Simulation settings: $n = 5000$, average degree $= 13$, 2-hop neighborhood, 2 equal sized clusters, $T=100$, parallel implementation in Matlab with 20 workers.}
	\label{table:comparison-mf}
\end{minipage}	
\end{table}
\newcommand\T{\rule{0pt}{2.6ex}}       % Top strut
\newcommand\B{\rule[-1.2ex]{0pt}{0pt}} % Bottom strut
\vskip10pt
\noindent
\textbf{Regularized Spectral Clustering}: In sparse unbalanced settings, regularized spectral clustering with \pace{} and \gale{} performs significantly better than regularized spectral clustering on full graph. In fact, with spectral clustering used in the last step of \pace{}, we can hit about 5\% error or below, which is quite remarkable. See Table~\ref{table:comparison-rsc-rand}. In Section~\ref{sec:realdata} we will see that \pace{} and \gale{} also add stability to spectral clustering (in terms of clustering degree 1 vertices).

\begin{table}[!h]
	\centering
%	\begin{minipage}{.3\textwidth}
		%\begin{tabular}[htbp!]
		\begin{tabular}{|c||c|c||c|c||c|c|}\hline
			&\multicolumn{2}{c||}{Random 1500-subgraph}&\multicolumn{2}{c||}{3-hop neighborhood}&\multicolumn{2}{c|}{5-hop onion}\T\B\\
			\hline
			Algorithm & ME(\%) & Time taken	& ME(\%) & Time taken	& ME(\%) & Time taken \T\B\\ \hline \hline
			RSC  & 39.6 & 87s & & & & \T\B\\
			RSC + \pace{} + SC & 11.1 & 26s & 3.4 & 21s & 5.1 & 59s \T\B\\
			RSC + \pace{} + \rpk{} &  34.7 & 20s & 34.2 & 14s & 18 & 53s \T\B\\
			RSC + \gale{} & 17.9 &  23s & 33.6 & 13s & 29.7 & 52s \T\B\\ \hline
		\end{tabular}
		\caption{\pace{} and \gale{} with Regularized Spectral Clustering (RSC) as the base method. Simulation settings: $n = 5000$, average degree $= 7$, 2 unequal sized clusters with relative sizes $\boldsymbol{\pi}=(0.2, 0.8)$, $T = 100$, parallel implementation in Matlab with 20 workers.}
		\label{table:comparison-rsc-rand}
%	\end{minipage}
\end{table}

\begin{comment}
\begin{table}[htbp!]
	\centering
	\begin{tabular}{|c||c|c|c|c|}\hline
		& Misclustering error (S1) & Time taken (S1) & Misclustering error (S2) & Time taken (S2)\\ \hline \hline
		PL &  &  &  & \\ 
		PL + \pace{} &  &  &  & \\
		PL + \gale{} &  &  &  & \\ \hline
		MFL &  &  &  & \\
		MFL + \pace{} &  &  &  & \\
		MFL + \gale{} &  &  &  & \\ \hline
		SC &  &  &  & \\
		SC + \pace{} &  &  &  & \\
		SC + \gale{} &  &  &  & \\ \hline
		SDP  &  &  &  & \\
		SDP + \pace{} &  &  &  & \\
		SDP + \gale{} &  &  &  & \\ \hline
	\end{tabular}
	\caption{Comparison of \pace{} and \gale{} with global methods. S1 is Setting 1, where $n = 5000$, average degree $= $\rd Add other parameters\bk . S2 is Setting 2, where $n = 5000$, average degree $= $}
	\label{table:comparison}
\end{table}

 %The following table shows how the average degree changes.

% \begin{table}[htbp!]
% \centering
% \begin{tabular}{|c|c|}\hline
% $\rho_n$ & $d_n$ \\ \hline \hline
% 0.005 & 2.49 \\ \hline
% 0.008 & 4.97 \\ \hline
% 0.011 & 6.83 \\ \hline
% 0.015 & 9.32 \\ \hline
% 0.02 & 12.43 \\ \hline
% \end{tabular}
% \caption{average degree in the unbalanced case}
% \label{table1}
% \end{table}
\begin{figure}[!htbp]
	\centering
	$\begin{array}{@{\hspace{-2em}}c@{\hspace{-1em}}c@{\hspace{-1em}}c}                                                         
	\includegraphics[width = .38\textwidth]{./Figures/sc1_random.png}&                       
	\includegraphics[width = .38\textwidth]{./Figures/sc1_hhop.png}
	&
	\includegraphics[width = .38\textwidth]{./Figures/sc1_onion.png}
	\end{array}$
	\label{FIG:sim1}
	\caption{sparse unbalanced SBM with various subgraph schemes}
\end{figure}
\end{comment}
\vskip10pt
\noindent
\textbf{Profile Likelihood with tabu search:} Optimizing profile likelihood (PL) or likelihood modularity (\cite{bickel2009nonparametric}) for community detection is a combinatorial problem, and as such hard to scale, even if we ignore the problem of local minima. In Table~\ref{table:comparison-pl} we compare running time of profile likelihood (optimized using tabu search) and its divide and conquer versions. We see that the local methods significantly cut down the running time of PL without losing accuracy too much.

We also applied profile likelihood on 5000 node graphs with 20 workers. Although \pace{} and \gale{} finished in about 22 minutes,
% with accuracies ranging between 25-35\%, 
the global method did not finish in 3 days. So, here we present results on 1000 node networks instead.

\begin{table}[!htb]
	%	\begin{minipage}{.3\textwidth}
	%\begin{tabular}[htbp!]
	\centering
	\begin{tabular}{|c||c|c||c|c|}\hline
		&\multicolumn{2}{c||}{Random 310-subgraph} &\multicolumn{2}{c|}{2-hop neighborhood}\T\B\\
		\hline
		Algorithm & ME(\%) & Time taken	& ME(\%) & Time taken \T\B\\ \hline \hline
		PL  & 0 & 70m & & \T\B\\
		PL + \pace{} + \rpk{} & 3.9 & 30m & 3.5 & 38m \T\B\\
		PL + \gale{} & 1.2 & 30m & 29.5 & 38m\T\B\\ \hline
	\end{tabular}
	\caption{\pace{} and \gale{} with Profile Likelihood (PL) as the base method. Simulation settings: $n = 1000$, average degree $= 18.47$, 2 unequal sized clusters with relative sizes $\boldsymbol{\pi}=(0.4, 0.6)$, $T = 50$, parallel implementation in Matlab with 12 workers. We sampled $2$-hop neighborhoods by selecting their roots uniformly at random from nodes having degree greater than the $0.1$th lower quantile ($= 12$) of the degree distribution (average neighborhood size was $310$). Ordinary \pace{} with such a scheme may be thought of as \wpace{}, as discussed in Section~\ref{sec:local_algo}.}
	\label{table:comparison-pl}
	%	\end{minipage}
\end{table}

\begin{remark}
We have seen from the results presented in this section that, for recovering $\hat{Z}$ from $\hat{C}$ in \pace{}, spectral clustering outperforms the random projection based algorithms (e.g., \rpk{}). For smaller networks, this is not an issue (e.g., spectral clustering on the dense $5000\times 5000$ matrix $\hat{C}$ in the context of Table~\ref{table:comparison-mf} took only about 7-8 seconds). However, for networks of much larger scale (say, with several million nodes), that last step would be costly if spectral clustering is used. Designing better algorithms for recovering $\hat{Z}$ from $\hat{C}$ is something we are working on currently.
\end{remark}

\subsection{Real data analysis}\label{sec:realdata}
\begin{figure}[!h]
	\centering                    
	\includegraphics[width = .7\textwidth]{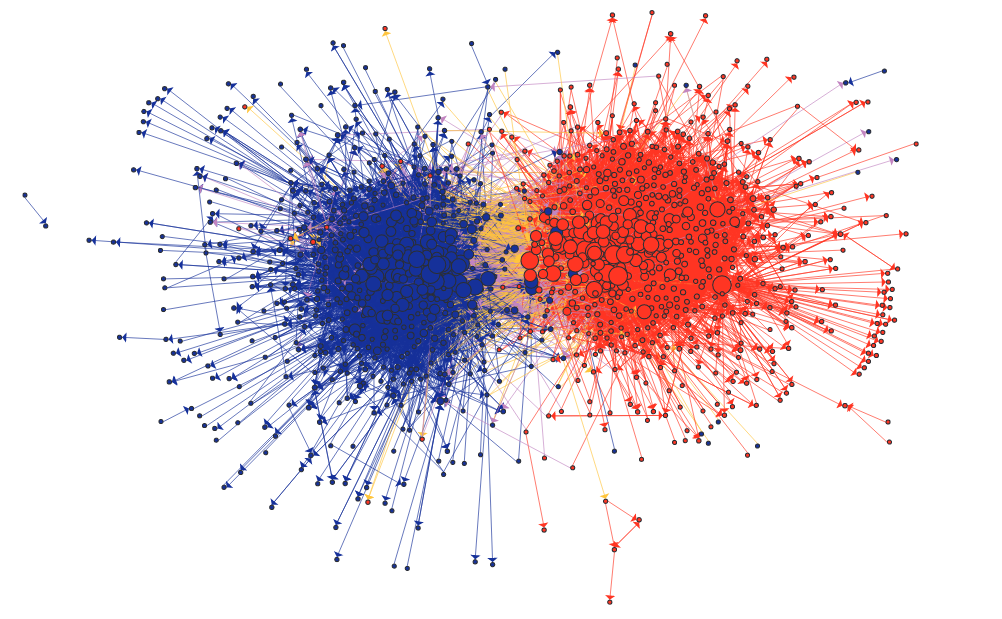}
	\caption{Network of political blogs, reds are conservative, blues are liberal; picture courtesy: \cite{adamic2005political}.}
	\label{fig:polblog}
\end{figure}
\textbf{Political blog data:} This is a directed network (see Figure~\ref{fig:polblog}) of hyperlinks between $1490$ blogs (2004) that are either liberal or conservative (\cite{adamic2005political}); we have ground truth labels available for comparison, $758$ are liberal, $732$ are conservative. We convert it into an undirected network by putting an edge between blogs $i$ and $j$ if there is at least one directed edge between them.
	
 The resulting network has lots of isolated nodes and isolated edges. The degree distribution is also quite heterogeneous (so a degree-corrected model would be more appropriate). We focus on the largest connected component. We use Laplacian spectral clustering (row normalized, to correct for degree heterogeneity), with \pace{}. 
\begin{table}[!htbp]
	\centering
	\begin{tabular}{|c||c|c|c|c|}\hline
		Largest Conn. Comp. & RSC & RSC + \pace{} & SC & SC + \pace{}\\ \hline \hline
		With leaves (1222 nodes) & 18.74\% & 6.79\% & 48.12\% & 6.55\%\\ \hline
		Without leaves (1087 nodes) & 11.87\% &  4.23\% & 3.13\% &  3.86\%\\ \hline
	\end{tabular}
	\caption{Misclustering rate in the political blog data. \pace{} was used with $T = 10$, and $h$-hop neighborhoods with $h = 2$, with roots chosen at random from high degree nodes.}
	\label{table2}
\end{table}
\begin{table}[!htbp]
	\centering
	\begin{tabular}{|c||c|c|c|c|c|c|}\hline
		Largest Conn. Comp. & RSC & RSC + \pace{}&RSC + \gale{} & SC & SC + \pace{}&SC + \gale{}\\ \hline \hline
		With leaves (1222 nodes) & 18.74\% & 13.34\% &  11.62\% &48.12\% & 7.86 \% & 5.81\%\\ \hline
		Without leaves (1087 nodes) & 11.87\% &  12.8\% & 10.0\% &  4.23\% & 7.28\% &  6.7\%\\ \hline
	\end{tabular}
	\caption{Misclustering rate in the political blog data. \gale{} and \pace{} were used with $T = 50$, and $m=300$ random subgraphs.}
	\label{table3}
\end{table}
Tables~\ref{table2}-\ref{table3} show that \pace{} and \gale{} add stability (possibly in eigenvector computation) to spectral clustering. Indeed, with \pace{} and \gale{} we are able to cluster ``leaf'' vertices (i.e. vertices of degree $1$), with significantly more accuracy.  

%%%%%%%%%%%%%%%%%%%%%%%%%%%%%%%%%%%%%%%%%%%%%%%%%%%%%%%%
\section{Discussion}\label{sec:discussions}
To summarize, we have proposed two divide-and-conquer type algorithms for community detection, \pace{} and \gale{}, which can lead to significant computational advantages without sacrificing accuracy. The main idea behind these methods is to compute the clustering for each individual subgraph and then ``stitch'' them together to produce a global clustering of the entire network. The main challenge of such a stitching procedure comes from the fundamental problem of unidentifiability of label assignments. That is, if two subgraphs overlap, the clustering assignment of a pair of nodes in the overlap may be inconsistent between the two subgraphs.
%\rd In other words, the label assigned to a ground truth cluster does not matter, as long as all the nodes in the same ground truth cluster get the same label. \bk

 \pace{} addresses this problem by estimating the clustering matrix for each subgraph and then estimating the global clustering matrix by averaging over the subgraphs. \gale{} takes a different approach by using overlaps between two subgraphs to calculate the best alignment between the cluster memberships of nodes in the subgraphs. We prove that, in addition to being computationally much more efficient than base methods which typcally run in $\Omega(n^2)$ time, these methods have accuracy at least as good as the base algorithm's typical accuracy on the type of subgraphs used, with high probability. Experimentally, we show something more interesting --- we identify parameter regimes where a local implementation of a base algorithm based on \pace{} or \gale{} in fact outperforms the corresponding global algorithm. One example of this is the Meanfield algorithm, which typically suffers from bad local optima for large networks. Empirically, we have seen that on a smaller subgraph, with a reasonable number of restarts, it finds a local optima that is often highly correlated with the ground truth. \pace{} and \gale{} take advantage of this phenomenon to improve on accuracy/running time significantly. Another example is Regularized Spectral Clustering on sparse unbalanced networks. We intend to theoretically investigate this further in future work.
 
 Finally, working with many subgraphs naturally leads to the question of self consistency of the underlying algorithm. This is often crucial in real world clustering problems with no available ground truth labels. We intend to explore this direction further for estimating model parameters like the number of clusters, algorithmic parameters like the size and number of subgraphs, number of hops to be used for the neighborhood subgraphs, etc. Currently, these are all picked a priori based on the degree considerations. It may also be possible to choose between different models (e.g., standard blockmodels, degree corrected models, dot product models etc.) by examining which model leads to the most self consistent results. We leave this for future work.
 	
 In conclusion, not only are our algorithms, to the best of our knowledge, the first ever divide-and-conquer type algorithms used for community detection, we believe that the basic principles of our methods will have a broad impact on a range of clustering and estimation algorithms that are computationally intensive. 

%%%%%%%%%%%%%%%%%%%%%%%%%%%%%%%%%%%%%%%%%%%%%%%%%%%%%%%%
\section{Proofs}\label{sec:proofs}
\subsection{Results on \pace{}}
\begin{proof}[Proof of Proposition~\ref{prop:comparison_ineq}]
Since both $Z, Z'$ are $0,1$-valued, we can safely replace the count by Frobenius norm squared, i.e.
\[
\delta(Z,Z') = \inf_{Q \text{ perm.}} \frac{1}{n}\|ZQ - Z'\|_F^2.
\]
Now, note that $(ZQ)(ZQ)^\top = ZZ^\top$ for all permutation matrices $Q$. Thus
\begin{align*}
	\|C - C'\|_F &= \|(ZQ)(ZQ)^\top - Z'Z'^\top\|_F \\
	& = \|(ZQ - Z')(ZQ)^\top + Z'((ZQ)^\top -Z'^\top\|_F \\
	&\le \|(ZQ - Z')(ZQ)^\top\|_F + \|Z'((ZQ)^\top -Z'^\top\|_F \\
	&\le \|ZQ - Z'\|_F \|ZQ^\top\|_2 + \|Z'\|_2 \|(ZQ)^\top -Z'^\top\|_F.
\end{align*}
But $\|Z'\|_2^2$ is the maximum eigenvalue of $Z'^\top Z'$ which is diagonal with its maximum diagonal entry being the size of the largest cluster under $Z'$. Thus $\|Z'\|_2^2$ equals the size of the largest cluster under $Z'$ and so is trivially upper bounded by $n$. Same goes for $\|ZQ^\top\|_2^2$. Therefore we get
\[
\|C - C'\|_F \le 2\sqrt{n}\|ZQ - Z'\|_F.
\]
Squaring this, and taking infimum over all permutation matrices $Q$ in the right hand side, we obtain the claimed inequality.
\end{proof}
Now we will prove Theorem~\ref{thm:pace-main}. The proof will be broken down into two propositions. First we decompose
\begin{equation}\label{eq:decomp1}
\hat{C}_{ij} - C_{ij} = \underbrace{\hat{C}_{ij} - C_{ij}\mathbf{1}_{\{N_{ij} \ge \tau\}}}_{=:(E_1)_{ij}} + \underbrace{C_{ij}\mathbf{1}_{\{N_{ij} \ge \tau\}} - C_{ij}}_{=:(E_2)_{ij}}
\end{equation}
Note that $(E_1)_{ij} = \mathbf{1}_{\{N_{ij} \ge \tau\}} \sum_{\ell = 1}^T y_{ij}^{(\ell)}(\hat{C}_{ij}^{(\ell)} - C_{ij})/N_{ij}$, and $(E_2)_{ij} = - \mathbf{1}_{\{N_{ij} < \tau\}} C_{ij}$, so that $(E_1)_{ij} (E_2)_{ij} = 0$. Therefore
\begin{equation}\label{eq:decomp2}
\frac{1}{n^2}\|\hat{C} - C\|_F^2 = \frac{1}{n^2} (\|E_1\|_F^2 + \|E_2\|_F^2).
\end{equation}
We will estimate $\|E_1\|_F$ and $\|E_2\|_F$ separately.
\begin{prop}\label{prop:genbound_E1}
We have
\begin{equation}
	\E\|E_1\|_F^2 \le \frac{T}{\tau} \E\|\hat{C}^{(S)} - C^{(S)}\|_F^2.
\end{equation}
\end{prop}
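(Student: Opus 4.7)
The plan is to start from the explicit expression
\[
(E_1)_{ij} = \frac{\mathbf{1}_{\{N_{ij} \ge \tau\}}}{N_{ij}} \sum_{\ell=1}^T y_{ij}^{(\ell)} (\hat{C}_{ij}^{(\ell)} - C_{ij}),
\]
and bound the Frobenius norm of $E_1$ entrywise. The key trick is a weighted Cauchy--Schwarz using the fact that the $y_{ij}^{(\ell)}$'s are $\{0,1\}$-indicators with $\sum_\ell y_{ij}^{(\ell)} = N_{ij}$. Specifically, treating $\sum_\ell y_{ij}^{(\ell)} b_\ell$ with $b_\ell = \hat{C}_{ij}^{(\ell)} - C_{ij}$, I would write
\[
\Bigl(\sum_\ell y_{ij}^{(\ell)} b_\ell\Bigr)^2 \le \Bigl(\sum_\ell y_{ij}^{(\ell)}\Bigr)\Bigl(\sum_\ell y_{ij}^{(\ell)} b_\ell^2\Bigr) = N_{ij}\sum_\ell y_{ij}^{(\ell)} b_\ell^2.
\]

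Plugging this back gives
\[
(E_1)_{ij}^2 \le \frac{\mathbf{1}_{\{N_{ij} \ge \tau\}}}{N_{ij}}\sum_{\ell=1}^T y_{ij}^{(\ell)}(\hat{C}_{ij}^{(\ell)} - C_{ij})^2,
\]
after which I would use the indicator to replace $1/N_{ij}$ by $1/\tau$. Summing over $i,j$ and swapping the order of summation yields
\[
\|E_1\|_F^2 \le \frac{1}{\tau}\sum_{\ell=1}^T \sum_{i,j} y_{ij}^{(\ell)}(\hat{C}_{ij}^{(\ell)} - C_{ij})^2.
\]
Here I would observe that $y_{ij}^{(\ell)} = 1$ precisely when both $i,j \in S_\ell$, and on $S_\ell$ the matrices $\hat{C}^{(\ell)}$ and $C$ restrict to $\hat{C}^{(S_\ell)}$ and $C^{(S_\ell)}$ respectively (since $C = ZZ^\top$), while $\hat{C}^{(\ell)}$ is zero outside $S_\ell \times S_\ell$ by the extension convention. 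So the inner sum equals $\|\hat{C}^{(S_\ell)} - C^{(S_\ell)}\|_F^2$.

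Taking expectations and using the fact that the $S_\ell$ are identically distributed (as a ``randomly chosen subgraph'' $S$ under the sampling scheme) then yields $\E\|E_1\|_F^2 \le \frac{T}{\tau}\E\|\hat{C}^{(S)} - C^{(S)}\|_F^2$, as claimed. There is no real obstacle in this argument; the only place where one has to be slightly careful is in justifying the weighted Cauchy--Schwarz step, for which the crucial fact is that the weights are $\{0,1\}$-valued and sum to $N_{ij}$, which is what produces the $N_{ij}$ (not $T$) factor and ultimately gives the tighter $T/\tau$ bound rather than a $T^2/\tau^2$ one.
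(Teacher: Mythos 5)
Your proof is correct and follows essentially the same route as the paper's: the same weighted Cauchy--Schwarz step exploiting $\sum_\ell y_{ij}^{(\ell)} = N_{ij}$, the same observation that $W_{ij}^2 N_{ij} = W_{ij} \le 1/\tau$ on the event $\{N_{ij}\ge\tau\}$, the same swap of sums, and the same appeal to identical distribution of the $S_\ell$'s. The only cosmetic difference is that the paper bounds $W_{ij}$ by $\max_{ij}W_{ij}$ before taking expectations, whereas you bound it by $1/\tau$ inline; this is the same argument.
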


\begin{proof}
Let $W_{ij} := \frac{\mathbf{1}_{\{N_{ij} \ge \tau\}}}{N_{ij}}$. Then $ (E_1)_{ij} = W_{ij} \sum_{\ell=1}^T y_{ij}^{(\ell)}(\hat{C}_{ij}^{(\ell)} - C_{ij})$. So, by an application of Cauchy-Schwartz, we have
\begin{align*}
\|E_1\|_F^2 = \sum_{i,j} W_{ij}^2 \left(\sum_{\ell=1}^T y_{ij}^{(\ell)}(\hat{C}_{ij}^{(\ell)} - C_{ij}) \right)^2 &\overset{\text{(C-S)}}{\le}  \sum_{i,j} W_{ij}^2 \underbrace{\left(\sum_{\ell=1}^T y_{ij}^{(\ell)}\right)}_{= N_{ij}} \left(\sum_{\ell=1}^T y_{ij}^{(\ell)}(\hat{C}_{ij}^{(\ell)} - C_{ij})^2\right)\\
& = \sum_{\ell = 1}^T \sum_{i,j} W_{ij}  y_{ij}^{(\ell)}(\hat{C}_{ij}^{(\ell)} - C_{ij})^2\\
&\le \max_{ij} W_{ij} \sum_{\ell = 1}^T \sum_{i,j} y_{ij}^{(\ell)}(\hat{C}_{ij}^{(\ell)} - C_{ij})^2\\
& = \max_{ij} W_{ij} \sum_{\ell = 1}^T \|\hat{C}^{(\ell)} - C^{(\ell)}\|_F^2.
\end{align*}
Note that $W_{ij} = \frac{\mathbf{1}_{\{N_{ij} \ge \tau\}}}{N_{ij}} \le \frac{1}{\tau}$. On the other hand, since the subgraphs were chosen independently using the same sampling scheme, the $\|\hat{C}^{(\ell)} - C^{(\ell)}\|_F$ are identically distributed. Therefore, taking expectations we get
\[
\E \|E_1\|_F^2 \le \frac{1}{\tau} \times \sum_{\ell = 1}^T \E\|\hat{C}^{(\ell)} - C^{(\ell)}\|_F^2 = \frac{T}{\tau} \E\|\hat{C}^{(S)} - C^{(S)}\|_F^2,
\]
where $S$ is a randomly chosen subgraph under our subgraph selection scheme.
\end{proof}

\begin{prop}\label{prop:genbound_E2}
Let $n_{\max}$ be the size of the largest block. Then we have
\begin{equation}
\E\|E_2\|^2 \le n_{\max} n \max_{i,j}\P(N_{ij} < \tau).
\end{equation}
\end{prop}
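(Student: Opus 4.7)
The plan is to exploit the explicit formula $(E_2)_{ij} = -\mathbf{1}_{\{N_{ij} < \tau\}} C_{ij}$ derived just before Proposition~\ref{prop:genbound_E1}, together with the fact that $C_{ij}$ takes only the values $0$ and $1$ (as $C = ZZ^\top$ is the true clustering matrix). In particular, $(E_2)_{ij}^2 = \mathbf{1}_{\{N_{ij} < \tau\}} C_{ij}^2 = \mathbf{1}_{\{N_{ij} < \tau\}} C_{ij}$.

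First I would sum over all $i,j$ and take expectations (the only source of randomness relevant here is the sampling of the subgraphs, which determines $N_{ij}$; the underlying clustering $C$ is deterministic) to obtain
\[
\E\|E_2\|_F^2 = \sum_{i,j} C_{ij}\,\P(N_{ij} < \tau) \le \max_{i,j}\P(N_{ij} < \tau) \cdot \sum_{i,j} C_{ij}.
\]
Next I would evaluate $\sum_{i,j} C_{ij}$. Since $C_{ij} = 1$ precisely when $i$ and $j$ lie in the same cluster, the double sum counts pairs within blocks, giving $\sum_{i,j} C_{ij} = \sum_{k=1}^K n_k^2$. Bounding each $n_k^2 \le n_{\max}\cdot n_k$ and summing over $k$ yields $\sum_{i,j} C_{ij} \le n_{\max} \cdot n$.

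Combining these two displays immediately produces $\E\|E_2\|_F^2 \le n_{\max} n \max_{i,j}\P(N_{ij} < \tau)$. There is no real obstacle here; the argument is a routine deterministic reduction followed by taking expectations, and the only thing worth noting is that $C_{ij}$ is $\{0,1\}$-valued so squaring it is harmless, and that the random event $\{N_{ij} < \tau\}$ is independent of the deterministic factor $C_{ij}$, so the expectation simply converts the indicator into the probability.
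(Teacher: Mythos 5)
Your proof is correct and is essentially identical to the paper's: both use the formula $(E_2)_{ij} = -\mathbf{1}_{\{N_{ij}<\tau\}}C_{ij}$, take expectations, pull out $\max_{i,j}\P(N_{ij}<\tau)$, and bound $\sum_{i,j}C_{ij}=\sum_k n_k^2 \le n_{\max}n$.
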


\begin{proof}
Since $-(E_2)_{ij} = C_{ij}\mathbf{1}_{\{N_{ij} < \tau\}}$, we have $\|E_2\|_F^2 = \sum_{i,j} C_{ij}^2 \mathbf{1}_{\{N_{ij} < \tau\}}$, and by taking expectations we get
\begin{align*}
	\E \|E_2\|_F^2 = \sum_{i,j} C_{i,j}^2 \P(N_{ij} < \tau) & \le \max_{i,j} \P(N_{ij} < \tau) \sum_{i,j} C_{i,j}^2\\
	& = \max_{i,j} \P(N_{ij} < \tau) \sum_{a=1}^K n_a^2 \le n_{\max}n \times \max_{i,j} \P(N_{ij} < \tau).
\end{align*}
\end{proof}

\begin{proof}[Proof of Theorem~\ref{thm:pace-main}]
Combining Propositions~\ref{prop:genbound_E1} and \ref{prop:genbound_E2}, we get \eqref{bound-pace}. Finally, note that
\begin{align*}
 	\E\|\hat{C}^{(S)} - C^{(S)}\|_F^2 &= \E\|\hat{C}^{(S)} - C^{(S)}\|_F^2 \mathbf{1}_{(|S| < m_{\star})} + \E\|\hat{C}^{(S)} - C^{(S)}\|_F^2 \mathbf{1}_{(|S| \ge m_{\star})}\\
 	&\le n^2 \P(|S| < m_{\star}) + 4\E |S|^2 \delta(\hat{Z}_{S}, Z_{S}) \mathbf{1}_{(|S| \ge m_{\star})}.
\end{align*}
\end{proof}

\begin{proof}[Proof of Corollary~\ref{cor:m-subgraph}]
For this sampling scheme $|S| = m \ge m_{\star}$ and with $p = \frac{m(m-1)}{n(n-1)}$, $N_{ij} \sim$ Binomial$(T, p)$ so that we have, using the Chernoff bound\footnote{We use a slightly loose but convenient form of the Chernoff bounds: (i) $\P(X \le (1 - \delta)\mu) \le \exp(-\delta^2\mu/2)$, and (ii) $\P(X \ge (1 + \delta)\mu) \le \exp(-\delta^2\mu/3)$, where $X = X_1 + \cdots + X_n$ and $X_i$'s are independent binary random variables, with $\E X = \mu$, and $0 < \delta < 1$.
} for binomial lower tail, that
\[
	P(N_{ij} < \theta Tp) \le e^{- (1 - \theta)^2 Tp/2}.
\]
Finally, we get (\ref{bound1:random}) by plugging in these parameter values and estimates in (\ref{bound-pace}).
\end{proof}

\begin{proof}[Proof of Corollary~\ref{cor:ego-subgraph}]
The most crucial thing to observe here is that if one removes the root node and its adjacent edges from a $1$-hop neighborhood, then the remaining ``ego network'' has again a blockmodel structure. Indeed, let $S$ be a random ego neighborhood of size $\ge s$ with root $R$, i.e. $V(S) = \{j : A_{Rj} = 1\}$. Then conditional on $V(S)$ being $R$'s neighbors, and the latent cluster memberships, edges in $E(S)$ are independently generated, i.e. for $j,k,\ell,m\in V(S)$, and $s,t\in \{0,1\}$, we have
	\begin{align*}
	\P(A_{jk}=s,A_{\ell m}=t|S, Z) = \P(A_{jk}=s|Z) \P(A_{\ell m}=t|Z).
	\end{align*}

This is because the ``spoke'' edges $A_{Rj}$ are independent of $A_{j,k}, j,k\in V(S)$. Therefore, conditional on $S$, this ego-subgraph is one instantiation of a block model with the same parameters on $|S|$ vertices. 

Now for ego networks, $y_{ij}^{(\ell)} \sim {\sf Bernoulli} (n_{ij}/n)$, where $n_{ij}$ is the total number of ego-subgraphs containing both $i$ and $j$. Notice that
\[
n_{ij} = \sum_{\ell \ne i, j} \mathbf{1}_{\{A_{i\ell} = 1, A_{j \ell} = 1\}},
\]
that is, $n_{ij}$ is the sum of $(n-2)$ independent Bernoulli random variables 
\[
\mathbf{1}_{\{A_{i\ell} = 1, A_{j \ell} = 1\}} \sim {\sf Bernoulli}(B_{\sigma(i)\sigma(\ell)}B_{\sigma(j)\sigma(\ell)}).
\] 
So 
\[
(n - 2)B_{\#}^2 \ge \E n_{ij} = \sum_{\ell \ne i,j} B_{\sigma(i)\sigma(\ell)}B_{\sigma(j)\sigma(\ell)} \ge (n - 2) B_{\star}^2,
\] 
and we have, by the Chernoff bound, that
\begin{align}\label{eq:chernoff_nij}\nonumber
\P(n_{ij} \le (n - 2) B_{\star}^2 - \Delta) &\le \exp \left( - \frac{(\E n_{ij} - (n - 2) B_{\star}^2 + \Delta)^2}{2\E n_{ij}} \right)\\
&\le \exp \left( - \frac{\Delta^2}{2 (n - 2) B_{\#}^2} \right).
\end{align}
In order to apply Theorem~\ref{thm:pace-main} we need the following two ingredients, which we will now work out.
\begin{enumerate}
\item[(i)] estimate of $|S|$, and
\item[(ii)] estimate of $\P(N_{ij} < \tau)$.
\end{enumerate}

\noindent
\textbf{(i) Estimate of $|S|$.}
Note that $|S| = \sum_k A_{kR}$. So 
\begin{align*}
\P(|S| < m_{\star}) &= \E_R \P(|S| < m_{\star} \,|\, R). 
\end{align*}

Since $(n - 1) B_{\star} \le \E(|S| \,|\, R) = \sum_{j \ne R} B_{\sigma(k)\sigma(R)} \le (n-1) B_{\#}$, and $A_{kR}, 1 \le k \le n$ are independent, we have, by Chernoff's inequality, that
\begin{align*}
\P(|S| < m_{\star} \,|\, R) &\le \exp\left( - \frac{(\E(|S| \,|\, R) - m_{\star})^2}{2\E(|S| \,|\, R)}\right)\\
&\le\exp\left( - \frac{((n - 1)B_{\star} - m_{\star})^2}{2(n - 1) B_{\#}}\right),
\end{align*}
where $m_{\star} \le (n-1)B_{\star}.$ Therefore, the same upper bound holds for $\P(|S| < m_{\star})$. In particular, for $m_{\star} \le (n - 1)B_{\star}/2$ we have
\begin{equation}\label{eq:lower_tail_bound_on_ego_nbd_size}
\P(|S| < m_{\star}) \le \exp\left( - \frac{(n - 1)B_{\star}^2}{8B_{\#}}\right) \le \exp\left( - \frac{nB_{\star}^2}{16B_{\#}}\right).
\end{equation}

Similarly, using Chernoff's inequality for Binomial upper tail, we can show that, for $\theta > 0$,
\begin{equation}
	\P(|S| > (1 + \theta)n B_{\#}) \le \exp\left(-\frac{\theta^2 nB_{\#}}{6}\right).
\end{equation}

\noindent
\textbf{(ii) Estimate of $\P(N_{ij} < \tau)$.} Recall that $N_{ij} \,|\, n_{ij} \sim \text{Binomial}(T,n_{ij}/n)$. Then
\[
\P(N_{ij} < \tau) = \underbrace{\P\left(N_{ij} < \tau, n_{ij} < \frac{2 n\tau}{T}\right)}_{=:P_1} + \underbrace{\P\left(N_{ij} < \tau, n_{ij} \ge \frac{2n\tau}{T}\right)}_{=:P_2}.
\]
Clearly
\[
P_1 \le \P\left(n_{ij} < \frac{2n\tau}{T}\right) \le \P \left(n_{ij} < \frac{n B_{\star}^2}{2} \right). 
\]
Now, given $n_{ij}$ such that $n_{ij} \ge \frac{2n\tau}{T}$, we can invoke Chernoff's inequality to get
\[
\P(N_{ij} < \tau \,|\, n_{ij}) \le \exp\left( - \frac{(\E(N_{ij} \,|\, n_{ij}) - \tau)^2}{2\E(N_{ij} \,|\, n_{ij})}\right) \le \exp\left(- \frac{T n_{ij}}{8n}\right).
\]
Therefore
\begin{align*}
P_2 \le \E  \exp\left(- \frac{T n_{ij}}{8n}\right) \mathbf{1}_{\{n_{ij} \ge \frac{2n\tau}{T}\}} &\le \E  \exp\left(- \frac{T n_{ij}}{8n}\right)\\
&= \E  \exp\left(- \frac{T n_{ij}}{8n}\right) \mathbf{1}_{\{n_{ij} < \frac{n B_{\star}^2}{2}\}} + \E  \exp\left(- \frac{T n_{ij}}{8n}\right) \mathbf{1}_{(n_{ij} \ge \frac{n B_{\star}^2}{2})}\\
&\le \P \left(n_{ij} < \frac{n B_{\star}^2}{2}\right) + \exp\left(- \frac{T B_{\star}^2}{16}\right).
\end{align*}
Thus
\[
\P(N_{ij} < \tau) \le 2\P \left(n_{ij} < \frac{n B_{\star}^2}{2}\right) + \exp\left(- \frac{T B_{\star}^2}{16}\right).
\]
But by (\ref{eq:chernoff_nij})
\[
\P \left(n_{ij} < \frac{n B_{\star}^2}{2}\right) \le \exp\left(-\frac{((n - 2)B_{\star}^2 - \frac{n B_{\star}^2}{2})^2}{2(n - 2)B_{\#}^2}\right) \le \exp\left(-\frac{(n - 2)}{8}\frac{B_{\star}^4}{B_{\#}^2}\right) \le \exp\left(-\frac{nB_{\star}^4}{16B_{\#}^2}\right).
\]
Thus
\[
\P(N_{ij} < \tau) \le 2\exp\left(-\frac{nB_{\star}^4}{16B_{\#}^2}\right) + \exp\left(- \frac{T B_{\star}^2}{16}\right).
\]

Now we are ready to use (\ref{bound-pace}). Using our estimates on $|S|$ we get that
\[
\E |S|^2 \delta(\hat{Z}_{S}, Z_{S}) \mathbf{1}_{(|S| \ge m_{\star})} \le (1+\theta)^2 n^2 B_{\#}^2 \E \delta(\hat{Z}_{S}, Z_{S}) \mathbf{1}_{(|S| \ge m_{\star})} + \exp\left(-\frac{\theta^2 nB_{\#}}{6}\right).
\]

Next we plug into (\ref{bound-pace}) all the estimates we derived in this subsection to get the desired bound (\ref{bound2:egonet}).

\end{proof}

\subsection{Results on \gale{}}
\label{sec:gale-analysis}
For any clustering $\hat{Z_i}$ on subgraph $S_i$, let $\Pi_i \in \arg \min_{Q \text{ perm.}} \|\hat{Z_i} - Z_iQ\|_1 $, where $Z_i$ is used a shorthand for $Z_{S_i} = Z\big|_{S_i}$, the true cluster membership matrix for the members of $S_i$. Define the matrix $F_i$ by the requirement
\begin{equation}\label{eq:Zi-mc}
	\hat{Z}_i = Z_i\Pi_i+F_i,  \ \ \   F_i\in\{0,\pm 1\}^{n\times K}.
\end{equation}
In other words, $\|F_i\|_1 = \M(\hat{Z}_i, Z_i)$. 

We first analyze Algorithm~\ref{alg:match}, i.e. \match{}. Recall that, if two clusterings on some set $S$ agree, then the confusion matrix will be a diagonal matrix up to permutations, with the entries in the diagonal corresponding to the cluster sizes in either of the clusterings. In the following lemma, we consider a noisy version of this, where the two clusterings are not in perfect agreement. This lemma essentially establishes that if supplied with two clusterings whose confusion matrix is a diagonal matrix up to permutations plus noise, then \match{} will recover the correct aligning permutation, if the noise is not too large. 

\begin{lemma}
	\label{lem:match_base}
	Let $d\in \R_+^K$. Also let $M=\Pi_2^\top\diag(d)\Pi_1+\Gamma$, where $\Gamma\in \R^{K\times K}$, $\|\Gamma\|_\infty \le \min_i d_{i}/3$. Then \match{} returns $\Pi=\Pi_2^\top\Pi_1$, when applied on the confusion matrix $M$.
\end{lemma}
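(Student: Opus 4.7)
First I would unpack the algebraic structure of the ``signal'' matrix $\Pi_2^\top \diag(d)\Pi_1$. Writing $\pi_1,\pi_2$ for the permutations corresponding to $\Pi_1,\Pi_2$ and $\sigma := \pi_1 \circ \pi_2^{-1}$, a direct computation gives
\[
(\Pi_2^\top \diag(d)\Pi_1)_{ij} = d_{\pi_2^{-1}(i)} \mathbf{1}_{\{j = \sigma(i)\}},
\]
so this matrix has exactly $K$ nonzero entries, one in each row and one in each column, located at the positions $\{(i,\sigma(i))\}_{i=1}^K$ with values $d_{\pi_2^{-1}(i)} \ge \min_j d_j$. The same permutation pattern (with entries replaced by $1$) describes $\Pi_2^\top \Pi_1$, so the task reduces to showing that \match{} selects exactly the $K$ positions $\{(i,\sigma(i))\}$.

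Second, I would record the key dominance inequality: at a signal position $|M_{i,\sigma(i)}| \ge \min_j d_j - \|\Gamma\|_\infty \ge (2/3)\min_j d_j$, whereas at a non-signal position $|M_{ij}| \le \|\Gamma\|_\infty \le (1/3)\min_j d_j$. Thus in $M$ every signal entry strictly exceeds every non-signal entry in absolute value, and in fact in value (since the signal values are positive).

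Third, I would run an induction on the iterations of the while loop. The inductive hypothesis at step $t$ is that the previously selected entries are all signal positions, lying in $t$ distinct rows and $t$ distinct columns, and consequently the live $(K-t)\times(K-t)$ submatrix still contains a full permutation pattern of $K-t$ signal positions on the surviving rows and columns. This is immediate: removing the row and column of a previously chosen signal slot $(i,\sigma(i))$ destroys exactly that one signal entry and no other. By the dominance inequality, $\|M\|_\infty$ over the live submatrix is attained at one of the remaining signal positions, so step $(t+1)$ again picks a correct signal slot; ties are harmless because every signal slot is a correct ``1'' in $\Pi_2^\top \Pi_1$. After $K$ iterations the returned $\Pi$ has $1$'s precisely at $\{(i,\sigma(i))\}$, i.e.\ $\Pi = \Pi_2^\top \Pi_1$.

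I do not foresee a serious obstacle; the only care needed is in the inductive step, to verify that deleting a chosen row/column removes exactly one signal slot (so the residual permutation pattern is preserved) and that ``$\|M\|_\infty$'' in the algorithm is interpreted as entrywise maximum, matching the entrywise bound on $\Gamma$. Everything else is routine once the dominance inequality is in hand.
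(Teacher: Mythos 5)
Your proposal is correct and follows essentially the same approach as the paper's proof: establish that the $K$ ``signal'' entries (at the positions of the nonzero entries of $\Pi_2^\top\Pi_1$) strictly dominate every other entry via the $\|\Gamma\|_\infty \le \min_i d_i/3$ assumption, and conclude that \match{}'s greedy selection recovers exactly these positions. The only difference is cosmetic: you spell out the induction on the while-loop (checking that deleting a chosen row/column removes exactly one signal slot and preserves the residual permutation pattern), whereas the paper states this more tersely as ``the top $K$ elements are the diagonal elements in $D$.''
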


\begin{proof}[Proof of Lemma~\ref{lem:match_base}]
Let $D:=\diag(d)$. Let $\xi_i$ be the permutation encoded in $\Pi_i$, for $i\in[2]$, i.e. $(\Pi_i)_{uv} = \mathbf{1}_{\{ v = \xi_i(u)\}}$. It is easy to see that 
$M_{uv} = D_{\xi^{-2}_1(u), \xi^{-1}_2(v)} + \Gamma_{uv}$. We have for all $u,v$ such that $(\Pi_2^\top\Pi_1))_{uv}=0$, $M_{uv} \le \|\Gamma\|_\infty$, whereas for all $u, v$ such that $(\Pi_2^\top\Pi_1)_{uv} = 1$, $M_{uv} \ge \min_{a} D_{aa} - \|\Gamma\|_\infty$.
Hence
\[
	\min_{u, v\, :\, (\Pi_2^\top\Pi_1)_{uv} = 1} M_{uv} \ge \min_{a} D_{aa} - \|\Gamma\|_\infty \ge 2\|\Gamma\|_\infty \ge 2\max_{u, v \,:\, (\Pi_2^\top\Pi_1)_{uv} = 0} M_{uv}.
\]
Hence the top $K$ (recall that $K$ is the number of rows in $M$) elements are the diagonal elements in $D$. Thus the elements of $\Pi$ learned by the \match{} algorithm will be $\Pi_{uv} = \mathbf{1}_{\{\xi^{-1}_2(u)=\xi^{-1}_1(v)\}}$. This is equivalent to $\Pi = \Pi_2^\top \Pi_1$.
\end{proof}

Now we will establish that post-alignment misclustering errors on subgraphs equal the original misclustering errors. For this we first need a lemma on what happens when we align two subgraphs based on their intersection.
\begin{lemma}
\label{lem:induction-base-subset}
Consider two random $m$-subgraphs $S_1$ and $S_2$. Suppose our clustering algorithm $\mathcal{A}$ outputs clusterings $\hat{Z}_i, i = 1, 2$. Let $S = S_1 \cap S_2$ be of size at least $m_1 = \thresh$. Assume that $\M(\hat{Z}_i,Z_i)\le m_1\pi_{\min}/\cp$ and that \hfff{cluster_size_subgraphs}$ \min_{k \in [K]} n_k^{(S)} \ge m_1\pi_{\min}$. Then $\|\hat{Z}_2\Pi-Z\Pi_1\|=\M(\hat{Z}_2,Z)$, where 
$\Pi$ is the output of $\match{}(\hat{Z}_2\big|_{S}, \hat{Z}_1\big|_S)$.
\end{lemma}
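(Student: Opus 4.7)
The plan is first to establish that the permutation $\Pi$ returned by $\match{}(\hat{Z}_2|_S, \hat{Z}_1|_S)$ equals $\Pi_2^\top \Pi_1$, and then derive the claimed identity by direct algebra.

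For the first step, compute the $K \times K$ confusion matrix $M = (\hat{Z}_2|_S)^\top (\hat{Z}_1|_S)$ and apply Lemma~\ref{lem:match_base}. Using \eqref{eq:Zi-mc}, both $\hat{Z}_i|_S$ can be written as $(Z|_S)\Pi_i + F_i|_S$, where $F_i|_S$ has entries in $\{-1, 0, 1\}$ with $\|F_i|_S\|_1 \le \|F_i\|_1 = \M(\hat{Z}_i, Z_i) \le m_1 \pi_{\min}/\cp$. Since $(Z|_S)^\top (Z|_S) = \diag(n_1^{(S)}, \ldots, n_K^{(S)}) =: D$, we obtain the decomposition
\[
M = \Pi_2^\top D \Pi_1 + \Gamma, \quad \Gamma := \Pi_2^\top (Z|_S)^\top F_1|_S + (F_2|_S)^\top (Z|_S) \Pi_1 + (F_2|_S)^\top F_1|_S.
\]
Each entry of any of the three terms in $\Gamma$ is a sum over $i \in S$ of a product of two entries, at least one of which comes from some $F_j|_S$ and lies in $\{-1, 0, 1\}$, while the other has absolute value at most $1$. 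Therefore each such entry is bounded in absolute value by a column $\ell_1$-norm of the corresponding $F_j|_S$, which is at most $m_1 \pi_{\min}/\cp$. Summing the three contributions yields $\|\Gamma\|_\infty \le 3 m_1 \pi_{\min}/\cp = m_1 \pi_{\min}/4$, while the hypothesis $\min_k n_k^{(S)} \ge m_1 \pi_{\min}$ guarantees $\min_k D_{kk}/3 \ge m_1 \pi_{\min}/3 > m_1 \pi_{\min}/4$ (this slack is precisely why $\cp=12$ suffices). Lemma~\ref{lem:match_base} then gives $\Pi = \Pi_2^\top \Pi_1$.

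For the second step, once $\Pi = \Pi_2^\top \Pi_1$ is in hand, split the matrix difference row-wise. On $S_2$, using $\hat{Z}_2|_{S_2} = (Z|_{S_2})\Pi_2 + F_2$, we compute
\[
(\hat{Z}_2\Pi - Z\Pi_1)\big|_{S_2} = (Z|_{S_2})(\Pi_2\Pi_2^\top \Pi_1 - \Pi_1) + F_2 \Pi_2^\top \Pi_1 = F_2 \Pi_2^\top \Pi_1,
\]
whose entry-wise norm equals $\|F_2\|_1 = \M(\hat{Z}_2, Z_2)$ since right multiplication by a permutation preserves entry-wise norms. On $S_2^c$, the difference is $-Z|_{S_2^c}\Pi_1$, contributing $|S_2^c|$ (each row of $Z$ has a single $1$). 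The same splitting applied to $\M(\hat{Z}_2, Z) = \min_Q \|\hat{Z}_2 Q - Z\|_0$ gives $\M(\hat{Z}_2, Z) = \M(\hat{Z}_2, Z_2) + |S_2^c|$, since for $i \notin S_2$ every row of $\hat{Z}_2 Q$ is zero regardless of $Q$. The desired equality follows.

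The only real obstacle is the quadratic error term $(F_2|_S)^\top F_1|_S$ appearing in $\Gamma$, which lacks the structural $Z$-factor that makes the other two cross terms transparently small; however, since $F_i$ is $\{-1,0,1\}$-valued we have $|(F_2)_{ia}(F_1)_{ib}| \le |(F_1)_{ib}|$, so the entry can still be bounded by a column $\ell_1$-norm of $F_1|_S$, keeping the same scale as the other contributions.
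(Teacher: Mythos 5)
Your proof is correct and mirrors the paper's argument: decompose the confusion matrix on $S=S_1\cap S_2$ as $\Pi_2^\top D\Pi_1 + \Gamma$, bound $\Gamma$ using the $\{0,\pm1\}$ structure of the $F_i$'s, invoke Lemma~\ref{lem:match_base} to get $\Pi=\Pi_2^\top\Pi_1$, and then unwind the algebra. The only differences are cosmetic: you bound $\|\Gamma\|_\infty$ entrywise via column $\ell_1$-norms (getting $m_1\pi_{\min}/4$), whereas the paper bounds the coarser $\|\Gamma\|_1 \le 2(\|\tilde F_1\|_1+\|\tilde F_2\|_1)\le m_1\pi_{\min}/3$; and in the final step you work with the $n\times K$ zero-extended matrices and track the $|S_2^c|$ contribution explicitly on both sides, whereas the paper implicitly restricts to $S_2$ and uses permutation-invariance of $\|\cdot\|_1$ directly. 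Both handle the slight notational ambiguity in the statement (where ``$Z$'' really means $Z|_{S_2}$) consistently.
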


\begin{proof}[Proof of Lemma~\ref{lem:induction-base-subset}]
To ease notation, let us write $\tilde{Z}_i := \hat{Z}_i\big|_S$, $\tilde{F}_i := F_i\big|_S, i =1, 2$, and $\tilde{Z} := Z\big|_S$. Then by restricting \eqref{eq:Zi-mc} to $S$, we get
\[
\tilde{Z}_i = \tilde{Z}\Pi_i+ \tilde{F}_i,  \ \ \   \tilde{F}_i\in\{0,\pm 1\}^{n\times K}.
\]
Now
\[
	\tilde{Z}_2^\top \tilde{Z}_1 = \Pi_2^\top \tilde{Z}^\top \tilde{Z}\Pi_1 + \underbrace{\Pi_2^\top \tilde{Z}^\top \tilde{F}_1 + \tilde{F}_2^\top \tilde{Z} \Pi_1 + \tilde{F}_2^\top \tilde{F}_1}_{=: \Gamma}.
\]
Note that (a) multiplication by a permutation matrix does not change the $\|\cdot\|_1$ norm, and (b) for any matrix $A$, we have $\|A\|_1 = \|A^{\top}\|_1$. Therefore
\[
\|\Pi_2^\top \tilde{Z}^\top \tilde{F}_1\|_1 \overset{(\text{by (a)})}{=} \|\tilde{Z}^\top \tilde{F}_1\|_1 = \sum_{a, b} |\sum_i(\tilde{Z}^\top)_{ai} (\tilde{F}_1)_{ib}| \le \sum_{a, b} \sum_i \tilde{Z}_{ia} |(\tilde{F}_1)_{ib}| = \sum_{i, b} |(\tilde{F}_1)_{ib}| \times \underbrace{\sum_a \tilde{Z}_{ia}}_{= 1}  = \|\tilde{F}_1\|_1.
\]
Similarly,
\[
\| \tilde{F}_2^\top \tilde{Z} \Pi_1 \|_1 \overset{(\text{by (b)})}{=} \| \Pi_1^\top \tilde{Z}^\top \tilde{F}_2 \|_1 \le \|\tilde{F}_2\|_1.
\]
Finally,
\[
\|\tilde{F}_2^\top \tilde{F}_1\|_1 \le \sum_{a, b} \sum_i |(\tilde{F}_2)_{ia} (\tilde{F}_1)_{ib}| = \sum_{i, b} |(\tilde{F}_1)_{ib}| \times \underbrace{\sum_a |(\tilde{F}_2)_{ia})|}_{= 2} =  2\|\tilde{F}_1\|_1,
\]
where we have used the fact that each row of $\tilde{F}_i$ has exactly one $1$ and one $-1$. By (b), $\|\tilde{F}_2^\top \tilde{F}_1\|_1 = \|\tilde{F}_1^\top \tilde{F}_2\|_1 \le 2 \|\tilde{F}_2\|_1$, and therefore
\[
	\|\tilde{F}_2^\top \tilde{F}_1\|_1 \le \|\tilde{F}_1\|_1 + \|\tilde{F}_2\|_1.
\]
Note that, by our assumptions on the individual misclustering errors, 
\[
	\|\tilde{F}_i\|_1 \le \|F_i\|_1 = \M(\hat{Z}_i,Z_i)\le m_1\pi_{\min}/\cp.
\]
Therefore
\[
	\|\Gamma\| \le 2(\|\tilde{F}_1\|_1 + \|\tilde{F}_2\|_1) \le m_1\pi_{\min}/3.
\]
Since by our assumption, $(\tilde{Z}^\top \tilde{Z})_{kk} = n_{k}^{(S)} \ge m_1\pi_{\min}$, we can apply Lemma~\ref{lem:match_base} to see that the output $\match{}(\tilde{Z}_2,\tilde{Z}_1)$ is $\Pi=\Pi_2^\top \Pi_1$. Hence 
\[
	\|\hat{Z}_2\Pi - Z\Pi_1\|_1 = \|\hat{Z}_2\Pi_2^\top \Pi_1 - Z\Pi_1\|_1
	=\|\hat{Z}_2\Pi_2^\top - Z\|_1=\|\hat{Z}_2 - Z\Pi_2\|_1=\|F_2\|_1.
\]
\end{proof}

\begin{prop}\label{prop:alignerr}
Let the $T$ subsets, $(S_i)_{i=1}^T$ be associated with estimated clusterings $\hat{Z}_i$ with misclustering error $\M (\hat{Z}_i, Z_i) \le m_1\pi_{\min}/\cp$. Consider a traversal $x_1 \sim x_2 \sim \cdots \sim x_J$ of some spanning tree $\mathcal{T}$ of $\mathcal{S}_{m ,T}$ as in Algorithm~\ref{alg:seq}, satisfying
\[
\min_{2 \le j \le J} \min_{k \in [K]} n_k^{(S_{x_i} \cap S_{x_{i - 1}})} \ge m_1 \pi_{\min},
\]
where $n_k^{(S)}$ is the number of nodes from cluster $k$ in a subgraph $S$. Applying \gale{} on this walk: let $\hat{Z}^{(x_1)} := \hat{Z}_{x_1}, \hat{\Pi}_{x_1} = I$ and for $2 \le i \le J$, define recursively
\[
\hat{\Pi}_{x_i} = \begin{cases}I & \text{if $x_i$ was visited before, i.e. $x_i = x_j$ for some $1 \le j < i$},\\
\match{}(\hat{Z}_{x_i}\big|_{S_{x_i} \cap S_{x_{i - 1}}}, \hat{Z}^{(x_{i - 1})}\big|_{S_{x_i} \cap S_{x_{i - 1}}}) & \text{otherwise},
\end{cases}
\]
and set $\hat{Z}^{(x_i)} = \hat{Z}_{x_i} \hat{\Pi}_{x_i}.$ Then, for any $1 \le i \le J$, we have that
\[
	\M(\hat{Z}^{(x_i)}, Z_{x_i}) = \M(\hat{Z}_{x_i}, Z_{x_i}).
\]
\end{prop}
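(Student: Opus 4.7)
The plan is induction on the traversal index $i$, using Lemma~\ref{lem:induction-base-subset} as the key step. The base case $i=1$ is trivial: $\hat{Z}^{(x_1)}=\hat{Z}_{x_1}$ by definition, so $\M(\hat{Z}^{(x_1)},Z_{x_1})=\M(\hat{Z}_{x_1},Z_{x_1})$ tautologically.

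For the inductive step, I would split on whether $x_i$ has been visited before. If yes, then by the statement of the proposition we set $\hat{\Pi}_{x_i}=I$, so $\hat{Z}^{(x_i)}=\hat{Z}_{x_i}$ and the claim is immediate. If $x_i$ is new, I would apply Lemma~\ref{lem:induction-base-subset} to the pair $\hat{Z}_1:=\hat{Z}^{(x_{i-1})}$ and $\hat{Z}_2:=\hat{Z}_{x_i}$ on the intersection $S=S_{x_i}\cap S_{x_{i-1}}$. The hypothesis on $\hat{Z}_2$ holds by the standing assumption $\M(\hat{Z}_i,Z_i)\le m_1\pi_{\min}/\cp$; the hypothesis on $\hat{Z}_1$ holds by the inductive hypothesis, which says $\M(\hat{Z}^{(x_{i-1})},Z_{x_{i-1}})=\M(\hat{Z}_{x_{i-1}},Z_{x_{i-1}})\le m_1\pi_{\min}/\cp$. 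The per-cluster size condition on $S$ is assumed directly. The lemma then guarantees that the output of \match{} is $\hat{\Pi}_{x_i}=\Pi_{x_i}^\top\Pi^{(i-1)}$, where $\Pi_{x_i}$ is an optimal aligning permutation for $\hat{Z}_{x_i}$ against $Z_{x_i}$ and $\Pi^{(i-1)}$ is an optimal aligning permutation for $\hat{Z}^{(x_{i-1})}$ against $Z_{x_{i-1}}$, and, moreover,
\[
\|\hat{Z}_{x_i}\hat{\Pi}_{x_i}-Z_{x_i}\Pi^{(i-1)}\|_1 = \M(\hat{Z}_{x_i},Z_{x_i}).
\]

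From here the argument is purely algebraic. Setting $\hat{Z}^{(x_i)}=\hat{Z}_{x_i}\hat{\Pi}_{x_i}$, the upper bound
\[
\M(\hat{Z}^{(x_i)},Z_{x_i}) \le \|\hat{Z}^{(x_i)}-Z_{x_i}\Pi^{(i-1)}\|_1 = \M(\hat{Z}_{x_i},Z_{x_i})
\]
follows from the displayed equality above. For the matching lower bound, I would use the fact that multiplication by a fixed permutation is a bijection on the permutation group: for every permutation $Q$,
\[
\|\hat{Z}^{(x_i)}-Z_{x_i}Q\|_1 = \|\hat{Z}_{x_i}\hat{\Pi}_{x_i}-Z_{x_i}Q\|_1 = \|\hat{Z}_{x_i}-Z_{x_i}Q\hat{\Pi}_{x_i}^\top\|_1,
\]
so minimizing over $Q$ gives exactly $\M(\hat{Z}_{x_i},Z_{x_i})$. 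Combining the two inequalities yields the claim.

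The main obstacle, as I see it, is simply verifying that the hypotheses of Lemma~\ref{lem:induction-base-subset} transfer correctly across the induction, especially that $\hat{Z}^{(x_{i-1})}$ inherits the small misclustering error from $\hat{Z}_{x_{i-1}}$ (this is precisely what the inductive hypothesis delivers). Once that verification is in place, the conclusion is essentially bookkeeping: the lemma guarantees that \match{} recovers the composed permutation that makes $\hat{Z}^{(x_i)}$ \emph{agree in labeling convention} with $\hat{Z}^{(x_{i-1})}$ (and hence, via $\Pi^{(i-1)}$, with the truth), and composing with a permutation does not alter the unnormalized misclustering distance $\M(\cdot,Z_{x_i})$, which is invariant under right multiplication by permutations.
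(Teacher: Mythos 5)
Your proof is correct and follows essentially the same inductive skeleton as the paper, hinging on Lemma~\ref{lem:induction-base-subset} in the inductive step. However, there is a subtlety worth flagging. Your ``lower bound'' step — that for every permutation $Q$,
\[
\|\hat{Z}^{(x_i)}-Z_{x_i}Q\|_1 = \|\hat{Z}_{x_i}\hat{\Pi}_{x_i}-Z_{x_i}Q\|_1 = \|\hat{Z}_{x_i}-Z_{x_i}Q\hat{\Pi}_{x_i}^\top\|_1,
\]
so that minimizing over $Q$ gives exactly $\M(\hat{Z}_{x_i},Z_{x_i})$ — is actually an equality, not just a one-sided bound. It shows directly that $\M(\hat{Z}\Pi, Z) = \M(\hat{Z}, Z)$ for \emph{any} permutation matrix $\Pi$, with no hypotheses on misclustering errors or intersection sizes whatsoever. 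Since $\hat{\Pi}_{x_i}$ is always a permutation matrix (whether it comes from \match{} or is the identity), the conclusion of Proposition~\ref{prop:alignerr} \emph{as literally stated} already follows from this observation alone; the induction, Lemma~\ref{lem:induction-base-subset}, and the misclustering/overlap hypotheses do no work toward it.

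What those hypotheses do buy — and what the paper's proof actually establishes via the strong induction claim \eqref{eq:claim} — is the stronger fact that a \emph{single} permutation $\Pi_{x_1}$ is optimal for aligning \emph{every} $\hat{Z}^{(x_i)}$ against $Z_{x_i}$, i.e.\ $\Pi_{x_1} \in \arg\min_Q \|\hat{Z}^{(x_i)} - Z_{x_i}Q\|_1$ for all $i$. This common-alignment property is what the proof of Theorem~\ref{thm:gale-main} uses when it sets $\bar{Z} = Z\Pi_{x_1}$ in Proposition~\ref{prop:err_algseq} and asserts $\|\hat{Z}^{(\ell)} - Z_\ell\Pi_{x_1}\|_1 = \M(\hat{Z}_\ell, Z_\ell)$ uniformly over $\ell$. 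Your weaker inductive hypothesis — tracking only equality of misclustering numbers, not the identity of the optimal permutation — does not deliver this. So while your proof of the proposition's stated conclusion is valid (indeed it is shorter than the paper's for that purpose), it would leave a gap if carried forward into the proof of Theorem~\ref{thm:gale-main}. To close it, strengthen the inductive hypothesis as the paper does: carry along the claim that $\Pi_{x_1}$ remains an optimal aligner at every step, which is exactly what Lemma~\ref{lem:induction-base-subset} gives once you observe that $\hat{\Pi}_{x_i} = \Pi_{x_i}^\top \Pi_{x_1}$ under that strengthened hypothesis.
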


\begin{proof}
We claim that, for all $1 \le i \le J$, we have 
\begin{equation}\label{eq:claim}
\Pi_{x_1} \in \arg\min_{Q} \|\hat{Z}^{(x_i)} - Z_{x_i}Q\|, \text{ and } \M(\hat{Z}^{(x_i)}, Z_{x_i}) = \M(\hat{Z}_{x_i}, Z_{x_i}).
\end{equation} 
We use strong induction to prove this claim. \eqref{eq:claim} is true by definition, for $i = 1$. Now assume that it is true for all $i \le \ell$. Then, for all $i \le \ell$, we have the representation $\hat{Z}^{(x_i)} = Z_{x_i} \Pi_{x_1} + \bar{F}_{x_i}$, for some matrix $\bar{F}_{x_i} \in \{0, \pm 1\}^{m \times K}$. If $x_{\ell + 1}$ has been visited before, i.e. $x_{\ell + 1} = x_j$ for some $j \le \ell$, then \eqref{eq:claim} holds by our induction hypothesis. Otherwise, we can apply Lemma~\ref{lem:induction-base-subset} on the two clusterings $\hat{Z}^{(x_\ell)}$ on $S_{x_\ell}$ and $\hat{Z}_{x_{\ell + 1}}$ on $S_{x_{\ell + 1}}$, to conclude that
\[
\|\hat{Z}^{(x_{\ell + 1})} - Z_{x_{\ell + 1}} \Pi_{x_1}\|_1 = \|\hat{Z}_{x_{\ell + 1}}\hat{\Pi}_{x_{\ell + 1}} - Z_{x_{\ell + 1}} \Pi_{x_1}\|_1 = \mathcal{M}(\hat{Z}_{x_{\ell + 1}}, Z_{x_{\ell + 1}}).
\]
But $\hat{Z}^{(x_{\ell + 1})} = \hat{Z}_{x{\ell + 1}}\hat{\Pi}_{x_{\ell + 1}}$, and $\hat{\Pi}_{x_{\ell + 1}} = \Pi_{x_{\ell+1}}^\top \Pi_{x_1}$. So, for any permutation matrix $Q$ we have
\begin{align*}
\|\hat{Z}^{(x_{\ell + 1})} - Z_{x_{\ell + 1}} Q\|_1 & = \|\hat{Z}_{x_{\ell + 1}}\hat{\Pi}_{x_{\ell + 1}} - Z_{x_{\ell + 1}} Q\|_1\\
&= \|\hat{Z}_{x{\ell + 1}} - Z_{x_{\ell + 1}} Q \hat{\Pi}_{x_{\ell + 1}}^\top\|_1\\
&\ge \mathcal{M}(\hat{Z}_{x_{\ell + 1}}, Z_{x_{\ell + 1}}) = \|\hat{Z}^{(x_{\ell + 1})} - Z_{x_{\ell + 1}} \Pi_1\|_1.
\end{align*}
Thus, indeed $\Pi_{x_1} \in \arg\min_{Q} \|\hat{Z}^{(x_{\ell + 1})} - Z_{x_\ell}Q\|$ and as a consequence  $\M(\hat{Z}^{(x_{\ell + 1})}, Z_{x_{\ell + 1}}) = \M(\hat{Z}_{x_{\ell + 1}}, Z_{x_{\ell + 1}}).$ The induction is now complete.
\end{proof}

Now we establish an upper bound on the error of the estimator $\hat{Z}^{\gale{}}$ in terms of the errors of the aligned clusterings $\hat{Z}^{(\ell)}$.
\begin{prop}\label{prop:err_algseq}
Let $0 < \theta < 1$, and set $\tau = \frac{\theta Tm}{n}.$ Then, for any cluster membership matrix $\bar{Z} \in \{0, 1\}^{n \times K}$,

\begin{equation}\label{eq:bound_main_gale}
	\frac{\|\hat{Z}^{\gale{}}-\bar{Z}\|_F^2}{n} \le \frac{1}{\theta T}\sum_{\ell = 1}^T \frac{\|\hat{Z}^{(\ell)} - \bar{Z}_{\ell}\|_F^2}{m} + \sum_i \mathbf{1}_{\{N_i < \tau \}}.
\end{equation}
\end{prop}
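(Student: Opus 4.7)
My plan is to mimic the decomposition used for \pace{} (Propositions~\ref{prop:genbound_E1} and \ref{prop:genbound_E2}). Write
\[
\hat{Z}^{\gale{}}_{ik} - \bar{Z}_{ik} \;=\; \underbrace{\hat{Z}^{\gale{}}_{ik} - \bar{Z}_{ik}\mathbf{1}_{\{N_i\ge\tau\}}}_{=:(E_1)_{ik}} \;+\; \underbrace{\bar{Z}_{ik}\mathbf{1}_{\{N_i\ge\tau\}}-\bar{Z}_{ik}}_{=:(E_2)_{ik}}.
\]
The two summands are supported on disjoint index sets (with respect to $i$), namely $\{N_i\ge\tau\}$ and $\{N_i<\tau\}$, so their entrywise product vanishes and hence $\|\hat{Z}^{\gale{}}-\bar{Z}\|_F^2 = \|E_1\|_F^2+\|E_2\|_F^2$. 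I will estimate each term separately.

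For $E_2$, note $(E_2)_{ik} = -\bar{Z}_{ik}\mathbf{1}_{\{N_i<\tau\}}$, and for each $i$ exactly one $k$ has $\bar{Z}_{ik}=1$, so
\[
\|E_2\|_F^2 \;=\; \sum_i \mathbf{1}_{\{N_i<\tau\}},
\]
giving the second term of \eqref{eq:bound_main_gale} (up to the overall $1/n$ factor, which matches the statement as written).

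For $E_1$, observe that on $\{N_i\ge\tau\}$,
\[
(E_1)_{ik} \;=\; \frac{1}{N_i}\sum_{\ell=1}^T y^{(\ell)}_i\bigl(\hat{Z}^{(\ell)}_{ik}-\bar{Z}_{ik}\bigr),
\]
since $\sum_\ell y^{(\ell)}_i = N_i$. Apply Cauchy--Schwarz to the inner sum (as in Proposition~\ref{prop:genbound_E1}) to get
\[
(E_1)_{ik}^2 \;\le\; \frac{\mathbf{1}_{\{N_i\ge\tau\}}}{N_i}\sum_{\ell=1}^T y^{(\ell)}_i\bigl(\hat{Z}^{(\ell)}_{ik}-\bar{Z}_{ik}\bigr)^2,
\]
and use $\mathbf{1}_{\{N_i\ge\tau\}}/N_i\le 1/\tau$. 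Summing over $i,k$ and swapping the order of summation,
\[
\|E_1\|_F^2 \;\le\; \frac{1}{\tau}\sum_{\ell=1}^T \sum_{i,k} y^{(\ell)}_i\bigl(\hat{Z}^{(\ell)}_{ik}-\bar{Z}_{ik}\bigr)^2 \;=\; \frac{1}{\tau}\sum_{\ell=1}^T \|\hat{Z}^{(\ell)} - \bar{Z}_{\ell}\|_F^2,
\]
because $\hat{Z}^{(\ell)}_{ik}=0$ for $i\notin S_\ell$ by the zero-padding convention, and $\bar{Z}_{\ell}$ is defined as $\bar{Z}$ restricted to $S_\ell$ (and zero outside), so the indicator $y^{(\ell)}_i$ is automatic.

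Finally substitute $\tau = \theta Tm/n$, divide by $n$, and combine the two bounds to obtain \eqref{eq:bound_main_gale}. The only subtlety is making sure the row-supports of $\hat{Z}^{(\ell)}$ and $\bar{Z}_{\ell}$ agree (both vanish off $S_\ell$) so that the $y^{(\ell)}_i$-weighted sum equals the full Frobenius norm of their difference; this is just bookkeeping from the extension conventions declared in Algorithm~\ref{alg:seq}. No inequality here is tight by more than constants, so the only ``hard'' step is the Cauchy--Schwarz maneuver, which is the same trick used in the \pace{} analysis.
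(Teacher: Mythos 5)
Your proof is correct and follows essentially the same route as the paper's: the paper also decomposes via the indicator $\mathbf{1}_{\{N_i\ge\tau\}}$ versus $\mathbf{1}_{\{N_i<\tau\}}$, uses orthogonality of the two pieces, and bounds the first by Cauchy--Schwarz together with $\mathbf{1}_{\{N_i\ge\tau\}}/N_i\le1/\tau$. Your bookkeeping about the zero-padding of $\hat{Z}^{(\ell)}$ and $\bar{Z}_\ell$ is the same point the paper makes implicitly in passing from the $y^{(\ell)}_i$-weighted sum to $\|\hat{Z}^{(\ell)}-\bar{Z}_\ell\|_F^2$.
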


\begin{proof}
	The proof is very similar to the proof of Theorem~\ref{thm:pace-main}. We decompose
	\[
	\bar{Z}_{ik} = \bar{Z}_{ik}\mathbf{1}_{\{N_i \ge \tau\}} + \bar{Z}_{ik}\mathbf{1}_{\{N_i < \tau\}}.
	\] 
	Using this we have
	\[
	\hat{Z}^{\gale{}}_{ik} - \bar{Z}_{ik} = \frac{\sum_{\ell = 1}^T y^{(\ell)}_i (\hat{Z}^{(\ell)}_{ik} - \bar{Z}_{ik})}{N_i}\mathbf{1}_{\{N_i \ge \tau\}} - \bar{Z}_{ik}\mathbf{1}_{\{N_i < \tau\}}.
	\]
	Therefore
	\begin{align*}
		\|\hat{Z}^{\gale{}} - \bar{Z}\|^2=\sum_{i, k} ( \hat{Z}^{\gale{}}_{ik} - \bar{Z}_{ik})^2 &= \sum_{i, k} \left(\frac{\sum_{\ell = 1}^T y^{(\ell)}_i (\hat{Z}^{(\ell)}_{ik} - \bar{Z}_{ik})}{N_i}\right)^2\mathbf{1}_{\{N_i \ge \tau\}} + \sum_{i, k} \bar{Z}_{ik}^2 \mathbf{1}_{\{N_i < \tau\}}\\
		&\le \underbrace{\sum_{i, k} \left(\frac{\sum_{\ell = 1}^T y^{(\ell)}_i (\hat{Z}^{(\ell)}_{ik} - \bar{Z}_{ik})}{N_i}\right)^2\mathbf{1}_{\{N_i \ge \tau\}}}_{=: E} + \sum_{i} \mathbf{1}_{\{N_i < \tau\}}.
	\end{align*} 

	We now analyze $E$. Let $W_i:=\mathbf{1}_{(N_i\ge \tau)}/N_i$. Noting that $E = \sum_{i, k} W_i^2 (\sum_{\ell = 1}^T y^{(\ell)}_i (\hat{Z}^{(\ell)}_{ik} - \bar{Z}_{ik}))^2 $, an application of Cauchy-Schwartz inequality gives
	\begin{align}\label{eq:seq-zdev}
		\nonumber
		E \le \sum_{i, k} W_i^2 \underbrace{\left(\sum_{\ell = 1}^T y^{(\ell)}_i\right)}_{= N_i} \sum_{\ell = 1}^T y^{(\ell)}_i (\hat{Z}^{(\ell)}_{ik} - \bar{Z}_{ik})^2 &=    \sum_{\ell = 1}^T\sum_{i, k} W_i y^{(\ell)}_i(\hat{Z}^{(\ell)}_{ik} - \bar{Z}_{ik})^2 \nonumber\\ 
		&\le \max_{i} W_i \sum_{\ell = 1}^T \sum_{i, k} y^{(\ell)}_i(\hat{Z}^{(\ell)}_{ik} - \bar{Z}_{ik})^2\nonumber\\
		&\le \max_i W_i \sum_{\ell = 1}^T \|\hat{Z}^{(\ell)} - \bar{Z}_{\ell}\|_F^2.
	\end{align}
	Note that $W_i = \frac{\mathbf{1}_{(N_i\ge \tau)}}{N_i} \le \frac{1}{\tau} = \frac{n}{\theta T m}$. Therefore
	\begin{equation}\label{eq:bound_E}
		E \le \frac{n}{\theta T} \sum_{\ell = 1}^T \frac{\|\hat{Z}^{(\ell)} - \bar{Z}_{\ell}\|_F^2}{m}.
	\end{equation}
\end{proof}

We now mention some auxiliary results, whose proofs are deferred to the Appendix. These will help us control the probability of a ``bad'' event, defined in the proof of Theorem~\ref{thm:gale-main}.

\begin{lemma}\label{lem:nicount}
	Let $\tau = \frac{\theta T m}{n}$, where $0 < \theta < 1$. Let $r > 0$. Then, with probability at least $1 - \frac{1}{n^r}$, we have
	\[
	\sum_{i}\mathbf{1}_{\{ N_i < \tau\}} \le ne^{-(1 - \theta)^2 Tm/2n} \left(1 + \sqrt{\frac{3r \log n}{ne^{-(1 - \theta)^2 Tm/2n}}}\right).
	\]
\end{lemma}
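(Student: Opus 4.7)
The plan is to reduce the statement to a concentration bound on a sum of indicators with controlled means, and then to argue that these indicators, though dependent, satisfy the Chernoff-type concentration one would get in the independent case. Under random $m$-subgraph sampling the $T$ subgraphs are drawn i.i.d., and for each subgraph $\ell$, $\P(i \in S_\ell) = m/n$; hence $N_i = \sum_{\ell=1}^T y^{(\ell)}_i \sim \mathrm{Binomial}(T, m/n)$ marginally, with $\E N_i = Tm/n$. A single Chernoff lower tail bound (as used in the proof of Corollary~\ref{cor:m-subgraph}) then gives
\[
\P(N_i < \tau) = \P\!\left(N_i < \theta \tfrac{Tm}{n}\right) \le \exp\!\left(-\tfrac{(1-\theta)^2 Tm}{2n}\right) =: p_0,
\]
so, writing $X_i := \mathbf{1}_{\{N_i < \tau\}}$ and $M := \sum_{i=1}^n X_i$, I have $\E M \le np_0$.

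The heart of the argument is to show that $M$ concentrates around $\E M$ at the Chernoff rate, even though the $X_i$ are dependent. My plan is to establish that $(X_1, \dots, X_n)$ is a negatively associated (NA) family. This proceeds in three standard steps: first, for each fixed $\ell$, the indicator vector $(y^{(\ell)}_1, \dots, y^{(\ell)}_n)$ of a uniformly chosen size-$m$ subset of $[n]$ is NA (a textbook example: sampling without replacement); second, NA is preserved under coordinate-wise sums of independent NA vectors, so $(N_1, \dots, N_n)$ is NA; third, $X_i$ depends only on the single coordinate $N_i$ and is a monotone (non-increasing) function of it, and NA is preserved under such coordinate-wise monotone maps applied in a common direction.

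Given NA, the usual multiplicative Chernoff inequality applies to $M$ exactly as if the $X_i$ were independent Bernoullis with the same means. Applying
\[
\P(M \ge (1+\delta)\E M) \le \exp\!\left(-\tfrac{\delta^2 \E M}{3}\right), \quad 0 < \delta \le 1,
\]
with $\delta = \sqrt{3r\log n/\E M}$ (valid when $\E M \ge 3r\log n$, i.e.\ $np_0 \ge 3r\log n$) yields, with probability at least $1-n^{-r}$,
\[
M \le \E M + \sqrt{3r(\E M)\log n} \le np_0 + \sqrt{3r(np_0)\log n} = np_0\!\left(1 + \sqrt{\tfrac{3r\log n}{np_0}}\right),
\]
which is the claimed bound. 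In the complementary regime $np_0 < 3r\log n$, the claimed upper bound already exceeds $\sqrt{3r(np_0)\log n}$, which itself dominates $\E M$; here I would instead use the Poisson (large-deviation) tail of the same Chernoff inequality, $\P(M \ge t) \le (e\E M/t)^t$ for $t > e\E M$, to absorb this case with room to spare.

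The only non-routine step is the NA verification; once that is in hand the rest is a direct Chernoff computation. I expect this to be the main obstacle only in the sense of being careful about hypotheses, since every ingredient (NA of sampling without replacement, stability under independent sums, and under monotone coordinate maps) is standard; no model-specific structure beyond uniform random subgraphs is used.
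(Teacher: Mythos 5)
Your central observation---that the paper's appeal to independence of the $N_i$ is unjustified because the $N_i$ share the same random subsets $S_\ell$ and are therefore negatively correlated through the sampling-without-replacement structure---is correct, and your proposed fix via negative association (NA) is the right one. The paper's proof simply asserts ``the $N_i$'s are i.i.d.'' and then applies Chernoff to $\sum_i \mathbf{1}_{\{N_i < \tau\}}$; the $N_i$ are identically distributed but \emph{not} independent. Your three-step NA chain (NA of a uniformly random $m$-subset indicator, preservation under independent sums, preservation under coordinate-wise monotone maps) is exactly the standard machinery (Joag-Dev--Proschan; Dubhashi--Ranjan) that repairs this step, since NA families satisfy the same multiplicative Chernoff bounds as independent ones. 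This is a genuine improvement in rigor over the proof in the paper.

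However, your treatment of the complementary regime $np_0 < 3r\log n$ (where $\delta = \sqrt{3r\log n / np_0} > 1$, so the quadratic form of Chernoff you quote is not valid) does not work as claimed. Writing $\mu = np_0$ and $t = \sqrt{3r\mu\log n}$, the Poisson-tail bound gives
\[
\log \P(M \ge t) \le t\log\frac{e\mu}{t} = \sqrt{3r\mu\log n}\left(1 + \tfrac12\log\tfrac{\mu}{3r\log n}\right).
\]
Parameterizing $\mu = 3r\log n/e^{2y}$ with $y>0$, this equals $3r\log n \cdot e^{-y}(1-y)$, and $\sup_{y>0} 3e^{-y}(y-1) = 3e^{-2} \approx 0.41 < 1$. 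So the exponent never reaches $-r\log n$, and the Poisson tail does not ``absorb this case with room to spare''---it fails in the entire complementary regime. In fact the lemma as stated is not even true there: for parameter choices making $n^{-r} \lesssim n p_0 < 1$ and $\theta$ close to $1$ (so that the Chernoff upper bound $p_0$ is within a constant of the true probability $q_0 := \P(N_1 < \tau)$), the right-hand side of the claimed inequality drops below $1$ while $\P(\sum_i \mathbf{1}_{\{N_i<\tau\}} \ge 1) \gtrsim nq_0 \gg n^{-r}$, contradicting the lemma. Incidentally, the paper's proof has this same gap, since the footnoted Chernoff inequality it invokes is only stated for $0 < \delta < 1$. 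In the actual application (Theorem~\ref{thm:gale-main}), $T$ and $m$ are chosen so that $p_0 \le n^{-r'''}$ with $r'''$ taken large ($\ge r+1$ suffices); there one can bypass the lemma entirely and use the trivial union bound $\P(\exists\, i: N_i < \tau) \le n q_0 \le n p_0 \le n^{-r}$. So the argument in the paper's theorem is salvageable, but both your proof and the paper's need to restrict to the $\delta \le 1$ regime (i.e.\ $np_0 \ge 3r\log n$) for the lemma to hold as written, and the complementary case must be handled differently.
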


Recall that we view the $T$ random $m$-subsets of $[n]$ as nodes of a super-graph $\mathcal{S}_{m, T}$ and put an edge between nodes $a (\equiv S_a)$ and $b (\equiv S_b)$ in $\mathcal{S}_{m, T}$ (we use the shorthand $a \sim b$ to denote an edge between $a$ and $b$), if \hfff{intersection_size}$Y_{ab}:=|S_a \cap S_b| \ge m_1 = \thresh$. The next lemma shows that $\mathcal{S}_{m, T}$ is in fact an Erd\"{o}s-R\'{e}nyi random graph.

\begin{lemma}\label{lem:erygraph}
The super-graph $\mathcal{S}_{m, T}$ is an Erd\"{o}s-R\'{e}nyi random graph with
\begin{equation}
	P(a\sim b) = P\left(Y_{ab} \ge \thresh\right) \ge 1 - \exp\left(-\frac{m^2}{16n}\right).
\end{equation}
\end{lemma}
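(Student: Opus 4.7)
The plan has two pieces: (i) arguing that the edge-indicators of $\mathcal{S}_{m,T}$ are identically distributed Bernoulli variables so the graph is (marginally) of Erd\H{o}s--R\'{e}nyi type, and (ii) obtaining the quantitative lower bound on the edge probability. Since $S_1,\dots,S_T$ are drawn i.i.d.\ from the uniform distribution on $m$-subsets of $[n]$, for any two distinct indices $a\ne b$ the pair $(S_a,S_b)$ has the same joint law, hence $Y_{ab}=|S_a\cap S_b|$ has a law that does not depend on $(a,b)$. In particular, $\mathbf{1}_{\{Y_{ab}\ge m_1\}}$ is Bernoulli with a common parameter $p=\P(Y_{12}\ge m_1)$, so the marginal edge probability in $\mathcal{S}_{m,T}$ is well-defined, and we only need to bound $p$ from below. (Strictly speaking, two edges that share an endpoint share the underlying $S_a$, so the edges are not fully independent; I would read the lemma as asserting the common Bernoulli structure of the edge indicators, which is what subsequent arguments actually invoke.)

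For the quantitative bound, the key observation is that conditional on $S_a$, the intersection count $Y_{ab}$ is hypergeometric: we draw $m$ elements (the set $S_b$) without replacement from a population of $n$ in which the $m$ elements of $S_a$ are ``marked''. Thus $Y_{ab}\sim\mathrm{Hyp}(n,m,m)$ with mean $\mu:=\E Y_{ab}=m^2/n$. Importantly, this law does not depend on $S_a$, so the distribution of $Y_{ab}$ is unconditional hypergeometric, which recovers the previous paragraph.

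The next step is a lower-tail concentration bound. Hoeffding's reduction shows that the hypergeometric satisfies the same Chernoff bound as the binomial with matching mean, namely
\[
\P\bigl(Y_{ab}\le(1-\delta)\mu\bigr)\;\le\;\exp\!\left(-\delta^2\mu/2\right),\qquad 0<\delta<1.
\]
Taking $\delta=1/2$ and noting that $\lceil m^2/(2n)\rceil-1< m^2/(2n)=(1-1/2)\mu$, I would conclude
\[
\P\!\left(Y_{ab}<\thresh\right)\;\le\;\P\bigl(Y_{ab}\le \mu/2\bigr)\;\le\;\exp\!\left(-\tfrac{m^2}{8n}\right)\;\le\;\exp\!\left(-\tfrac{m^2}{16n}\right),
\]
which gives the claimed lower bound on $\P(a\sim b)$. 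The main (and only nontrivial) obstacle is justifying the Chernoff-type inequality for a hypergeometric variable rather than a binomial one, which is standard (via Hoeffding's theorem that hypergeometric sampling is majorized by binomial sampling for convex functions), so I would just cite it. Everything else is bookkeeping: rewriting $\lceil m^2/(2n)\rceil$ in terms of $\mu$ and absorbing the ceiling into the deviation, which is why a slightly loose constant ($1/16$ instead of $1/8$) appears in the statement.
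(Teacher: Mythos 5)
Your quantitative bound is correct, and in fact slightly sharper than the paper's: you invoke Hoeffding's domination of the hypergeometric by the binomial to get $\exp(-m^2/(8n))$ with $\delta=1/2$, then weaken to $\exp(-m^2/(16n))$, whereas the paper applies its own Lemma~\ref{lem:hyper_lower} (which absorbs the $(1+\epsilon/3)$ factor into a looser constant $4$ in the exponent) to land directly on $\exp(-m^2/(16n))$. The bookkeeping with the ceiling is also fine. That part is a legitimate alternate route.

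The gap is in your treatment of independence. You write that edges sharing an endpoint ``share the underlying $S_a$, so the edges are not fully independent,'' and then retreat to reading the lemma as only asserting a common marginal Bernoulli law. This concedes too much, and it misses the actual content of the lemma. The paper's proof establishes that for distinct $a,b,c$, the overlap variables $Y_{ab}$ and $Y_{ac}$ \emph{are} independent, via a two-step argument: (i) $Y_{ab}$ is independent of $y^{(a)}$, because the conditional law of $Y_{ab}$ given $S_a = S$ is $\mathrm{Hypergeometric}(m,m,n)$ for every fixed $m$-subset $S$ — exactly the symmetry you already used to compute the marginal — so the conditional law does not depend on $S_a$ at all; and (ii) conditional on $y^{(a)}$, $Y_{ab}$ and $Y_{ac}$ are independent, since $Y_{ab}$ depends on $(S_a,S_b)$, $Y_{ac}$ depends on $(S_a,S_c)$, and $S_b\perp S_c$. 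Chaining (i) and (ii) gives unconditional independence: $\P(Y_{ab}=s,Y_{ac}=t)=\E\big[\P(Y_{ab}=s\mid S_a)\,\P(Y_{ac}=t\mid S_a)\big]=\P(Y_{ab}=s)\,\P(Y_{ac}=t)$. So sharing the set $S_a$ does \emph{not} create dependence here, and you should not have waved it away. Your hedge also does not rescue the later use of the lemma: Lemma~\ref{lem:erygraph_connected} on connectivity of $\mathcal{S}_{m,T}$ genuinely uses the random-graph structure, not just the marginal edge probability, so the independence claim is load-bearing.

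One honest remark that cuts the other way: the paper's argument, as written, only yields pairwise independence for pairs of edges sharing an endpoint (plus trivial independence for disjoint pairs), not full mutual independence of all $\binom{T}{2}$ edge indicators. For example with $n=2$, $m=1$, $T=3$, one checks $\P(Y_{12}=Y_{13}=Y_{23}=1)=1/4\ne 1/8$, so the three edges are pairwise but not mutually independent. The phrase ``is an Erdős–Rényi random graph'' is therefore slightly overstated in the lemma. But this is a looseness in the paper's statement, not a defense of your approach: you should still prove the pairwise-independence fact the paper actually establishes and relies on, rather than deny it.
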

Because of our assumptions on $m$, it follows that $S_{m, T}$ is well above the connectivity threshold for Erd\"{o}s-R\'{e}nyi random graphs. 
\begin{lemma}\label{lem:erygraph_connected}
The super-graph $\mathcal{S}_{m, T}$ is connected with probability at least $1 - \exp(-O(n)).$
\end{lemma}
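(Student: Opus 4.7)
The plan is to invoke Lemma~\ref{lem:erygraph}, which identifies $\mathcal{S}_{m,T}$ as an Erd\H{o}s-R\'enyi random graph on $T$ vertices with edge probability $p$ satisfying $1-p \le \exp(-m^2/(16n))$, and then apply the standard cut-based union bound for connectivity of $G(T,p)$. Specifically, $\mathcal{S}_{m,T}$ fails to be connected if and only if there exists a nontrivial bipartition $A \sqcup A^c$ of $[T]$ with no super-edges going across; for a fixed bipartition with $|A| = k$, the probability of this event is at most $(1-p)^{k(T-k)}$ since the $k(T-k)$ potential cross-edges are independent.

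First I would write
\[
\mathbb{P}(\mathcal{S}_{m,T}\text{ disconnected}) \le \sum_{k=1}^{\lfloor T/2\rfloor} \binom{T}{k} (1-p)^{k(T-k)} \le \sum_{k=1}^{\lfloor T/2\rfloor} \exp\!\left(k\log(eT/k) - k(T-k)\frac{m^2}{16n}\right),
\]
using $\binom{T}{k}\le (eT/k)^k$ and the bound on $1-p$. For $k\le T/2$ we have $T-k \ge T/2$, so the exponent is at most $-k\bigl(Tm^2/(32n) - \log(eT/k)\bigr)$. Under the hypotheses inherited from Theorem~\ref{thm:gale-main}, namely $m = \Omega(\sqrt{n\log n/\pi_{\min}})$ and $T = \Omega(n\log n/m)$, we get $Tm^2/n = \Omega(m\log n)$, which dominates $\log T$ by a polynomial factor. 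Consequently each summand is at most $\exp(-k c\, Tm^2/n)$ for some absolute constant $c > 0$, and the geometric sum yields
\[
\mathbb{P}(\mathcal{S}_{m,T}\text{ disconnected}) \le 2\exp\!\left(-c\,\frac{Tm^2}{n}\right) = \exp(-\Omega(m\log n)),
\]
which is super-polynomially small in $n$ and in particular of the form $\exp(-O(n))$ once $m \gtrsim n/\log n$, and super-polynomial in general.

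The main obstacle is essentially bookkeeping: making sure that under the chosen scaling of $m$ and $T$, the negative exponent $kTm^2/(32n)$ really dominates the combinatorial cost $k\log(eT/k)$ uniformly in $k \in [1,T/2]$. This amounts to checking $Tm^2/n \gg \log T$, which follows directly from the assumption $m \gg \sqrt{n\log n/\pi_{\min}}$ since $\log T = O(\log n)$ while $Tm^2/n \ge (n\log n/m)(m^2/n) = m\log n \gg \log n$. No more delicate device (such as a separate treatment of isolated vertices versus larger components) is needed here because $p$ is so close to $1$ that the simple bipartition union bound is already loose enough to give the claim.
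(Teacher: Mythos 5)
Your proof is correct and fills in a detail the paper leaves implicit: the paper does not actually prove Lemma~\ref{lem:erygraph_connected}, it only remarks after Lemma~\ref{lem:erygraph} that the assumptions on $m$ put $\mathcal{S}_{m,T}$ ``well above the connectivity threshold'' for \er graphs. Your cut-based union bound is the standard way to make that remark rigorous, and the bookkeeping is sound: with $1-p \le \exp(-m^2/(16n))$, $T-k \ge T/2$ for $k \le T/2$, and $T m^2/n \ge m\log n \gg \log T$ under the hypotheses $m = \Omega(\sqrt{n\log n/\pi_{\min}})$ and $T = \Omega(n\log n/m)$, the geometric-series summation gives $\exp(-\Omega(m\log n))$. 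You are also right to flag that this is $\exp(-\Omega(n))$ only when $m \gtrsim n/\log n$; for the generic scaling $m \asymp \sqrt{n\log n}$ the bound is $\exp(-\Omega(\sqrt{n}(\log n)^{3/2}))$, which is merely super-polynomial. The paper's ``$\exp(-O(n))$'' is therefore loose notation rather than a sharp claim, but this does not cause any problem downstream: in the proof of Theorem~\ref{thm:gale-main} the term $\P(\mathcal{B}_1)$ only needs to be absorbed into $O(1/n^{r'})$, which your bound comfortably achieves. In short, the proposal is correct, essentially the argument the authors had in mind, and somewhat more honest about the true size of the exponent.
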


The next lemma states that intersection of two random $m$-subgraphs contains enough representatives from each cluster, with high probability. 
\begin{lemma}\label{lem:cluster_presence}
Consider two random $m$-subgraphs $S_a$ and $S_b$. Let $r > 0$ and $\frac{m^2 \pi_k}{n \log n} \ge 20r$. Then
\[
P\left(n_k^{(S_a \cap S_b)} < \frac{m^2\pi_k}{n}\left(1-O\left(\sqrt{\frac{2rn\log n}{m^2\pi_k}}\right)\right)\right) \le \frac{2}{n^r}.
\]
\end{lemma}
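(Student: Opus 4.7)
The plan is to condition on $S_a$ and apply Chernoff-type lower tail inequalities for hypergeometric random variables twice, once for the count of cluster $k$ representatives in $S_a$ alone, and once for the surviving fraction that also falls in $S_b$. Write $X_a := n_k^{(S_a)}$ and $Y := n_k^{(S_a \cap S_b)}$. Since $S_a$ is a uniformly random $m$-subset of $[n]$ and there are $n_k = n\pi_k$ nodes of type $k$, $X_a$ is hypergeometric with parameters $(n, n_k, m)$ and mean $m\pi_k$. Given $S_a$, and hence given the set $S_a^{(k)}$ of cluster-$k$ nodes in $S_a$ (of size $X_a$), the independent random subset $S_b$ intersects $S_a^{(k)}$ in a hypergeometric number $Y$ of elements with parameters $(n, X_a, m)$, so $\E[Y \mid S_a] = m X_a / n$ and $\E Y = m^2 \pi_k / n$.

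Next I would apply the standard Chernoff lower tail bound, which holds for hypergeometrics by Hoeffding's reduction (or by log-concavity): $\P(X_a < (1-\delta_1) m\pi_k) \le \exp(-\delta_1^2 m\pi_k/2)$, and, conditionally on $X_a$, $\P(Y < (1-\delta_2) m X_a/n \mid X_a) \le \exp(-\delta_2^2 m X_a/(2n))$. Set
\[
\delta_1 = \sqrt{\frac{2r\log n}{m\pi_k}}, \qquad \delta_2 = \sqrt{\frac{2rn\log n}{(1-\delta_1)m^2\pi_k}},
\]
so that each individual tail probability is at most $1/n^r$. The hypothesis $m^2\pi_k/(n\log n) \ge 20r$ guarantees that $m\pi_k \ge 20r\log n$ (using $m \le n$), which keeps $\delta_1$ bounded below $1$ and makes the ratio $\delta_1/\delta_2 = \sqrt{(1-\delta_1)m/n}$ at most $1$.

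Combining via union bound: on the intersection of the two good events we have
\[
Y \ge (1-\delta_1)(1-\delta_2)\frac{m^2\pi_k}{n} \ge \frac{m^2\pi_k}{n}\bigl(1 - (\delta_1 + \delta_2)\bigr),
\]
with probability at least $1 - 2/n^r$. Since $\delta_1 \le \delta_2$, we get $\delta_1 + \delta_2 = O(\delta_2) = O(\sqrt{2rn\log n/(m^2\pi_k)})$, which matches the claimed bound. The main obstacle is just verifying that the hypothesis $m^2 \pi_k/(n\log n)\ge 20r$ is comfortably enough to keep $\delta_1$ bounded away from $1$ so that the second Chernoff step retains its usual form; this is a routine computation once the two-step conditioning is set up.
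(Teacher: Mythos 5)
Your proof is correct and uses the same overall strategy (two hypergeometric tail bounds combined by a union bound), but conditions on a different intermediate quantity than the paper does. The paper conditions on $Y_{ab} = |S_a \cap S_b|$ and exploits the exchangeability observation that, given its size, $S_a \cap S_b$ is a uniform random subset of $[n]$, making $n_k^{(S_a \cap S_b)} \mid Y_{ab} \sim {\sf Hypergeometric}(Y_{ab}, n\pi_k, n)$; the remaining ingredient is a tail bound on $Y_{ab} \sim {\sf Hypergeometric}(m,m,n)$. You instead condition on $S_a$ (equivalently on $X_a = n_k^{(S_a)} \sim {\sf Hypergeometric}(m, n\pi_k, n)$), and then observe that $S_b$ hits the $X_a$ cluster-$k$ elements of $S_a$ in a ${\sf Hypergeometric}(m, X_a, n)$ number of places. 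Both decompositions reduce everything to two hypergeometric tails via one conditioning; they are of essentially equal length and difficulty. Your route avoids the somewhat subtle ``$S_a\cap S_b$ is uniform given its size'' symmetry argument, at the modest cost of carrying the random lower bound $X_a \ge (1-\delta_1)m\pi_k$ through the second Chernoff step; the paper's route keeps both conditional distributions looking like the unconditional target but relies on that exchangeability observation (which the authors credit in a footnote as having shortened their original dependent-Bernoulli argument). One minor bookkeeping note: the paper's convenient hypergeometric tail bound (Lemma~\ref{lem:hyper_lower}) carries a constant $4$ in the exponent rather than the $2$ you quote, so your $\delta_1,\delta_2$ would pick up a $\sqrt{2}$ factor if you use that lemma as stated; this is absorbed by the $O(\cdot)$ in the conclusion and does not affect correctness.
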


We are now ready to prove our main result on \gale{}.
\begin{proof}[Proof of Theorem~\ref{thm:gale-main}]
First we construct a good set in the sample space. Consider the following ``bad'' events
\begin{align*}
\mathcal{B}_{1} &:= \{(S_1,\ldots,S_T) \mid \mathcal{S}_{m, T} \text{ is connected}\},\\
\mathcal{B}_{2} &:= \left\{(S_1,\ldots,S_T) \mid \min_{(i,j)} \min_k n_k^{(S_i\cap S_j)} < m_1\pi_{\min}\right\},\\
\mathcal{B}_{3} &:= \left\{(S_1,\ldots,S_T) \mid \sum_{i} \mathbf{1}_{\{N_i < \tau\}} > e^{-(1 - \theta)^2 Tm/2n} \left(1 + \sqrt{\frac{3r' \log n}{ne^{-(1 - \theta)^2 Tm/2n}}}\right) \right\},\\
\mathcal{B}_4 &:= \{(A, S_1, \ldots, S_T) \mid \max_{1 \le i \le T}\M(\hat{Z}_{i},Z_i) > m_1\pi_{\min}/\cp\}.
\end{align*}

Let $\mathcal{B}:= \cup_{i = 1}^4 \mathcal{B}_i$. Let $r''> 0$ By Lemma~\ref{lem:erygraph_connected}, we have $\P(\mathcal{B}_1) \le \exp(-O(n))$. Lemma~\ref{lem:cluster_presence}, and a union bound gives $\P(\mathcal{B}_2) \le \binom{T}{2} \times K \times \frac{2}{n^{r''}} \le \frac{KT^2}{n^{r''}}$. By Lemma~\ref{lem:nicount}, $\P(\mathcal{B}_3) \le \frac{1}{n^{r'}}$. Finally, by our hypothesis on individual misclustering errors, we have, by a union bound, that $\P(\mathcal{B}_4) \le T\delta$. Therefore
\[
\P(\mathcal{B}) \le \sum_{i = 1}^4 \P(\mathcal{B}_i) \le \exp(-O(n)) + \frac{KT^2}{n^{r''}} + \frac{1}{n^{r'}} + T\delta \le T\delta + O\left(\frac{1}{n^{r'}}\right),
\]
by choosing $r''$ suitably large.

Now on the good set $\mathcal{B}^c$, for any $\mathcal{T} \in {\sf SpanningTrees}_{\mathcal{S}_{m, T}}$ and for any $(x_1, \ldots, x_J) \in {\sf Traversals}_{\mathcal{T}}$, the hypothesis of Propositions~\ref{prop:alignerr} and \ref{prop:err_algseq} are satisfied. Now note that
\[
\delta(\hat{Z}^{\gale}_{\mathcal{T},(x_1, \ldots, x_J)}, Z) \le \frac{\|\hat{Z}^{\gale}_{\mathcal{T},(x_1, \ldots, x_J)}-Z\Pi_{x_1}\|_1}{n} = \frac{\|\hat{Z}^{\gale}_{\mathcal{T},(x_1, \ldots, x_J)} - Z\Pi_{x_1}\|_F^2}{n}.
\]
By Proposition~\ref{prop:alignerr}, we also have, for any $1 \le \ell \le T$, that
\[
\frac{\|\hat{Z}^{(\ell)} - Z_{\ell}\Pi_{x_1}\|_F^2}{m} = \frac{\|\hat{Z}^{(\ell)} - Z_{\ell}\Pi_{x_1}\|_1}{m} = \delta(\hat{Z}^{(\ell)}, Z) = \delta(\hat{Z}_{\ell}, Z).
\] 
Thus, by of Proposition~\ref{prop:err_algseq}, taking $\bar{Z}=Z\Pi_{x_1}$ and noting then that $\bar{Z}_{\ell} = Z_{\ell}\Pi_{x_1}$, we have
\[
\delta(\hat{Z}^{\gale}_{\mathcal{T},(x_1, \ldots, x_J)}, Z) \le \frac{1}{\theta T} \sum_{\ell = 1}^T \delta(\hat{Z}_\ell,Z) + \sum_i \mathbf{1}_{\{N_i < \tau \}}.
\]
Since the bound on the RHS does not depend on the particular spanning tree used, or a particular traversal thereof, we conclude that, on the good event $\mathcal{B}^c$, 

\[
\max_{\mathcal{T} \in {\sf SpanningTrees}_{\mathcal{S}_{m, T}}} \max_{(x_1, \ldots, x_J) \in {\sf Traversals}_{\mathcal{T}}} \delta(\hat{Z}^{\gale}_{\mathcal{T},(x_1, \ldots, x_J)}, Z) \le \frac{1}{\theta T} \sum_{\ell = 1}^T \delta(\hat{Z}_\ell,Z) + \sum_i \mathbf{1}_{\{N_i < \tau \}}.
\]
Now, for our choice of $\tau$, and $T \ge \frac{2r''' n \log n}{(1 - \theta)^2 m}$, we have $e^{-(1 - \theta)^2 Tm/2n} \le \frac{1}{n^{r'''}}$. Therefore, on the event $\mathcal{B}^c$ 
\[
\sum_i \mathbf{1}_{\{N_i < \tau \}} \le e^{-(1 - \theta)^2 Tm/2n} \left(1 + \sqrt{\frac{3r' \log n}{ne^{-(1 - \theta)^2 Tm/2n}}}\right) = O\left(\sqrt{\frac{\log n}{n^{1+r'''}}}\right) = O\left(\frac{1}{n^r}\right),
\]
by choosing $r'''$ suitably large.

Thus we conclude that with probability at least $1 - T\delta - O\left(\frac{1}{n^{r'}}\right)$, we have
\[
\max_{\mathcal{T} \in {\sf SpanningTrees}_{\mathcal{S}_{m, T}}} \max_{(x_1, \ldots, x_J) \in {\sf Traversals}_{\mathcal{T}}} \delta(\hat{Z}^{\gale}_{\mathcal{T},(x_1, \ldots, x_J)}, Z) \le \frac{1}{\theta T} \sum_{\ell = 1}^T \delta(\hat{Z}_\ell,Z) + O\left(\frac{1}{n^r}\right).
\]
\end{proof}

%%%%%%%%%%%%%%%%%%%%%%%%%%%%%%%%%%%%%%%%%%%%%%%%%%%%%%%%
\section{Acknowledgments}\label{sec:acknowledgements}
We thank Arash Amini, David Blei, David Choi for their valuable comments when versions of this work were presented at conferences. SSM thanks Aditya Guntuboyina, Alan Hammond, Luca Trevisan and Bin Yu for their valuable comments during his quals talk at Berkeley, which was based on this paper.

%%%%%%%%%%%%%%%%%%%%%%%%%%%%%%%%%%%%%%%%%%%%%%%%%%%%%%%%
\bibliographystyle{apalike}
\bibliography{local_clustering.bib}

%%%%%%%%%%%%%%%%%%%%%%%%%%%%%%%%%%%%%%%%%%%%%%%%%%%%%%%%
\newpage
\appendix
%%%%%%%%%%%%%%%%%%%%%%%%%%%%%%%%%%%%%%%%%%%%%%%%%%%%%%%%
\section{Details of \dgcluster{}}\label{sec:app_greedy}
Here we detail our distance based greedy algorithm \dgcluster{}. The idea behind \dgcluster{} is to note that, if $i$ and $j$ are in the same community, then $\|C_{i\star} - C_{j\star}\| = 0$, and otherwise $\|C_{i\star} - C_{j\star}\| = \sqrt{|\sigma(i)| + |\sigma(j)|} = \Theta (\sqrt{2n/K})$ (when the communities are balanced). Thus, we expect to be able to cluster the vertices using $d_{ij} = \|\hat{C}_{proj, i\star} - \hat{C}_{proj, j\star}\|$, namely by starting with a root vertex $r_1$ and for some threshold $\gamma$ putting all vertices $j$ satisfying $d_{r_1j} \le \gamma$ in the same cluster as $r_1$, and then picking another root vertex $r_2$ from the remaining set, and putting all vertices $j$ in the remaining set that satisfy $d_{r_2j} \le \gamma$ in the same cluster as $r_2$, and so on. Here a root vertex $r_i$ may be chosen as one of the vertices with the highest degree in the remaining set (or according to a degree-weighted random sampling scheme), to give importance to highly connected vertices. We also note that depending on the threshold $\gamma$ the number of blocks we get can vary. In practice, we will start with small $\gamma$ (yielding a large number of communities), and stop at the smallest $\gamma$ that gives us $\ge K$ blocks. If we get more than $K$ blocks, we merge, in succession, pairs of blocks having the largest intersection in $\hat{C}$ (relative to their sizes) until we get exactly $K$ blocks. A rule of thumb would be to start with $\gamma = c\sqrt{2n/K}$ with $c$ small and then gradually increase $c$.

\begin{algorithm}[H]
	\caption{A distance based greedy clustering algorithm: \dgcluster{}}
	\label{alg:dgcluster}
	\begin{algorithmic}[1]
		\State Input: $C$, $C'$, $K$. Output: $\sigma = \dgcluster{}(C, C', K)$, a clustering based on distances between the rows of $C'$, with merging guidance from $C$, if necessary.
		\State Set $\theta = \sqrt{2n/K}$. $c = c_{min} = 0, \delta = 0.01.$
		\State Set $K_{now} = n$;
		\While {flag = TRUE}
		\State $c \gets c + \delta$.
		\State $\sigma_{temp} \gets \naivecluster{}(C', c\theta).$
		\State $K_{now} = \max_i \sigma_{temp}(i)$.
		\If {$K_{now} \ge K$}
		\State flag $\gets$ TRUE.
		\State $\sigma \gets \sigma_{temp}.$
		\Else
		\State	flag $\gets$ FALSE.
		\EndIf
		\EndWhile
		\State $K_{now} \gets \max_i \sigma(i).$
		\While {$K_{now} > K$}
		\State Let $\Gamma_i = \sigma^{-1}(\{i\}), i = 1, \ldots, K_{now}$.
		\State Compute the (upper triangle of the) matrix $R$, where
		\[
		R_{ij} \gets \frac{\sum_{k \in \Gamma_i, l \in \Gamma_j} C_{kl}}{|\Gamma_i| |\Gamma_j|}.
		\]
		\State Pick $(i_\star, j_\star) \in \arg \max_{i, j} R_{ij}$.
		\State $\sigma \gets \merge{}(\sigma, i_\star, j_\star).$
		\EndWhile 
	\end{algorithmic}
\end{algorithm}
Algorithm~\ref{alg:dgcluster} in turn makes use of the following two algorithms.
\begin{algorithm}[H]
	\caption{A naive clustering algorithm: \naivecluster{}}
	\label{alg:naive_cluster}
	\begin{algorithmic}[1]
		\State Input: $C$, $\gamma$ : a threshold. Output: $\sigma =\naivecluster{}(C, \gamma)$, a clustering based on distances between the rows of $C$.
		\State Set \texttt{Unassigned} $= \{1,\ldots,n\}$, $b = 1$, $\sigma \equiv 0.$\vskip5pt
		\While {\texttt{Unassigned} $\ne \emptyset$}
		\State $i \gets$ a random (uniform or degree-weighted etc.) index in \texttt{Unassigned}.
		\State $\sigma(i) \gets b$.
		\State \texttt{Unassigned} $\gets \texttt{Unassigned}\setminus\{i\}.$
		\For {$j \in \texttt{Unassigned}$}
		\State Compute $d_{ij} = \|\hat{C}_{i\star} - \hat{C}_{j\star}\|$
		\If {$d_{ij} \le \gamma$}
		\State $\sigma(j) \gets b$
		\State \texttt{Unassigned} $\gets \texttt{Unassigned}\setminus\{j\}.$
		\EndIf
		\EndFor
		\State $b\gets b + 1.$
		\EndWhile
	\end{algorithmic}
\end{algorithm}

\begin{algorithm}[H]
	\caption{\merge{}}
	\label{alg:merge}
	\begin{algorithmic}[1]
		\State Input: $\sigma, a, b$. Output: $\sigma = \merge{}(\sigma, a, b)$, a clustering with blocks $a$ and $b$ merged 
		\State $u \gets \min \{ a, b \}, v \gets \max \{ a, b \}$
		\For {$i = 1, \ldots, n$}
		\If {$\sigma(i) = v$}
		\State $\sigma(i) \gets u$.
		\ElsIf {$\sigma(i) > v$} 
		\State $\sigma(i) \gets \sigma(i) - 1$.
		\EndIf
		\EndFor
	\end{algorithmic}
\end{algorithm}

%%%%%%%%%%%%%%%%%%%%%%%%%%%%%%%%%%%%%%%%%%%%%%%%%%%%%%%%
\section{Some auxiliary results}\label{sec:app_aux_results}
\begin{lemma}[Thresholding \pace{}]\label{lem:consistency_thresholded_mx}
Let $\hat{C}_{\eta} := [\hat{C} > \eta]$. We have
\begin{equation}
\E\tilde{\delta}(\hat{C}_{\eta},C) \le \frac{\E\tilde{\delta}(\hat{C}_{\eta},C)}{\min\{\eta^2, (1 - \eta)^2\}}.
\end{equation}
In particular, for $\eta = 1/2$, we have
\[
	\E\tilde{\delta}(\hat{C}_{1/2},C) \le 4\E\tilde{\delta}(\hat{C},C).
\]
\end{lemma}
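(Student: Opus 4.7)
The plan is to prove this entrywise, exploiting the fact that $C_{ij} \in \{0,1\}$ and $\hat{C}_{\eta,ij} \in \{0,1\}$, while $\hat{C}_{ij} \in [0,1]$ (as a convex combination of 0/1 entries). Specifically, I will show the deterministic pointwise inequality
\[
(\hat{C}_{\eta,ij} - C_{ij})^2 \le \frac{1}{\min\{\eta^2, (1-\eta)^2\}}(\hat{C}_{ij} - C_{ij})^2
\]
for every $(i,j)$, which upon summation and normalization yields $\tilde{\delta}(\hat{C}_{\eta},C) \le \min\{\eta^2,(1-\eta)^2\}^{-1}\tilde{\delta}(\hat{C},C)$, and taking expectations gives the claim. (Note the stated inequality appears to have a typo with $\hat{C}_\eta$ on the right; the intended bound is in terms of $\hat{C}$.)

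The pointwise bound is obtained by a four-case analysis on $(C_{ij}, \hat{C}_{\eta,ij})$. If they agree (both $0$ or both $1$), the left side vanishes and there is nothing to show. If $C_{ij}=1$ and $\hat{C}_{\eta,ij}=0$, then by definition of thresholding $\hat{C}_{ij}\le \eta$, so $|\hat{C}_{ij} - C_{ij}| = 1-\hat{C}_{ij} \ge 1-\eta$, giving $(\hat{C}_{ij}-C_{ij})^2 \ge (1-\eta)^2$ while $(\hat{C}_{\eta,ij}-C_{ij})^2 = 1$. Symmetrically, if $C_{ij}=0$ and $\hat{C}_{\eta,ij}=1$, then $\hat{C}_{ij} > \eta$ forces $(\hat{C}_{ij}-C_{ij})^2 > \eta^2$ while the LHS is again $1$. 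In either mismatched case the ratio is bounded by $\min\{\eta^2,(1-\eta)^2\}^{-1}$.

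The $\eta=1/2$ specialization is immediate since $\min\{(1/2)^2,(1/2)^2\} = 1/4$, giving the factor of $4$.

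I do not foresee any real obstacle here; the only thing worth being careful about is verifying that $\hat{C}_{ij} \in [0,1]$ (so that the ``mismatch forces a sizable deviation'' argument goes through), which follows from the definition \eqref{eq:pace}: each $\hat{C}^{(\ell)}_{ij} \in \{0,1\}$ as it is an entry of a clustering matrix on a subgraph, and $\hat{C}_{ij}$ is either $0$ or an average of at most $N_{ij}$ such indicators divided by $N_{ij}$, hence lies in $[0,1]$.
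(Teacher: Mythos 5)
Your proof is correct, and it is essentially the same entrywise comparison the paper uses: the key observation in both is that a mismatch $(\hat{C}_{\eta})_{ij}\neq C_{ij}$ forces $|\hat{C}_{ij}-C_{ij}|\ge \min\{\eta,1-\eta\}$. The only (minor) difference is that you establish the inequality $(\hat{C}_{\eta,ij}-C_{ij})^2 \le \min\{\eta^2,(1-\eta)^2\}^{-1}(\hat{C}_{ij}-C_{ij})^2$ deterministically and then take expectations, whereas the paper writes $|(\hat{C}_{\eta})_{ij}-C_{ij}|\sim\mathrm{Ber}(q_{ij})$ with $q_{ij}=C_{ij}\P(\hat{C}_{ij}\le\eta)+(1-C_{ij})\P(\hat{C}_{ij}>\eta)$ and compares $\E((\hat{C}_{\eta})_{ij}-C_{ij})^2=q_{ij}$ directly against a lower bound on $\E(\hat{C}_{ij}-C_{ij})^2$; your deterministic route is marginally stronger (it dispenses with expectations entirely) and arguably cleaner, but the content is identical. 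You also correctly spotted the typo: the right-hand side of the stated inequality should read $\E\tilde{\delta}(\hat{C},C)$, not $\E\tilde{\delta}(\hat{C}_\eta,C)$, and your observation that $\hat{C}_{ij}\in[0,1]$ from \eqref{eq:pace} is the right sanity check (though in fact neither argument needs $\hat{C}_{ij}\le 1$; the case analysis works for any real $\hat{C}_{ij}$).
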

\begin{proof}
Note that $|(\hat{C}_{\eta})_{ij} - C_{ij}| \sim$ Ber$(q_{ij})$, where
\[
	q_{ij} = C_{ij} \P(\hat{C}_{ij} \le \eta) + (1 - C_{ij}) \P(\hat{C}_{ij} > \eta).
\]
Thus $\E ((\hat{C}_{\eta})_{ij} - C_{ij})^2 = q_{ij}$. Now
\[
	\E(\hat{C}_{ij} - C_{ij})^2 = \E(\hat{C}_{ij} - C_{ij})^2 \mathbf{1}_{\{ \hat{C}_{ij} \le \eta \}} + \E(\hat{C}_{ij} - C_{ij})^2 \mathbf{1}_{\{ \hat{C}_{ij} \le \eta \}}.
\]
Therefore
\[
\E(\hat{C}_{ij} - C_{ij})^2 \ge \begin{cases}
	(1 - \eta)^2 \P(\hat{C}_{ij} \le \eta) = (1 - \eta)^2 q_{ij}, &\text{ if } C_{ij} = 1\\
	\eta^2 \P(\hat{C}_{ij} > \eta) = \eta^2 q_{ij}, &\text{ if } C_{ij} = 0.
\end{cases}
\]
This means
\[
	\E ((\hat{C}_{\eta})_{ij} - C_{ij})^2 \le \frac{\E(\hat{C}_{ij} - C_{ij})^2}{\min\{\eta^2, (1 - \eta)^2\}},
\]
which, on summing over $i,j$, gives us the desired bound.
\end{proof}

\begin{lemma}[Rounding \gale{}]\label{lem:rounding}
	Consider an estimated cluster membership matrix $\hat{Z}\in [0,1]^{n\times K}$ and the true cluster membership matrix $Z\in\{0,1\}^{n\times K}$. Then $\|\round(\hat{Z})-Z\|_1\leq 2\|\hat{Z}-Z\|_1$.
\end{lemma}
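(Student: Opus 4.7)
My plan is to reduce the statement to an entrywise inequality and then dispatch it by a short case analysis. Since
\[
\|\round(\hat Z) - Z\|_1 = \sum_{i=1}^n\sum_{k=1}^K |\round(\hat Z_{ik}) - Z_{ik}|,
\]
and similarly for $\|\hat Z - Z\|_1$, it suffices to prove the pointwise bound
\[
|\round(\hat Z_{ik}) - Z_{ik}| \le 2\,|\hat Z_{ik} - Z_{ik}|
\]
for every index $(i,k)$, and then sum.

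For a single entry, write $z := Z_{ik}\in\{0,1\}$ and $\hat z := \hat Z_{ik}\in[0,1]$, with $\round(\hat z) = \mathbf{1}_{\{\hat z > 1/2\}}$. I would split into four cases according to the values of $z$ and whether $\hat z\le 1/2$ or $\hat z > 1/2$. The two ``agreeing'' cases ($z=0$, $\hat z\le 1/2$ and $z=1$, $\hat z > 1/2$) give $|\round(\hat z)-z|=0$, so the inequality is immediate. The two ``disagreeing'' cases are where the factor of $2$ is sharp: if $z=0$ and $\hat z > 1/2$ then $|\round(\hat z)-z|=1$ while $|\hat z - z| = \hat z > 1/2$, so $|\round(\hat z)-z| < 2|\hat z - z|$; symmetrically, if $z=1$ and $\hat z \le 1/2$ then $|\round(\hat z)-z|=1$ while $|\hat z - z| = 1-\hat z \ge 1/2$, so again $|\round(\hat z)-z|\le 2|\hat z - z|$.

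Since each of the four cases yields the pointwise inequality, summing over all $(i,k)$ yields $\|\round(\hat Z) - Z\|_1 \le 2\|\hat Z - Z\|_1$, as desired. There is no real obstacle here; the only mildly delicate point is the tie-breaking convention at $\hat z = 1/2$, but the inequality holds for either convention since the constant $2$ is tight precisely in the disagreeing cases and is attained in the limit $\hat z \to 1/2$.
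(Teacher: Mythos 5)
Your proof is correct, and it rests on the same core observation as the paper's: rounding can only introduce a unit error at an entry $(i,k)$ when $|\hat Z_{ik}-Z_{ik}|\ge 1/2$. The paper packages this via Markov's inequality (bounding the number of such entries by $2\|\hat Z - Z\|_1$ and noting each contributes at most $1$), while you instead establish the pointwise inequality $|\round(\hat Z_{ik}) - Z_{ik}| \le 2|\hat Z_{ik} - Z_{ik}|$ directly by a four-case analysis and then sum; these are two phrasings of the same argument, with yours being marginally more explicit and self-contained. Your remark on tie-breaking at $\hat z = 1/2$ is also apt: the paper implicitly assumes $\round$ sends $1/2$ to $1$, and either convention works since the disagreeing cases give $|\hat z - z|\ge 1/2$ with the appropriate non-strict inequality on the relevant side.
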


\begin{proof}
	Let $S:=\{(i,k)\in[n]\times [K]:|\hat{Z}_{ik}-Z_{ik}|\ge 1/2\}$. Then by Markov's inequality, we have $|S|\le 2\|\hat{Z} - Z\|_1$. Note that for $(i,k)\in[n]\times [K]\setminus S$, $\round(\hat{Z}_{ik})=Z_{ik}$. Thus, 
	\[
		\|\round(\hat{Z})-Z\|_1	= \sum_{(i,k)\in S} |\round(\hat{Z}_{ik})-Z_{ik}|\leq |S|\le 2 \|\hat{Z} - Z\|_1
	\]
	\end{proof}
	We shall need the following Bernstein-type concentration result for hypergeometric random variables.
\begin{lemma}[Corollary~$1$ of \cite{greene2017} (restated here using slightly different notation)]\label{lem:hyper}
	Consider a random variable $H \sim {\sf Hypergeometric}(k, \ell, L)$. Let $\mu = \ell/L$, $\sigma^2 = \mu (1 - \mu)$, $f = (k-1)/(L-1)$. Then for $\lambda>0$, 
	\[
		\P\left( \sqrt{k}\left(\frac{H}{k} - \mu \right) > \lambda \right)	\le \exp \left(- \frac{\lambda^2/2}{\sigma^2 (1 - f) + \frac{\lambda}{3 \sqrt{k}}} \right).
	\]
\end{lemma}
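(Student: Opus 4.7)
The plan is to obtain the stated Bernstein-type bound by the classical Cram\'er--Chernoff method applied on top of a Bennett-type moment generating function (MGF) estimate for the hypergeometric distribution. Writing $H = X_1 + \cdots + X_k$ where the $X_i$ are indicators of drawing a marked ball in a sequential sample without replacement from an urn of $L$ balls containing $\ell$ marked ones, the target MGF estimate will have the form
\[
\log \E\bigl[\exp(t(H - k\mu))\bigr] \;\le\; k\sigma^2(1-f)\,(e^t - 1 - t), \qquad t > 0.
\]
The variance proxy $k\sigma^2(1-f)$ is exactly $\mathrm{Var}(H)$, incorporating the finite population correction $1-f=(L-k)/(L-1)$, and the Bennett term $e^t-1-t$ captures the boundedness of each $X_i\in\{0,1\}$.

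Establishing the above MGF inequality is the principal technical step, and I expect it to be the main obstacle. The naive Hoeffding (1963) coupling between sampling without replacement and sampling with replacement yields the Bennett bound with variance proxy $k\sigma^2$, but misses the $(1-f)$ factor. To recover the correct variance, I would follow one of two standard routes: (i) a Serfling-style martingale decomposition, writing $M_i = \E[H\mid X_1,\ldots,X_i] - k\mu$ and exploiting the fact that the conditional increments have a variance that shrinks as one draws more balls without replacement, yielding the factor $(L-k)/(L-1)$; or (ii) an entropy/log-Sobolev argument using the negative association of $(X_i)_{i=1}^k$ under sampling without replacement, which produces an MGF bound with the sharp variance proxy. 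Either route requires careful tracking of conditional variances, which is where the finite population correction emerges.

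Given the MGF bound, the rest is routine. For any $t>0$, Markov's inequality yields
\[
\P\bigl(H - k\mu > \sqrt{k}\,\lambda\bigr) \;\le\; \exp\bigl(-t\sqrt{k}\,\lambda + k\sigma^2(1-f)(e^t - 1 - t)\bigr).
\]
Applying the elementary inequality $e^t - 1 - t \le \tfrac{t^2/2}{1 - t/3}$ valid for $t\in[0,3)$, and then optimizing over $t$ (the near-optimal choice is $t = \lambda/(\sqrt{k}\sigma^2(1-f) + \lambda/3)$ after clearing denominators), converts the Bennett-form MGF estimate into the stated Bernstein-form tail bound
\[
\P\bigl(\sqrt{k}(H/k - \mu) > \lambda\bigr) \;\le\; \exp\left(-\frac{\lambda^2/2}{\sigma^2(1-f) + \lambda/(3\sqrt{k})}\right).
\]
This completes the plan; the only nontrivial step is the MGF estimate with the tight variance, and once that is in place the Cram\'er--Chernoff optimization and the Bennett-to-Bernstein conversion are standard manipulations.
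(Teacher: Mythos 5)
The paper does not prove this lemma; it states it as a black-box citation of Corollary~1 of \cite{greene2017}, so your task is effectively to reconstruct Greene and Wellner's argument rather than to match anything in this paper. Your plan correctly identifies the crux: a Bennett-type MGF estimate $\log \E\exp\bigl(t(H-k\mu)\bigr) \le k\sigma^2(1-f)(e^t-1-t)$ with the finite-population-corrected variance, after which the Markov-plus-optimize step is indeed routine and does reproduce the stated Bernstein form. The gap is that neither of your two suggested routes actually delivers that MGF bound. Route (ii), negative association (or equivalently Hoeffding's 1963 convex-order comparison), gives only $\E\bigl[\prod_i e^{tX_i}\bigr] \le \prod_i \E[e^{tX_i}]$, i.e.\ domination by the binomial MGF, which yields the variance proxy $k\sigma^2$ \emph{without} the $(1-f)$ factor; negative association alone simply cannot see the finite-population correction, and I am not aware of a log-Sobolev/entropy argument that recovers it either. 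Route (i), the Doob martingale $M_i=\E[H\mid X_1,\dots,X_i]$, has increments $d_i=\tfrac{L-k}{L-i}(X_i-p_i)$ with $p_i$ the conditional fraction of marked balls remaining; the \emph{expected} conditional variances do sum to $k\sigma^2(1-f)$, but the conditional variances $\bigl(\tfrac{L-k}{L-i}\bigr)^2 p_i(1-p_i)$ are random and are not almost surely dominated by their expectations, so you cannot iterate a per-step Bennett bound and land on a deterministic variance proxy. You would need Freedman-type machinery together with a separate concentration argument for the predictable quadratic variation, which is not a ``standard manipulation.''

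The clean way to obtain the MGF inequality with the exact variance is the classical fact that a hypergeometric random variable is equal in distribution to a sum of \emph{independent} (non-identically distributed) Bernoulli variables $Y_1,\dots,Y_k$ with $\sum q_i = k\mu$ and $\sum q_i(1-q_i)=\mathrm{Var}(H)=k\sigma^2(1-f)$; this holds because the hypergeometric probability generating function has only real nonpositive roots (a property of the associated Hahn/Jacobi polynomials). Once you have that decomposition, the elementary per-summand bound $\log\E e^{t(Y_i-q_i)} \le q_i(1-q_i)(e^t-1-t)$ (which is easy to verify by checking $\phi(0)=\phi'(0)=0$ and $\phi''\ge 0$) multiplies out, by independence, to exactly the variance proxy you want, and then your final Cram\'er--Chernoff step goes through verbatim. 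Without this device or an equivalent one, your sketch has a genuine gap at its central step.
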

Let us deduce from this two convenient Chernoff type bounds on the upper and lower tail of a hypergeometric variable. 
\begin{lemma}\label{lem:hyper_lower}
	Consider a random variable $H \sim {\sf Hypergeometric}(k, \ell, L)$. Then for $0 < \epsilon < 1$, we have
	\[
		\max\{\P\left( H < (1 - \epsilon)\E H\right), \P\left( H > (1 + \epsilon)\E H\right)\} \le \exp \left(- \frac{\epsilon^2 \E H}{2(1 + \epsilon/3)} \right) \le \exp \left(- \frac{\epsilon^2 \E H}{4} \right).
	\]
\end{lemma}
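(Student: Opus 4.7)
The plan is to derive both tails directly from Lemma~\ref{lem:hyper} (Greene's Bernstein-type bound for the hypergeometric), keeping $\mu, \sigma^2, f$ in the notation of that lemma.

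For the upper tail, I would rewrite the event as $\{\sqrt{k}(H/k - \mu) > \epsilon\mu\sqrt{k}\}$, i.e.\ take $\lambda = \epsilon \mu \sqrt{k}$ in Lemma~\ref{lem:hyper}. Substituting gives
\[
\P(H > (1+\epsilon)\E H) \le \exp\!\left(-\frac{\epsilon^2 \mu^2 k / 2}{\sigma^2(1-f) + \epsilon\mu/3}\right).
\]
The key (trivial) estimate is $\sigma^2(1-f) = \mu(1-\mu)(1-f) \le \mu$, which uses only $1-\mu \le 1$ and $1 - f \le 1$. Plugging this in and using $\E H = k\mu$ yields the claimed bound $\exp(-\epsilon^2 \E H / (2(1+\epsilon/3)))$.

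For the lower tail, I would exploit the symmetry of the hypergeometric: if $H \sim \mathsf{Hypergeometric}(k,\ell,L)$, then $\tilde H := k - H \sim \mathsf{Hypergeometric}(k, L-\ell, L)$, which has mean $k(1-\mu)$ and the same $\sigma^2$ and $f$. The event $\{H < (1-\epsilon)k\mu\}$ is the same as $\{\tilde H > \E \tilde H + \epsilon k \mu\}$, i.e.\ $\{\sqrt{k}(\tilde H / k - (1-\mu)) > \epsilon\mu\sqrt{k}\}$. Applying Lemma~\ref{lem:hyper} to $\tilde H$ with the same $\lambda = \epsilon\mu\sqrt{k}$ gives the identical bound, since neither $\sigma^2(1-f)$ nor the shift $\epsilon\mu/3$ in the denominator changes.

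Finally, since $0 < \epsilon < 1$ implies $2(1+\epsilon/3) < 8/3 < 4$, the weaker bound $\exp(-\epsilon^2 \E H/4)$ follows immediately. There is no real obstacle here; the only subtle point is noticing that the lower tail reduces to an upper tail for the complementary hypergeometric, avoiding the naive substitution $\epsilon' = \epsilon\mu/(1-\mu)$, which could exceed $1$ when $\mu > 1/2$.
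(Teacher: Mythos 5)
Your proof is correct and follows essentially the same route as the paper's: apply Lemma~\ref{lem:hyper} with $\lambda = \epsilon\mu\sqrt{k}$, use the crude estimate $\sigma^2(1-f) \le \mu$ to turn the Bernstein denominator into $\mu(1+\epsilon/3)$, and handle the second tail via the complementary hypergeometric $k - H \sim \mathsf{Hypergeometric}(k, L-\ell, L)$. The only cosmetic difference is that the paper works out the lower tail explicitly and dismisses the upper tail as ``similar,'' whereas you do the upper tail and derive the lower one by symmetry; your closing remark about why the symmetry reduction is preferable to a naive rescaling of $\epsilon$ is a fair observation but does not represent a departure in method.
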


\begin{proof}
Note that $k - H \sim {\sf Hypergeometric}(k, L - \ell, L)$. Using Lemma~\ref{lem:hyper} we get
\[
	\P\left(\frac{H}{k}-\frac{\ell}{L} < - \frac{\lambda}{\sqrt{k}}\right) = \P \left(\sqrt{k}\left(\frac{k - H}{k} - \frac{L - \ell}{L}\right) > \lambda\right) \le \exp \left(-\frac{\lambda^2/2}{\frac{\ell}{L}+\frac{\lambda}{3\sqrt{k}}}\right).
\]
Taking $\frac{\lambda}{\sqrt{k}} = \frac{\ell}{L}\epsilon$, we get
	\[
		\P\left(H < \frac{k\ell}{L} (1 - \epsilon)\right) \le \exp\left( - \frac{\ell^3 k\epsilon^2/(2L^2)}{(1 + \epsilon/3)\ell/(L)}\right)=\exp\left(-\frac{\epsilon^2k\ell}{2(1 + \epsilon/3)L}\right).
	\]
	This gives us the desired bound for the lower tail, the upper tail can be handled similarly.
\end{proof}

\subsection{Analysis of \pace{} under stochastic block model}
Recall that we need to know how an algorithm performs on a randomly selected subgraph under our subgraph selection procedure. Here we will discuss how one can obtain such guarantees under the stochastic blockmodel. This will require us to understand the behavior of sizes of different communities in a (randomly) chosen subgraph.

\subsubsection{Community sizes in random \texorpdfstring{$m$}{Lg}-subgraphs}
(The results of this subsection do not depend on any modeling assumption.) Let $S$ be a random $m$-subgraph, and set $m/n = q$. Then if $(n_1^{(S)},\ldots,n_K^{(S)})$ is the cluster size vector for $Z_S$, then clearly $n^{(S)}_k \sim$ {\sf Hypergeometric}$(m, n_k, n)$ and therefore, by Lemma~\ref{lem:hyper_lower},
\begin{align*}
\P(n^{(S)}_k \le n_{\min} q - \Delta) &\le \exp\left(-\frac{((n_k - n_{\min})q+\Delta)^2}{4 n_k q}\right)\\
&\le \exp\left(-\frac{\Delta^2}{4 n_{\max} q}\right).
\end{align*}
Therefore, by union bound,
\begin{align*}
	\P(n^{(S)}_{\min} \le n_{\min} q - \Delta)&\le \sum_k \P(n^{(S)}_k \le n_{\min} q - \Delta)\\ 
	& \le K \exp\left(-\frac{\Delta^2}{4 n_{\max} q}\right).
\end{align*}
Choosing $\Delta = \sqrt{4r' n_{\max} q \log n}$, where $r' > 0$, we see that with probability at least $1 - \frac{K}{n^{r'}}$ we have
\begin{equation}\label{eq:min_comm_random}
n^{(S)}_{\min} \ge n_{\min} q - \sqrt{4r' n_{\max} q \log n}
\end{equation}
Similarly, we can show that
\[
\P(n_{\max}^{(S)} \ge n_{\max} q  + \Delta) \le K\exp\left( - \frac{\Delta^2}{4 n_{\max} q} \right),
\] 
and then, taking $\Delta = \sqrt{4r'n_{\max}q\log n}$, conclude that with probability at least $1 - \frac{K}{n^{r'}}$
\begin{equation}\label{eq:max_comm_random}
n_{\max}^{(S)} \le n_{\max} q  + \sqrt{4r'n_{\max}q\log n}.
\end{equation}
	
\subsubsection{Community sizes in \texorpdfstring{$1$}{Lg}-hop ego neighborhoods}
Let now $S$ be a randomly chosen $1$-hop ego neighborhood. Note that the size of the $k$-th block in this neighborhood satisfies
\[
n_{k}^{(S)} = \sum_{j \,:\, \sigma(j) = k} X_j,
\]
where $X_j = \mathbf{1}_{\{j \in S\}}$. 

Now it is not hard to see that the $X_j$'s are independent conditional $R$, the root of $S$. We also have that $\E(X_j \,|\, R) = B_{\sigma(j)\sigma(R)}\mathbf{1}(j\ne R).$ It follows that
\[
\E(n_k^{(S)} \,|\, R) = \sum_{j \,:\, \sigma(j) = k} \E(X_j \,|\, R) = \sum_{j \,:\, \sigma(j) = k} B_{\sigma(j)\sigma(R)}\mathbf{1}_{\{j\ne R\}},
\] 
which means that
\[
(n_{\min} - 1) B_* \le (n_k - 1) B_{\star} \le \E(n_k^{(S)} \,|\, R) \le n_k B_{\#} \le n_{\max} B_{\#}.
\]
Therefore, by Chernoff's inequality,
\begin{align*}
\P(n_k^{(S)} \le (n_{\min} - 1) B_{\star}  - \Delta \,|\, R) &\le \exp\left( - \frac{(\E(n_k^{(S)}\,|\, R)- n_{\min} B_\star + \Delta)^2}{2\E(n_k^{(S)}\,|\, S)} \right)\\
& \le \exp\left( - \frac{\Delta^2}{2 n_{\max} B_{\#}} \right).
\end{align*}
Since the right hand side does not depend on $R$, we can take expectations of both sides with respect to $R$ to get
\[
\P(n_k^{(S)} \le (n_{\min} - 1) B_{\star}  - \Delta) \le \exp\left( - \frac{\Delta^2}{2 n_{\max} B_{\#}} \right).
\]
This implies, by an application of the union bound, that
\[
\P(n_{\min}^{(S)} \le (n_{\min} - 1) B_{\star}  - \Delta) \le K\exp\left( - \frac{\Delta^2}{2 n_{\max} B_{\#}} \right).
\] 
Choosing, $\Delta = \sqrt{2r'n_{\max}B_{\#}\log n}$ we conclude, with probability at least $1 - \frac{K}{n^{r'}}$, that
\begin{equation}\label{eq:min_comm_egonet}
n_{\min}^{(S)} \ge (n_{\min} - 1) B_{\star}  - \sqrt{2r'n_{\max}B_{\#}\log n}.
\end{equation}
One can prove similarly that
\[
\P(n_{\max}^{(S)} \ge n_{\max} B_{\#}  + \Delta) \le K\exp\left( - \frac{\Delta^2}{3 n_{\max} B_{\#}} \right),
\] 
and then, taking $\Delta = \sqrt{3r'n_{\max}B_{\#}\log n}$, conclude, with probability at least $1 - \frac{K}{n^{r'}}$, that
\begin{equation}\label{eq:max_comm_egonet}
n_{\max}^{(S)} \le n_{\max} B_{\#}  + \sqrt{3r'n_{\max}B_{\#}\log n}.
\end{equation}

\subsubsection{Analysis of adjacency spectral clustering}
We shall use the community size estimates from the previous subsection along with Lemma~\ref{lem:lei-rinaldo}.

\noindent
\textbf{Random $m$-subgraphs:} 	From (\ref{eq:min_comm_random}) and (\ref{eq:max_comm_random}) we have, with probability at least $1 - \frac{2K}{n^{r'}}$, that
\[
\frac{n^{(S)}_{\max}}{(n^{(S)}_{\min})^2} \le \frac{n_{\max} q + \sqrt{4r' n_{\max} q \log n}}{(n_{\min} q - \sqrt{4r' n_{\max} q \log n})^2} \le \frac{n_{\max}}{n_{\min}^2 q} C_{n,m,r'},
\]
where
\[
C_{n,m,r'} = \left(1 + \sqrt{\frac{4r'\log n}{n_{\max} q}}\right) \left(1 - \sqrt{\frac{4r' n_{\max} \log n}{n_{\min}^2 q}}\right)^{-2}.
\]
Note that $C_{n, m, r'}$ stays bounded, e.g., by $6$, if $\frac{n_{\min}^2 q}{n_{\max} \log n} \ge 16r'$. In fact, it approaches $1$ as $\frac{n_{\min}^2 q}{n_{\max} \log n} \rightarrow \infty$. Thus, from (\ref{eq:lei_rinaldo}), with probability at least $1 - \frac{2K}{n^{r'}}$ over randomness in $S$ and with probability at least $1 - \frac{1}{m^r}$ over randomness in $A_S$, we have
\[
\frac{1}{2}\delta(\hat{Z}_S, Z_S) \le \min \left\{ c^{-1}(2 + \epsilon)\frac{K}{\lambda^2 \alpha_n}\frac{n_{\max}}{n_{\min}^2 q} C_{n,m,r'}, 1 \right\}.
\]
We conclude that
\begin{equation}\label{eq:ASP_random}
\frac{1}{2}\E\delta(\hat{Z}_S,Z_S) \le \min \left\{ c^{-1} (2+\epsilon) \frac{K}{\lambda^2 \alpha_n}\frac{n_{\max}}{n_{\min}^2 q} C_{n,m,r'}, 1 \right\} + \frac{2K}{n^{r'}} + \frac{1}{m^r}.
\end{equation}

\vskip10pt
\noindent
\textbf{$1$-hop ego neighborhoods:} By (\ref{eq:lei_rinaldo}), given $S$ (non-empty), with probability at least $1 - \frac{1}{|S|^r}$ over the randomness in $A_S$, we have
\[
\frac{1}{2}\delta(\hat{Z}_S, Z_S) \le \min \left\{ c^{-1}(2+\epsilon) \frac{K}{\lambda^2 \alpha_n}\frac{n_{\max}^{(S)}}{(n_{\min}^{(S)})^2}, 1 \right\} =: J_S.
\]
So
\[
\frac{1}{2}\E[\delta(\hat{Z}_S,Z_S) \,|\, S] \le J_S + \frac{1}{|S|^r}.
\]
Therefore
\[
\frac{1}{2} \E \delta(\hat{Z}_S,Z_S) \mathbf{1}_{(|S| \ge m_{\star})} \le \E J_S \mathbf{1}_{(|S| \ge m_{\star})} + \E \left(\frac{\mathbf{1}_{(|S| \ge m_{\star})}}{|S|^r}\right).
\]
From (\ref{eq:min_comm_egonet}) and (\ref{eq:max_comm_egonet}) we have, with probability at least $1 - \frac{2K}{n^{r'}}$, that
\[
\frac{n^{(S)}_{\max}}{(n^{(S)}_{\min})^2} \le \frac{n_{\max} B_{\#} + \sqrt{3r' n_{\max} B_{\#} \log n}}{((n_{\min} - 1) B_{\star}  - \sqrt{2r' n_{\max} B_{\#} \log n})^2} \le \frac{n_{\max} B_{\#}}{n_{\min}^2 B_{\star}^2} D_{n,B,r'} = \frac{n_{\max}} {n_{\min}^2 \lambda^2 \alpha_n} D_{n,B,r'},
\]
where
\begin{align*}
D_{n,B,r'} &= \left(1 + \sqrt{\frac{3r'\log n}{n_{\max} B_{\#}}}\right) \left(1 - \frac{1}{n_{\min}} - \sqrt{\frac{2r' n_{\max} B_{\#} \log n}{n_{\min}^2 B_{\star}^2}}\right)^{-2}\\
&= \left(1 + \sqrt{\frac{3r'\log n}{n_{\max}\alpha_n}}\right) \left(1 - \frac{1}{n_{\min}} - \sqrt{\frac{2r' n_{\max}\log n}{n_{\min}^2 \lambda^2 \alpha_n}}\right)^{-2}.
\end{align*}
Note that $D_{n,B,r'}$ stays bounded, e.g., by $16 (1 + \sqrt{\frac{3}{8}})$, if $\frac{n_{\min}^2 \lambda^2 \alpha_n}{n_{\max} \log n} \ge 8r'$. In fact, it approaches $1$ as $\frac{n_{\min}^2 \lambda^2 \alpha_n}{n_{\max} \log n} \rightarrow \infty$.
Thus
\[
\E J_S \mathbf{1}_{(|S| \ge m_{\star})} \le \min\left\{ c^{-1} (2 + \epsilon) \frac{K}{\lambda^4 \alpha_n^2}\frac{n_{\max}}{n_{\min}^2} D_{n,B,r'}, 1 \right\} + \frac{2K}{n^{r'}}.
\]
On the other hand,
\begin{align*}
\E \left(\frac{\mathbf{1}_{(|S| \ge m_{\star})}}{|S|^r}\right) &\le \E \left(\frac{\mathbf{1}_{(|S| \ge 1)}}{|S|^r}\right)\\
&= \E \left(\frac{\mathbf{1}_{(\frac{(n - 1)B_{\star}}{2} > |S| \ge 1)}}{|S|^r}\right) + \E \left(\frac{\mathbf{1}_{(|S| \ge \frac{(n - 1)B_{\star}}{2})}}{|S|^r}\right)\\
&\le \P\left(|S| < \frac{(n-1)B_{\star}}{2}\right) + \left(\frac{2}{(n - 1)B_{\star}}\right)^r\\
&\le \exp\left(-\frac{nB_{\star}^2}{16 B_{\#}}\right) + \left(\frac{2}{(n - 1)B_{\star}}\right)^r \hskip20pt \text{(by (\ref{eq:lower_tail_bound_on_ego_nbd_size}))}\\
&\le \frac{16 B_{\#}}{n B_{\star}^2} + \left(\frac{4}{n B_{\star}}\right)^r = \frac{16}{n \lambda^2 \alpha_n} + \left(\frac{4}{n \lambda \alpha_n}\right)^r.
\end{align*}
So, finally, we have
\begin{equation}\label{eq:ASP_ego}
\frac{1}{2} \E \delta(\hat{Z}_S,Z_S) \mathbf{1}_{(|S| \ge m_{\star})} \le \min\left\{ c^{-1} \frac{K}{\lambda^4 \alpha_n^2}\frac{n_{\max}}{n_{\min}^2} D_{n,B,r'}, 1 \right\} + \frac{2K}{n^{r'}} + \frac{16}{n \lambda^2 \alpha_n} + \left(\frac{4}{n \lambda \alpha_n}\right)^r.
\end{equation}

\subsubsection{Analysis of SDP}
In the setting of Corollary~\ref{cor:sdp}, for a random $m$-subgraph $S$ we have $\min_{k} B_{kk} \ge \frac{a}{n} = \frac{\tilde{a}}{m}, \max_{k\ne k'} B_{kk'} \le \frac{b}{n} = \frac{\tilde{b}}{m}$ where $\tilde{a} = \frac{am}{n}$, $\tilde{b} = \frac{bm}{n}$. Let $n_{kk'}^{(S)}$ denote the number of pairs of vertices in the subgraph $S$ such that one of them is from community $k$ and the other is from community $k'$. Now by Lemma~\ref{lem:hyper_lower}
\[
\P(n_k^{(S)} \ge c n_k m/n) \le e^{-(c - 1)^2m\pi_{\min}/4}.
\]
So with probability at least $1 - Ke^{-(c - 1)^2m\pi_{\min}/4}$ we have, for all $1 \le k \le k' \le K$, that
\[
	n_{kk'}^{(S)} = \begin{cases}
	n_{k}^{(S)} n_{k'}^{(S)} \le c^2 n_k n_k' m^2/n^2 \le c_1 n_{kk'} m^2/n^2, & \text{ for } k < k',\\
	\frac{1}{2}n_{k}^{(S)} (n_{k}^{(S)} - 1) \le \frac{1}{2}c n_k m/n(c n_k m / n -1) \le c_1 n_{kk} m^2/n^2, & \text{ for } k = k',
	\end{cases}
\]
where $c_1 = c^2 \left(1 + \frac{\frac{cm}{n}-1}{\frac{cm}{n}(n\pi_{\min} - 1)}\right)$. Similarly, we can show that, with probability at least $1 - K e^{-(1-c')^2m\pi_{\min}/4}$
\[
n_{kk'}^{(S)} \ge \begin{cases}
	c_2 n_{kk'} m^2/n^2, & \text{ for } k < k',\\
	c_2 n_{kk} m^2/n^2, & \text{ for } k = k',
	\end{cases}
\]
where $c_2 = (c')^2 \left(1 + \frac{\frac{c'm}{n}-1}{\frac{c'm}{n}(n\pi_{\max} - 1)}\right)$. Let $\mathcal{G} = \{S \mid c_2 n_{kk'} m^2/n^2 \le n_{kk'}^{(S)} \le c_1 n_{kk'} m^2/n^2, \, \text{ for all } 1\le k \le k' \le K\}$. Given $S$, the expected variance of edges in $S$ is $\frac{2}{m(m-1)}\sum_{1 \le k \le k' \le K}B_{kk'}(1 - B_{kk'})n_{kk'}^{(S)} =: \frac{\tilde{g}}{m}$, say. Then, if $S \in \mathcal{G}$, we have
\[
\frac{c_2g}{n} \le \frac{c_2 m(n-1)}{(m - 1)n} \times \frac{g}{n} \le \frac{\tilde{g}}{m} \le \frac{c_1 m(n-1)}{(m - 1)n} \times \frac{g}{n} = \frac{m}{n^2} \times \bar{g}.
\]
and so, using our assumptions on $a$, $b$, $g$, we get that $\tilde{g} \ge 9$ and $\tilde{a}, \tilde{b}$ satisfy $(\tilde{a} - \tilde{b})^2 \ge 484 \epsilon^{-2} \frac{m^2}{n^2}\bar{g} \ge 484 \epsilon^{-2} \tilde{g}$. Therefore, using Lemma~\ref{lem:sdp-gv} on the subgraph $S$, we conclude that, given $S \in \mathcal{G}$, we have
\[
\frac{1}{m^2}\|\hat{C}^{(S)} - \hat{C}^{(S)}\|_F^2 \le \epsilon,
\]
with probability at least $1 - e^3 5^{-m}$, where $\hat{C}^{(S)}$ is a solution of SDP-GV on the subgraph $S$. As $\P(S\notin \mathcal{G}) \le Ke^{-(c - 1)^2 m\pi_{\min}/4} + Ke^{-(1 - c')^2 m\pi_{\min}/4}$, we conclude that
\[
\E \tilde{\delta}(\hat{C}^{(S)},C^{(S)}) \le \epsilon + e^3 5^{-m} + Ke^{-(c - 1)^2 m\pi_{\min}/4} + Ke^{-(1-c')^2 m\pi_{\min}/4}.
\] 

\subsection{Proofs of some results for \gale{}}
In this section we collect the proofs of the auxiliary results we presented in Section~\ref{sec:gale-analysis}. 

\begin{proof}[Proof of Lemma~\ref{lem:nicount}]
	Note that the $N_i$'s are i.i.d. {\sf Binomial}$(T; m/n)$ random variables and hence by Chernoff's inequality we have,
	\begin{equation}\label{eq:ni-chernoff1}
		\P(N_i < \tau) \le e^{-(1 - \theta)^2 Tm/2n}.
	\end{equation}
	Another use of Chernoff gives, for $\delta > 0$, 
	\[
	\P\left(\sum_{i}\mathbf{1}_{\{ N_i < \tau\}} > n \P(N_1 < \tau)(1 + \delta) \right)\le \exp(-\delta^2 n \P(N_1< \tau)/3).
	\]
	Setting $\delta = \sqrt{\frac{3r\log n}{n \P(N_1 < \tau)}}$, we conclude that with probability at least $1 - \frac{1}{n^r}$,
	\begin{align*}
	\sum_{i}\mathbf{1}_{\{ N_i < \tau\}} &\le n \P(N_1 < \tau) + \sqrt{3r n \P(N_1 < \tau) \log n}\\
	&\le ne^{-(1 - \theta)^2 Tm/2n} \left(1 + \sqrt{\frac{3r \log n}{ne^{-(1 - \theta)^2 Tm/2n}}}\right).
	\end{align*}
\end{proof}

\begin{proof}[Proof of Lemma~\ref{lem:erygraph}]
	We will first show that for three different nodes $a, b, c$ in $\mathcal{S}_{m, T}$ the overlap variables $Y_{ab}$ and $Y_{ac}$ are independent. This is an immediate consequence of the following two facts:
	\begin{enumerate}
		\item[(i)] $Y_{ab}$ and $y^{(a)}$ are independent which follows from the observation that for any $m$-subset $S$ of $[n]$, $Y_{ab} \mid y^{(a)} = \mathbf{1}_{S} \sim$ {\sf Hypergeometric}$( m, m,n)$.
		\item[(ii)] Given $y^{(a)}$, the overlaps $Y_{ab}$ and $Y_{ac}$ are independent, which follows from the fact that $y^{(b)}$ and $y^{(c)}$ are independent.
	\end{enumerate}
		
	From the discussion above, $Y_{ab} \sim$ {\sf Hypergeometric}$(m, m, n)$. Using Lemma~\ref{lem:hyper_lower}  we get, with $\epsilon=1/2$, we get that
	\[
		\P(a \sim b) = \P\left(Y_{ab} \geq \thresh \right) \ge 1 - \exp \left(-\frac{m^2}{16n}\right).
	\]
\end{proof}

\begin{proof}[Proof of Lemma~\ref{lem:cluster_presence}]
Notice that $n_{k}^{(S)} \mid Y_{ab} \sim {\sf Hypergeometric}(Y_{ab}, n\pi_k, n)$, which follows from the fact that given $Y_{ab}$, $S_a \cap S_b$ is distributed as a uniform $Y_{ab}$-subset of $[n]$\footnote{Thanks to Satyaki Mukherjee for making this observation, which makes the proof considerably shorter than our original approach based on concentration inequalities for sums of dependent Bernoulli random variables.}. So, Lemma~\ref{lem:hyper_lower} gives us
\[
	\P\left(n_k^{(S)} < Y_{ab}\pi_k \left(1 - \sqrt{\frac{4 r \log n}{Y_{ab}\pi_k}} \right) \big| \, Y_{ab}\right) \le \frac{1}{n^r}.
\]
Therefore the same bound holds for the unconditional probability. Another application of Lemma~\ref{lem:hyper_lower} for $Y_{ab} \sim {\sf Hypergeometric}(m, m, n)$ gives us
\[
	\P\left(Y_{ab} < \frac{m^2}{n} \left(1 - \sqrt{\frac{4 r n\log n}{m^2}} \right) \right) \le \frac{1}{n^r}.
\]
Using these two bounds, we conclude that with probability at least $1 - \frac{2}{n^r}$, we have that
\[
n_k^{(S)} \ge \frac{m^2\pi_k}{n} \left(1 - O\left(\sqrt{\frac{n\log n}{m^2 \pi_k}}\right) \right),
\]
as long as $\frac{m^2 \pi_k}{n\log n}$ is large enough ($\frac{m^2 \pi_k}{n\log n} \ge 20r$ suffices to make the RHS above $\ge 0$).
\end{proof}

%\rd
%\section{Extra simulations}
%We did not report Monte Carlo errors for \pace{} and \gale{} in our simulations, we should do that.
%\bk
\end{document}